\documentclass{article}
\usepackage[nonatbib,final]{neurips_2025}
\usepackage{wrapfig}

\usepackage[utf8]{inputenc} % allow utf-8 input
\usepackage[T1]{fontenc}    % use 8-bit T1 fonts
\usepackage{url}            % simple URL typesetting
\usepackage{booktabs}       % professional-quality tables
\usepackage{amsfonts}       % blackboard math symbols
\usepackage{nicefrac}       % compact symbols for 1/2, etc.
\usepackage{microtype}      % microtypography
\usepackage{xcolor}         % colors
\usepackage{graphicx}
\usepackage{cite}

\usepackage{amsthm}
\usepackage{amsmath}
\usepackage{bm}
\usepackage{amsfonts}
\usepackage{multirow} 
\usepackage{graphicx}
\usepackage{subcaption}
\usepackage{comment}
\usepackage{url}
\usepackage{pgfplots} % Include the pgfplots package
\pgfplotsset{compat=1.18}
\usepackage{dsfont}
\usepackage{seqsplit}
\usepackage{thmtools}

\def\balpha{\boldsymbol{\alpha}}

\def\bzeta{\boldsymbol{\zeta}}

\def\btheta{\boldsymbol{\theta}}

\def\bSigma{\boldsymbol{\Sigma}}

\def\mbk{\mathbf{k}}

\def\mbu{\mathbf{u}}
\def\mbv{\mathbf{v}}

\def\mbx{\mathbf{x}}
\def\mby{\mathbf{y}}

\def\mbA{\mathbf{A}}
\def\mbB{\mathbf{B}}

\def\mbI{\mathbf{I}}

\def\mbK{\mathbf{K}}

\def\mbS{\mathbf{S}}

\def\mbU{\mathbf{U}}

\def\mbW{\mathbf{W}}
\def\mbX{\mathbf{X}}
\def\mbY{\mathbf{Y}}

\def\calD{\mathcal{D}}

\def\calH{\mathcal{H}}

\def\calL{\mathcal{L}}

\def\calR{\mathcal{R}}
\def\calS{\mathcal{S}}

\def\Diag#1{\mathrm{Diag}\left(#1\right)}

\newtheorem{theorem}{Theorem}

\newtheorem{lemma}{Lemma}

\def\T{\top}

\newtheorem{cor}{Corollary}
\usepackage{algorithm}
\usepackage{algpseudocode}

\newcommand{\zr}[1]{\textcolor{teal}{#1}}
\usepackage[hidelinks]{hyperref}

\newcommand{\tempcolor}[2]{#2}

\title{Linearization Explains Fine-Tuning in Large Language Models}

\author{%
Zahra Rahimi Afzal\thanks{Equal contribution.}\enspace\textsuperscript{\textnormal{a}} \hspace{0.15in}
  Tara Esmaeilbeig\footnotemark[1]\enspace\textsuperscript{\textnormal{a,b}} \hspace{0.15in}
  Mojtaba Soltanalian\textsuperscript{\textnormal{a}} \hspace{0.15in}
  Mesrob I. Ohannessian\textsuperscript{\textnormal{a}} \\
\textsuperscript{\textnormal{a}}University of Illinois Chicago \hspace{0.15in}
\textsuperscript{\textnormal{b}}Nokia Bell Labs \\ \texttt{\textsuperscript{\textnormal{a}}\{zrahim2, msol, mesrob\}@uic.edu} \\ 
\textsuperscript{\textnormal{b}}\texttt{tara.esmaeilbeig@nokia.com}
}
\begin{document}
\maketitle
\begin{abstract}
Parameter-Efficient Fine-Tuning (PEFT) is a popular class of techniques that strive to adapt large models in a scalable and resource-efficient manner. Yet, the mechanisms underlying their training performance and generalization remain underexplored. In this paper, we provide several insights into such fine-tuning through the lens of linearization. Fine-tuned models are often implicitly encouraged to remain close to the pretrained model. By making this explicit, using an $\ell_2$-distance inductive bias in parameter space, we show that fine-tuning dynamics become equivalent to learning with the positive-definite neural tangent kernel (NTK). We specifically analyze how close the fully linear and the linearized fine-tuning optimizations are, based on the strength of the regularization. This allows us to be pragmatic about how good a model linearization is when fine-tuning large language models (LLMs). When linearization is a good model, our findings reveal a strong correlation between the eigenvalue spectrum of the NTK and the performance of model adaptation. Motivated by this, we give spectral perturbation bounds on the NTK induced by the choice of layers selected for fine-tuning. We empirically validate our theory on Low Rank Adaptation (LoRA) on LLMs. These insights not only characterize fine-tuning but also have the potential to enhance PEFT techniques, paving the way to better informed and more nimble adaptation in LLMs.
\renewcommand\thefootnote{\fnsymbol{footnote}}
\footnotetext[2]{Our code is publicly available at \url{https://github.com/zahrahimi/linearization}.}
\end{abstract}
\section{Introduction}
Foundational large language models (LLMs)~\cite{touvron2023llama, chowdhery2023palm} have emerged as general-purpose models that are then adapted to various NLP tasks through fine-tuning. Due to the enormous number of trainable parameters in these models, full fine-tuning can be expensive in terms of time and other computational resources~\cite{howard-ruder-2018-universal}. Parameter-Efficient Fine-tuning (PEFT) ~\cite{ben2022bitfit, houlsby2019parameter, ding2023parameter, lester-etal-2021-power,tomihari2024} is a particularly popular set of techniques that strive to tackle this computational burden. They aim to reduce the effective number of trained parameters, making choices such as focusing on certain layers only or applying rank-limited updates, all while maintaining how well the model adapts to the task. However, these methods often lack a fundamental understanding of the dynamics behind these choices, making informed exploration of the algorithmic space difficult.

In this paper, we introduce \emph{linearized} fine-tuning, which establishes a foundation for rigorously understanding adaptation in large models through the lens of Neural Tangent Kernel (NTK) regression. Said simply, linearizing the fine-tuning process reduces the problem to one closely aligned with NTK regression. This, in turn, allows us to predict the performance of various fine-tuning decisions using the properties of the NTK kernel. Linearization can be achieved by regularizing fine-tuning to remain close to the original model, which is empirically found to be non-restrictive, because most methods already encourage this proximity between the fine-tuned and original models~\cite{tianrethinking}.

Linearized approximations to guide fine-tuning have previously been explored in~\cite{deshpande2021linearized,mugradients,ortiz2023task,spurious,
afzal2024can}. In~\cite{deshpande2021linearized}, a linearized approximation is used to introduce two scores for selecting models from a zoo of models to be fine-tuned for a specific task. In ~\cite{mugradients}, a linear model is constructed that combines network activation with the gradient features. In~\cite{ortiz2023task}, it is demonstrated that linearized models, governed by the NTK, outperform their nonlinear counterparts in fine-tuning. All these works, however, do not quantify the closeness of fine-tuning dynamics to the linear approximation. Instead, they only hypothesize that the model will remain close to the pretrained model and that, therefore, the fine-tuning dynamics can be approximated by a Taylor expansion of the model around the pretrained parameters. This shortcoming motivates the present paper to introduce an explicit inductive bias that enables quantifying the extent to which linearity is preserved during fine-tuning. In particular, we provide a theoretical upper bound on the distance between the fine-tuned model and its linearized approximation, which in turn firmly supports predictions of fine-tuning performance based on linearization. The work in~\cite{spurious} explores the loss landscape in the NTK regime; however, it does not determine under what conditions fine-tuning falls within this regime. In contrast, our work proves both theoretically and empirically that, although necessarily in the NTK (linearization) regime, fine-tuning can be constrained to it by regularizing the process towards the pretrained model. 

Our work builds on prior results that establish conditions under which linearity holds. Jacot et al.~\cite{jacot2018neural} showed that in the infinite width limit, the network function follows a linear differential equation during training. Moreover, they proved that for wide networks, the NTK remains constant during training; hence, they called this regime \textit{lazy} training. Later, Bach et al.~\cite{chizat2019lazy} suggested that lazy training is not limited to infinite-width networks.
%More specifically, by introducing an explicit scale factor, almost any parametric model can be trained in this lazy regime if its output is close to zero at initialization.

More recently, and of particular relevance here, Malladi et al. in~\cite{malladi2023kernel} extended the NTK theory to characterize kernel-based dynamics specifically for fine-tuning language models. They adapted infinite-width analysis to account for pretrained initialization and proved that prompt-based fine-tuning in complex architectures like transformers can exhibit kernel behavior under certain conditions. They demonstrated the critical role of meaningful prompts in achieving kernel behavior and explained how these dynamics describe fine-tuning trajectories. Additionally, they proved the effectiveness of Low Rank Adaptation (LoRA) by bounding the difference between the kernel of full fine-tuning and the kernel of LoRA. In~\cite{malladi2023kernel}, the authors assume that fine-tuning can be explained by linearization, and their results also confirm that linearization does not always appear in fine-tuning (see Fig. 1 of~\cite{malladi2023kernel}). In our work, however, by introducing an explicit inductive bias for fine-tuning, which is essentially a weight decay toward the original model parameter, we induce lazy training.   Consequently, the fine-tuning performance can be approximated by the linearized model~\cite{xuhong2018explicit,tianrethinking}. The evolution of the model  is described by the neural tangent kernel (NTK), which stays constant during fine-tuning. This property makes it possible to study the generalization properties of a fine-tuned model based on the spectral properties of the NTK. Specifically, our main contributions are:
\begin{itemize}

    \item We show that having an explicit inductive bias toward the pretrained model results in linearized fine-tuning; more precisely, it brings fine-tuned and linearized models closer.
    
    \item Leveraging the linearization of fine-tuning, we formulate the fine-tuning problem as a neural tangent kernel regression and use the eigenvalue decomposition of the neural tangent kernel to derive bounds on the empirical risk of the end result of fine-tuning, prior to its execution.

    \item We provide bounds on the spectral perturbation on the NTK when a set of trainable parameters is added to fine-tuning. In particular, to the best of our knowledge, we present the first spectrum perturbation results due to layer selection and introduce an algorithm that provides new insights to guide fine-tuning design.
    
    \item Through extensive experiments, we validate our theoretical results. We evaluate the condition number of NTK as an at-initialization metric to anticipate the performance of LoRA before training. Even though our experiments focus on LoRA, the technical tools we introduce could be equally used in the context of other PEFT methods.
    
\end{itemize}
The paper is organized as follows. In \S\ref{sec:formulation}, we present the problem formulation and notation. In \S\ref{sec:linearity}, we detail the linearization framework. In \S\ref{sec:ntk}, we give an NTK regression formulation for the linearized fine-tuning and related performance to the kernel's spectrum. In \S\ref{sec:layers}, we show how layer choices affect this spectrum. We then validate our findings through experiments in \S\ref{sec:experiments} and conclude in \S\ref{sec::conc}.

\section{Problem Formulation} \label{sec:formulation}
%******************************************************************
%******************************************************************

In the fine-tuning problem, we are given a pretrained model $f_{\btheta_0}(\cdot)$, a target task dataset $\calD_T = {(\mbx_i, \mbY_i)}_{i=1}^{n}$ for the downstream task, and a loss function $\calL_{}(\cdot,\cdot):\mathbb{R} \times \mathbb{R} \to \mathbb{R}$. The most basic fine-tuning solution is to minimize the target empirical risk via gradient descent, when initialized at $f_{\btheta_0}(\cdot)$. This implicitly finds solutions close to the pretrained model. In this paper, we make this explicit via an inductive bias toward the pretrained model.
We denote the regularized fine-tuned model by $f_{\btheta^{\star}}(\cdot): \mathbb{R}^{d} \to \mathbb{R}$, and we define it as minimizing the regularized empirical risk.
\begin{equation}\label{eq:opt011}
\btheta^\star = \;\underset{\btheta}{\textrm{minimize}} \
\widetilde{\calR}(\btheta)+\frac{\lambda}{2} \|\btheta-\btheta_0\|^2_2,\quad\textrm{where}
\end{equation}
\begin{equation}
\widetilde{\calR}(\btheta) = \sum_{i=1}^{n} \calL_{}(f_{\btheta}(\mbx_i), \mby_i).
\end{equation} 
 $\btheta$ denotes the trainable fine-tuning parameters, $\btheta_0$ denotes the parameters of the pretrained model, $\widetilde{\calR}(\btheta)$ the original objective function, and $\lambda$ the regularization strength hyperparameter. When $\btheta_0=0$, \eqref{eq:opt011} reduces to an ordinary weight decay. This regularization reduces the deviation between the fine-tuned and pretrained models, and is a simple instance of proximal methods often used in fine-tuning. We assume $\calL(\cdot,\cdot)$ is the squared loss, and thus the risk is the MSE. By representing $\calD_T={(\mbx_i, \mby_i)}_{i=1}^{n}$ as $\mbX=\left[\mbx_1, \mbx_2, \ldots, \mbx_n\right]^\T \in \mathbb{R}^{n \times d}$, and
 $\mbY=\left[\mby_1,\mby_2, \ldots, \mby_n\right] \in \mathbb{R}^{1 \times n}$, we write the risk as
\begin{align}\label{eq::loss}
\widetilde{\mathcal{R}}(\btheta) = \| f_{\btheta}(\mbX) - \mbY \|^2,
\end{align}
where  $f_{\btheta}(\mbX)=[f_{\btheta}(\mbx_1),\ldots,f_{\btheta}(\mbx_n)] \in \mathbb{R}^{1 \times n}$. We split the optimization~\eqref{eq:opt011} using gradient descent at step $t$  with learning rate $\eta$ into two steps:
\begin{align}
\widetilde{\btheta}_t &= \btheta_{t-1} - \eta \, \nabla \widetilde{\mathcal{R}}(\btheta_{t-1})^{\T}, \label{eq::step1}\\
\btheta_t &= \widetilde{\btheta}_t - \lambda \, \eta \left( \widetilde{\btheta}_t - \btheta_0 \right),\label{eq::step2}
\end{align}
where the first step is the gradient descent on $\widetilde{\calR}(\btheta)$  and the second step is the gradient descent on the regularization term $\frac{\lambda}{2}\|\btheta-\btheta_0\|^2$.

%******************************************************************
%******************************************************************
\section{Proximity to the Pretrained Model Promotes Linearity} \label{sec:linearity}
%******************************************************************
%******************************************************************

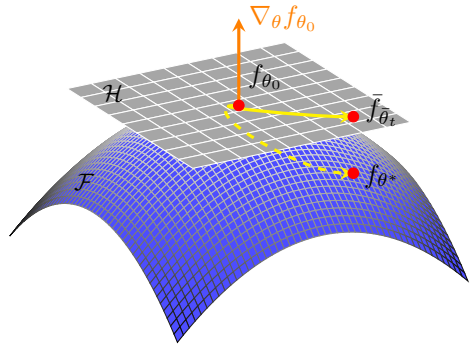
\begin{wrapfigure}{r}{0.5\textwidth}
\centering
\vspace{-18pt} 
% \documentclass{article}
% \usepackage{tikz}
% \usetikzlibrary{arrows.meta, positioning, calc}

% \documentclass{standalone}
% \usepackage{tikz}
% \usepackage{amsmath}

% \documentclass{standalone}
% \usepackage{pgfplots}
% \pgfplotsset{compat=1.18}
% \usetikzlibrary{calc,arrows.meta}

% \begin{document}

\begin{tikzpicture}
  \begin{axis}[
      view={60}{30},
      % hide axis,
      % % zmin=-20, zmax=20,
      % samples=50,
      % % domain=-20:20,
      % % y domain=-20:20,
      % % grid=none,
      axis lines=none,
      zmin=-5, zmax=5,
      samples=50,
      domain=-4:4,
      y domain=-4:4,
      grid=none,
      % axis lines=center, % Show axis lines
      % xlabel=$x$, ylabel=$y$, zlabel=$z$, % Label axes
      % xtick={-4,-2,0,2,4}, ytick={-4,-2,0,2,4}, ztick={-4,-2,0}, % Show ticks
      % xticklabels={-4,-2,0,2,4}, yticklabels={-4,-2,0,2,4}, zticklabels={-4,-2,0}, % Label ticks
      % tick label style={font=\tiny}, % Make tick labels smaller
      xmin=-4.5, xmax=4.5, ymin=-4.5, ymax=4.5, zmin=-4.5, zmax=1, % Extend axis ranges
      colormap/blackwhite
    ]
      
    ]
    % Plot the 3D surface
    \addplot3[
      surf,
      shader=flat,
      fill=blue!70,
      opacity=0.7,
      samples=50,
      domain=-4:4,
      y domain=-4:4,
      draw=none
    ] 
    % coordinates {(2,2,-1.3)};
    {-0.11*x^2 - 0.11*y^2};
    {0};
    \node[black, font=\small] at (axis cs: -2.7,-2.7,-2.1) {$\mathcal{F}$};
    
    % Tangent plane
    \addplot3[
      surf,
      fill=gray!70,
      opacity=0.7,
      samples=10,
      domain=-3:3,    % Keep x domain from 0 to 4
      y domain=-3:3,  % Keep y domain from 0 to 4
      shader=flat,
      draw=none
    ] 
    {0};
    \node[black, font=\small] at (-2.2,-2.2, 0) {$\mathcal{H}$};

    % Point of tangency
    \addplot3[
      only marks,
      mark=*,
      color=red,
      mark size=2pt
    ] coordinates {(0, 0, 0)};

    \node[black] (ftheta0) at (axis cs:-0.8,1.2,0.2) {$f_{\theta_0}$};

    \addplot3[
      only marks,
      mark=*,
      color=red,
      mark size=2pt
    ] coordinates {(2,2,-1.3)};
    \node[black] (ftheta*) at (axis cs:2.5,2.5,-1.3) {$f_{\theta^*}$};

   % Draw a curved line between the two points
    \draw[yellow, very thick, dashed, ->, >=stealth] 
      % (axis cs:0,0,0) to[out=180,in=180] (ftheta*);
      (axis cs:0,0,0) to[out=180,in=180] 
      (axis cs:2,2,-1.3);

    \addplot3[
      only marks,
      mark=*,
      color=red,
      mark size=2pt
    ] coordinates {(2, 2, 0)};
    \node[black] (fthetat) at (axis cs:2.5,2.5,0.2) {$\bar{f}_{\bar{\theta}_t}$};

   % Draw a curved line between the two points
    \draw[yellow, very thick, ->, >=stealth] 
      % (axis cs:0,0,0) to[out=180,in=180] (fthetat);
      (axis cs:0,0,0) to[out=180,in=180] 
      (axis cs:2,2,0);

    % Draw the gradient vector
    \draw[orange, ->, very thick, >=stealth] (0,0,0) -- (0,0,2) node[right] {$\nabla_\theta f_{\theta_0}$};
    
    % \node[black, font=\tiny, anchor=north west] 
    % at (axis cs:-3.2,-2.8,1.4) 
    % {$f_{\btheta_t}(\mbx)\approx f_{\btheta_{0}}(\mbx)+\left\langle\nabla_{\btheta} f_{\btheta_{0}}(\mbx), \btheta_{t}-\btheta_{0}\right\rangle$};
    
  \end{axis}
\end{tikzpicture}
% \end{document}
     \caption{\small The NTK defines a linear function space $\mathcal{H}$ tangent  to the non-linear function space  $\mathcal{F}$ defined by the model. Regularized fine-tuning in the lazy regime is close to kernel regression on the tangent space. 
     $f_{\theta^\star}(\mbx)$ is  the fine-tuned model obtained by empirical risk minimization. If  fine-tuning remains in the  linearized regime, then after $T$ steps of training
     $f_{\theta^\star}(\mbx)\approx f_{\btheta_{0}}(\mbx)+\left\langle\nabla_{\btheta} f_{\btheta_{0}}(\mbx), \bar{\btheta}_{T}-\btheta_{0}\right\rangle$ is a good approximation.}
\label{fig_1}
\vspace{-6pt}
\end{wrapfigure}
We first justify why we regularize for proximity to the pretrained model by showing that this promotes similarity between the final fine-tuned solution and its linearized counterpart. For any $(\mbx,\mby)\in \calD_{T}$, the latter is defined as
\begin{align}
\bar{f}_{\bar{\btheta}_t}(\mbx)=f_{\btheta_{0}}(\mbx)+\left\langle\nabla f_{\btheta_{0}}(\mbx), \bar{\btheta}_{t}-\btheta_{0}\right\rangle,
\end{align}
where $\bar{\btheta}$ denotes the parameters of the linearized model. We set $\bar{\btheta}_0=\btheta_0$ and, therefore, for any $\mbx \in \calD_{T}$, $\bar{f}_{\bar{\btheta}_0}(\mbx)=f_{\btheta_0}(\mbx)$.

We list and discuss our main theorems toward this goal, and defer the proofs to the appendix. In Theorem \ref{thm::1}, we provide conditions for monotonicity of updates in regularized fine-tuning, and use it to slightly modify gradient descent to ensure monotonicity. In Theorem \ref{thm::theta_bound}, we derive an upper bound on the distance between the \textit{parameters} at a step $t$ of the regularized fine-tuned model and the pretrained model, i.e., $\| \btheta_t-\btheta_0\|$. In Theorem  \ref{thm::diff-f-bar-f}, we find the distance between the \textit{inference} of the regularized fine-tuned model $f_{\btheta}(\mbX)$ and the linearized model $\bar{f}_{\bar{\btheta}}(\mbX)$ i.e., $\|f_{\btheta}(\mbX)-\bar{f}_{\bar{\btheta}}(\mbX)\|$, which is the primary contribution of this section.

\begin{theorem}\label{thm::1}
Under the squared loss, for any $t >0$, if $\lambda >0$ and $\nabla_{\btheta} \widetilde{\calR}(\btheta_t) \left(\btheta_{t} - \btheta_0\right) \geq 0$, then 
\begin{equation}\label{d-d_t}
\frac{d}{dt}\| f_{\theta_t}(\mbX) - \mbY \|^2 \leq 0 .
\end{equation} 
Moreover, if $\nabla_{\btheta} \widetilde{\calR}(\btheta_t) \left(\btheta_{t} - \btheta_0\right) < 0$, then $\lambda=0$ is a sufficient condition for~\eqref{d-d_t} to hold.
\end{theorem}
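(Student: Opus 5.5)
The plan is to pass to the continuous-time (gradient-flow) limit of the two-step update \eqref{eq::step1}--\eqref{eq::step2} and compute the time derivative of the risk along the resulting trajectory. Composing the two steps and keeping only first-order terms in $\eta$ gives
\begin{equation*}
\btheta_t-\btheta_{t-1} = -\eta\,\nabla\widetilde{\calR}(\btheta_{t-1})^{\T} - \lambda\eta\,(\btheta_{t-1}-\btheta_0) + O(\eta^2).
\end{equation*}
Dividing by $\eta$ and letting $\eta\to 0$ yields the flow
\begin{equation*}
\frac{d\btheta_t}{dt} = -\nabla_{\btheta}\widetilde{\calR}(\btheta_t)^{\T} - \lambda(\btheta_t-\btheta_0).
\end{equation*}
The derivative in \eqref{d-d_t} is to be understood along this flow.

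Under the squared loss, $\| f_{\btheta_t}(\mbX) - \mbY \|^2 = \widetilde{\calR}(\btheta_t)$, so the chain rule gives
\begin{equation*}
\frac{d}{dt}\widetilde{\calR}(\btheta_t) = \nabla_{\btheta}\widetilde{\calR}(\btheta_t)\,\frac{d\btheta_t}{dt}
= -\|\nabla_{\btheta}\widetilde{\calR}(\btheta_t)\|^2 - \lambda\,\nabla_{\btheta}\widetilde{\calR}(\btheta_t)(\btheta_t-\btheta_0).
\end{equation*}
Here $\nabla_{\btheta}\widetilde{\calR}$ is a row vector, consistent with the paper's notation. One can also write it explicitly as $2(f_{\btheta}(\mbX)-\mbY)\nabla_{\btheta} f_{\btheta}(\mbX)$, but this is not needed for the sign argument.

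\textbf{Case 1.} Suppose $\lambda>0$ and $\nabla_{\btheta}\widetilde{\calR}(\btheta_t)(\btheta_t-\btheta_0)\ge 0$. Both terms on the right are nonpositive, so the derivative is $\le 0$.

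\textbf{Case 2.} Suppose the inner product is negative. The second term, $-\lambda\,\nabla_{\btheta}\widetilde{\calR}(\btheta_t)(\btheta_t-\btheta_0)$, is then nonnegative for every $\lambda>0$, and monotonicity can no longer be guaranteed in general. Setting $\lambda=0$ removes this term and leaves $-\|\nabla_{\btheta}\widetilde{\calR}(\btheta_t)\|^2\le 0$. This shows that $\lambda=0$ is sufficient. A sharper sufficient condition, $\lambda\le \|\nabla\widetilde{\calR}\|^2/|\nabla\widetilde{\calR}(\btheta_t-\btheta_0)|$, could be noted as a remark.

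The main obstacle is the passage from discrete steps to continuous time. The statement uses $\frac{d}{dt}$ while the algorithm is discrete, so I would state explicitly that we work in the small-$\eta$ gradient-flow limit. The cross term $\lambda\eta^2\nabla\widetilde{\calR}^{\T}$ produced by composing the two steps vanishes in this limit. If a discrete version were wanted, I would instead Taylor-expand $\widetilde{\calR}(\btheta_t)-\widetilde{\calR}(\btheta_{t-1})$ to first order and get the same sign conditions up to $O(\eta^2)$ under a smoothness assumption. I would not attempt that here.
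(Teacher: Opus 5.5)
Your proof is correct and follows essentially the paper's route: pass to the gradient-flow limit $\frac{d}{dt}\btheta_t = -\nabla\widetilde{\calR}(\btheta_t)^{\T} - \lambda(\btheta_t-\btheta_0)$ (the paper's Lemma~\ref{lemm::flow-ode}), then split the time derivative of the risk into a nonpositive term plus $-\lambda\,\nabla_{\btheta}\widetilde{\calR}(\btheta_t)(\btheta_t-\btheta_0)$. The only difference is cosmetic: the paper differentiates through $\mbY(t)=f_{\btheta_t}(\mbX)$ and writes the first term as $-4\langle \mbk_t(\mbY(t)-\mbY),\mbY(t)-\mbY\rangle$, made nonpositive by positive semidefiniteness of the NTK, and this is exactly your $-\|\nabla_{\btheta}\widetilde{\calR}(\btheta_t)\|^2$; your sharper threshold on $\lambda$ in the second case is a valid extra.
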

\begin{proof}
 See Appendix~\ref{proofthm::1}.
\end{proof}
In fine-tuning, the objective in \eqref{eq:opt011} is non-convex in the parameters. However, according to Theorem~\ref{thm::1}, a sufficient condition for regularized  fine-tuning to have non-increasing  $\| f_{\theta_t}(\mbX) - \mbY \|^2_2$, is that at step $t$, we have $\nabla_{\btheta_t}\widetilde{\calR}(\btheta_t) \left(\btheta_{t} - \btheta_0\right) \geq 0$ or  $\lambda=0$. Therefore, we  modify the regularization step in~\eqref{eq::step2} as 
\begin{equation} \label{eq::cases}
\btheta_t =
\begin{cases}
\widetilde{\btheta}_t - \lambda \, \eta \left( \widetilde{\btheta}_t - \btheta_0 \right) & \text{if } \nabla_{\btheta_t}\widetilde{\calR}(\btheta_t) \left(\btheta_{t} - \btheta_0\right) \geq 0, \\
\widetilde{\btheta}_t & \text{if }  \nabla_{\btheta_t}\widetilde{\calR}(\btheta_t) \left(\btheta_{t} - \btheta_0\right) < 0 \
\end{cases}.
\end{equation}
This is a  selective regularization scheme, which decides  on   regularization  of  each  trainable parameter based on the direction of its gradient. Intuitively, the  inner product $\nabla_{\btheta_t}\widetilde{\calR}(\btheta_t) \left(\btheta_{t} - \btheta_0\right)$ reflects how aligned the current progress is with the negative gradient update direction. A negative inner product signals consistent improvement in loss, while a positive value suggests movement toward a higher-loss region. As a result, in order to have non-increasing loss at step $t$,~\eqref{eq::cases} regularizes  only the trainable parameters with positive inner product values~\cite{tianrethinking}. 
% \begin{theorem} \label{thm::theta_bound}
%  Under the squared loss, in regularized fine-tuning  with parameter $\lambda >0$, 
% under the condition that  $f_{\btheta}(\mbx)$ is Lip$(f)$-Lipschitz  in an $\ell_2$-ball of radius $r$ around  pretrained parameters $\btheta_0$, we have 
% % \begin{align} \label{eq::eq6}
% % \| \btheta_t - \btheta_0 \| \leq  \frac{2\, \| f_{\btheta_0}(\mbX) - \mbY\|\, \text{Lip}(f)}{\lambda} \left(e^{\lambda t}- 1\right).    
% % \end{align}

% \begin{equation}\label{eq::eq6}
% \|\btheta_t-\btheta_0\| \;\le\;
% \frac{2\, \| f_{\btheta_0}(\mbX) - \mbY\|\, \text{Lip}(f)}{\lambda} \,\bigl(1-e^{-\lambda t}\bigr)^{\!1/2}.
% \end{equation}
% \end{theorem}

\begin{theorem} \label{thm::theta_bound}
Consider the selectively regularized fine-tuning solution, under squared loss. Denote the instantaneous value of the regularization parameter by $\lambda_t$, which can be either $0$ or $\lambda$. If $f_{\btheta}(\mbX)$ is Lip$(f)$-Lipschitz  in an $\ell_2$-ball of radius $r$ around  pretrained parameters $\btheta_0$, we have 
\begin{align} \label{eq:varying-lambda}
\|\btheta_t-\btheta_0\|  \;\le\; 2\,\text{Lip}(f)\,\|f_{\btheta_0}(\mbX)-\mbY\|\; \int_{0}^{t} e^{-(\Lambda_t-\Lambda_s)}\,ds,\quad\textrm{where } \Lambda_t=\int_{0}^t \lambda_s ds.
\end{align}
In the special case when $\lambda_t=\lambda$ (remains constant), we obtain
\begin{align} \label{eq::eq6}
\|\btheta_t-\btheta_0\|  \le 2\,\text{Lip}(f)\|f_{\btheta_0}(\mbX)-\mbY\| \frac{1-e^{-\lambda t}}{\lambda}.
% \\
% \lambda(t)\equiv 0:\quad & A(t)=t,
% & \Rightarrow\ \|\mbu_t\|_2 \le \|\mbu_0\|_2 + 2\,\text{Lip}(f)\,t\,\|\mby(0)-\mbY\|_2. 
\end{align}
% \begin{align} \label{eq::eq6}
% \| \btheta_t - \btheta_0 \| \leq  \frac{2\, \| f_{\btheta_0}(\mbX) - \mbY\|\, \text{Lip}(f)}{\lambda} \left(e^{\lambda t}- 1\right).    
% \end{align}
% \begin{equation}\label{eq::eq6}
% \|\btheta_t-\btheta_0\| \;\le\;
% \frac{2\, \| f_{\btheta_0}(\mbX) - \mbY\|\, \text{Lip}(f)}{\lambda} \,\bigl(1-e^{-\lambda t}\bigr)^{\!1/2}.
% \end{equation}
\end{theorem}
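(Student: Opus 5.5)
I will work in continuous time. As $\eta\to 0$, the two-step scheme \eqref{eq::step1}–\eqref{eq::step2}, with the selective rule \eqref{eq::cases}, becomes the gradient flow
\begin{equation*}
\frac{d}{dt}\btheta_t = -\nabla_{\btheta}\widetilde{\calR}(\btheta_t)^{\T} - \lambda_t(\btheta_t-\btheta_0), \qquad \lambda_t\in\{0,\lambda\}.
\end{equation*}
Write $\mbu_t=\btheta_t-\btheta_0$, so that $\mbu_0=0$. For squared loss the gradient is $\nabla_{\btheta}\widetilde{\calR}(\btheta)^{\T} = 2\,\nabla_{\btheta} f_{\btheta}(\mbX)^{\T}\bigl(f_{\btheta}(\mbX)-\mbY\bigr)^{\T}$. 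Hence $\dot{\mbu}_t = -\lambda_t\mbu_t - \mathbf{g}_t$, with $\mathbf{g}_t$ equal to that gradient at $\btheta_t$.

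The plan is to treat this as a linear ODE with a time-varying coefficient and solve it by variation of constants. Multiplying by the integrating factor $e^{\Lambda_t}$ gives $\frac{d}{dt}\bigl(e^{\Lambda_t}\mbu_t\bigr) = -e^{\Lambda_t}\mathbf{g}_t$. Integrating from $0$ and using $\mbu_0=0$ yields
\begin{equation*}
\mbu_t = -\int_0^t e^{-(\Lambda_t-\Lambda_s)}\mathbf{g}_s\,ds .
\end{equation*}
Taking norms inside the integral reduces the claim to a uniform bound $\|\mathbf{g}_s\|\le 2\,\text{Lip}(f)\,\|f_{\btheta_0}(\mbX)-\mbY\|$ for $s\in[0,t]$.

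That uniform bound has two ingredients. First, the Lipschitz hypothesis on the ball of radius $r$ gives $\|\nabla_{\btheta} f_{\btheta}(\mbX)\|_{op}\le \text{Lip}(f)$ while $\btheta_s$ stays in the ball. Second, I need the residual to be non-increasing, $\|f_{\btheta_s}(\mbX)-\mbY\|\le\|f_{\btheta_0}(\mbX)-\mbY\|$. This is exactly what Theorem~\ref{thm::1} gives for the selectively regularized flow. That theorem shows the loss derivative is $\le 0$ when $\lambda_s=\lambda$ and the inner-product condition holds, and also when $\lambda_s=0$. The selective rule \eqref{eq::cases} always puts us in one of these two cases, so the loss is monotone along the whole trajectory.

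For constant $\lambda_t=\lambda$ we have $\Lambda_t-\Lambda_s=\lambda(t-s)$, and $\int_0^t e^{-\lambda(t-s)}ds=(1-e^{-\lambda t})/\lambda$, which gives \eqref{eq::eq6}.

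I expect the main obstacle to be keeping $\btheta_s$ inside the ball, which is needed to use the Lipschitz bound. I will handle this with a continuity/first-exit argument. Let $\tau$ be the first time $\|\mbu_s\|=r$. The bound holds on $[0,\tau]$. If the right-hand side stays below $r$, then $\tau>t$, so the estimate holds on the whole interval. Otherwise the statement is implicitly restricted to times at which the trajectory remains in the ball. A second, minor point is rigor in the passage from the discrete steps to the flow. I will take the flow as the object being analyzed, consistent with how Theorem~\ref{thm::1} is stated, and allow $\lambda_t$ to be piecewise constant, which is enough for the integrating-factor step.
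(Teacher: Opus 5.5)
Your proposal is correct and follows essentially the paper's argument. It uses the flow of Lemma~\ref{lemm::flow-ode}, the integrating factor $e^{\Lambda_t}$, the operator-norm bound $\|\nabla f_{\btheta}(\mbX)\|\le \text{Lip}(f)$, and monotonicity of $\|f_{\btheta_s}(\mbX)-\mbY\|$ from Theorem~\ref{thm::1}. The differences are minor improvements: you apply the integrating factor to the vector equation (variation of constants) rather than to a differential inequality for $w(t)=\|\mbu_t\|$, which avoids the non-differentiability of the norm at $\mbu_t=0$ (in particular at $t=0$), and you make the first-exit argument for staying in the $r$-ball explicit, whereas the paper leaves it implicit here and only addresses it via $\lambda_\circ$ in Theorem~\ref{thm::final}.
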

\vspace{-12pt}
\begin{proof} See Appendix~\ref{proofthm::theta_bound}. 
\end{proof}
\vspace{-4pt}
Intuitively, one can think of \eqref{eq:varying-lambda} as placing the ``average'' $\lambda$, $\frac{1}{t} \Lambda_t$ in \eqref{eq::eq6}. In what follows, we adhere to \eqref{eq::eq6}, for simplicity. Note that in conventional fine-tuning, $\lambda=0$. Since $
\lim_{\lambda \to 0} \frac{ 1- e^{-\lambda t}}{\lambda}= t
$ the bound in \eqref{eq::eq6} recovers
\begin{align} \label{eq::eq66}
\| \btheta_t - \btheta_0 \| \leq  2\,\text{Lip}(f)\,\| f_{\btheta_0}(\mbX) - \mbY\|\, t,   
\end{align}
which follows Theorem 2.3 in~\cite{chizat2019lazy}, in the case when the rescaling parameter is one. In other words, regularization and rescaling both manifest the lazy training phenomenon, but regularization maintains much closer proximity, since $\frac{ 1- e^{-\lambda t}}{\lambda}$ is bounded, unlike $t$.

Theorem~\ref{thm::theta_bound} is primarily a building block for Theorem~\ref{thm::diff-f-bar-f}; it simply says that while the solution may deviate from the origin under regularization, we can bound that deviation. Theorems~\ref{thm::diff-f-bar-f} and~\ref{thm::final} show that we can jump from a deviation from the initialization bound, which is the direct objective of regularization, to a bound on how far the NTK solution and the regularized fine-tuning solutions are, i.e., the indirect but main benefit of regularization. This is what allows us to place the approximation of Section \S\ref{sec:ntk} on a stronger footing than prior works~\cite{deshpande2021linearized,mugradients,ortiz2023task,spurious} that assume that such proximity simply holds. For simplicity, we consider the $\lambda$ constant case to prove Theorems \ref{thm::diff-f-bar-f} and \ref{thm::final}.
\begin{theorem}  \label{thm::diff-f-bar-f}
Under the squared loss, if $f_{\btheta}(\mbX)$ and $\nabla f_{\btheta}(\mbX) $ are Lip$(f)$-Lipschitz  and  Lip$( \nabla f)$-Lipschitz in an $\ell_2$-ball of radius $r$ around  $\btheta_0$ respectively, we have
\begin{equation}\label{eq::fminusfbar}
\|f_{\btheta_t}(\mbX)-\bar{f}_{\bar{\btheta}_t}(\mbX)\| \leq b \left(t-\frac{1-e^{-\lambda t}}{\lambda} \,  \right),\quad\textrm{where}
\end{equation}
\vspace{-6pt}
\begin{align} \label{eq::fminusfbar-b}
b = 2\, \text{Lip}(f)^2 \| f_{\btheta_0}(\mbX) - \mbY\|\ \left( \frac{\tempcolor{red}{4}}{\lambda}   \text{Lip}(\nabla f)  \|f_{\btheta_0}(\mbX)- \mbY\|+1\right).
\end{align}
\end{theorem}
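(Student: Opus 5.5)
We work in continuous time and compare the gradient flows of the two regularized objectives, both started at $\btheta_0$. The nonlinear model follows
\[
\dot{\btheta}_t = -2\,\nabla f_{\btheta_t}(\mbX)^{\T}\bigl(f_{\btheta_t}(\mbX)-\mbY\bigr)^{\T} - \lambda(\btheta_t-\btheta_0).
\]
The linearized model follows the same flow with $\nabla f_{\btheta_t}$ replaced by $\nabla f_{\btheta_0}$ and $f_{\btheta_t}$ replaced by $\bar f_{\bar\btheta_t}$.

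Set $\mbu_t = f_{\btheta_t}(\mbX)$ and $\bar\mbu_t=\bar f_{\bar\btheta_t}(\mbX)$. By the chain rule, $\dot{\mbu}_t=\nabla f_{\btheta_t}(\mbX)\,\dot{\btheta}_t$ and $\dot{\bar\mbu}_t=\nabla f_{\btheta_0}(\mbX)\,\dot{\bar\btheta}_t$. Write $\delta_t=\|\mbu_t-\bar\mbu_t\|$; then $\delta_0=0$, and we bound $\frac{d}{dt}\delta_t\le\|\dot\mbu_t-\dot{\bar\mbu}_t\|$.

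We split the difference of velocities into two parts.

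\emph{Kernel-drift term.} This part comes from replacing $\nabla f_{\btheta_t}$ by $\nabla f_{\btheta_0}$, both in the outer Jacobian and inside the NTK $\nabla f\,\nabla f^{\T}$. Its size is controlled by $\mathrm{Lip}(\nabla f)\,\|\btheta_t-\btheta_0\|$ times $\mathrm{Lip}(f)$ and a residual norm. We use two facts here:
\begin{itemize}
\item Theorem~\ref{thm::1} (monotonicity, assuming the selective scheme, or simply a bound $\|\mbu_t-\mbY\|\le\|f_{\btheta_0}(\mbX)-\mbY\|$) to keep the residual bounded by its initial value;
\item Theorem~\ref{thm::theta_bound}, which gives $\|\btheta_t-\btheta_0\|\le 2\mathrm{Lip}(f)\|f_{\btheta_0}(\mbX)-\mbY\|/\lambda$.
\end{itemize}
Together these produce a term of the form $2\mathrm{Lip}(f)^2\|f_{\btheta_0}(\mbX)-\mbY\|\cdot\frac{4}{\lambda}\mathrm{Lip}(\nabla f)\|f_{\btheta_0}(\mbX)-\mbY\|$. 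The factor $4$ collects the $2$ from the squared-loss gradient and the $2$ from Theorem~\ref{thm::theta_bound}.

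\emph{Regularization term.} This part comes from $\lambda\nabla f(\btheta-\btheta_0)$ versus $\lambda\nabla f_{\btheta_0}(\bar\btheta-\btheta_0)$, together with the residual-mismatch term $\nabla f\nabla f^{\T}(\mbu_t-\bar\mbu_t)$. We bound it by $\mathrm{Lip}(f)^2$ times the gradient-magnitude bound $2\|f_{\btheta_0}(\mbX)-\mbY\|$, which accounts for the ``$+1$'' in $b$.

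The bound in \eqref{eq::fminusfbar} is then obtained by integrating a rate of the form $b\,(1-e^{-\lambda s})$. The shape $t-\frac{1-e^{-\lambda t}}{\lambda}=\int_0^t(1-e^{-\lambda s})\,ds$ suggests that the velocity difference should carry the factor $1-e^{-\lambda s}$. Equivalently, the rate should be proportional to $\lambda\int_0^s e^{-\lambda(s-\tau)}d\tau$, which is exactly the profile of the deviation in Theorem~\ref{thm::theta_bound}. So in each term we do not use the uniform bound. Instead we substitute the time-dependent bound $\|\btheta_s-\btheta_0\|\le 2\mathrm{Lip}(f)\|f_{\btheta_0}(\mbX)-\mbY\|\frac{1-e^{-\lambda s}}{\lambda}$, and likewise for $\bar\btheta_s$. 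Each difference term then picks up a factor $(1-e^{-\lambda s})$, possibly times $1/\lambda$ or times $\lambda$. This gives
\[
\tfrac{d}{ds}\delta_s\le b\,(1-e^{-\lambda s}),
\]
and integrating from $0$ to $t$ with $\delta_0=0$ yields \eqref{eq::fminusfbar}.

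The main obstacle is the self-referential term $\nabla f\nabla f^{\T}(\mbu_s-\bar\mbu_s)$, which would naturally lead to a Grönwall exponential. To avoid it, we bound each residual separately by its initial value rather than bounding their difference, accepting the cruder constant. We also need to check that both trajectories stay in the ball of radius $r$; this holds if $r\ge 2\mathrm{Lip}(f)\|f_{\btheta_0}(\mbX)-\mbY\|/\lambda$, which is again guaranteed by Theorem~\ref{thm::theta_bound}.
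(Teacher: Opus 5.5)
There is a genuine gap, in the step you yourself flag as the main obstacle. Your skeleton matches the paper's: bound the rate of $\delta_s$ by $b(1-e^{-\lambda s})$ using Theorem~\ref{thm::1} and the time-dependent bound of Theorem~\ref{thm::theta_bound}, then integrate. But you pass to $\frac{d}{dt}\delta_t\le\|\dot{\mbu}_t-\dot{\bar{\mbu}}_t\|$ and then bound the mismatch term $\mbk_0(\mbu_t-\bar{\mbu}_t)$ by estimating $\|\mbu_t-\mbY\|$ and $\|\bar{\mbu}_t-\mbY\|$ separately by $\|f_{\btheta_0}(\mbX)-\mbY\|$. After that, the piece no longer involves $\|\btheta_s-\btheta_0\|$, so it does not pick up the factor $1-e^{-\lambda s}$. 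It contributes a constant rate of order $\mathrm{Lip}(f)^2\|f_{\btheta_0}(\mbX)-\mbY\|$, even at $s=0$, where the true velocity difference is zero. Integrating gives a bound of the form $b\bigl(t-\frac{1-e^{-\lambda t}}{\lambda}\bigr)+c\,t$ with $c>0$. This is linear in $t$ near $0$, while the target behaves like $\tfrac{b\lambda}{2}t^2$, so the stated inequality does not follow. Your claim that every difference term carries $(1-e^{-\lambda s})$ fails exactly for this term. Bounding it instead by $2\,\mathrm{Lip}(f)^2\delta_s$ leads to the Gr\"onwall exponential, as you noticed.

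The missing idea is that this term is dissipative and can simply be discarded, but only if you keep its sign. Work with $\frac12\frac{d}{dt}\delta_t^2=\langle\dot{\mbu}_t-\dot{\bar{\mbu}}_t,\mbu_t-\bar{\mbu}_t\rangle$ and add and subtract $2\mbk_0(\mbu_t-\mbY)$:
\[
-2\mbk_t(\mbu_t-\mbY)+2\mbk_0(\bar{\mbu}_t-\mbY)=2(\mbk_0-\mbk_t)(\mbu_t-\mbY)-2\mbk_0(\mbu_t-\bar{\mbu}_t).
\]
Since $\mbk_0=\nabla f_{\btheta_0}(\mbX)\nabla f_{\btheta_0}(\mbX)^{\T}$ is positive semidefinite, the last term contributes $-2(\mbu_t-\bar{\mbu}_t)^{\T}\mbk_0(\mbu_t-\bar{\mbu}_t)\le 0$ to the inner product and can be dropped. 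Only then apply Cauchy--Schwarz and divide by $\delta_t$, which gives $\dot\delta_t\le 2\|(\mbk_0-\mbk_t)(\mbu_t-\mbY)\|+\lambda\|\nabla f_{\btheta_t}(\mbX)(\btheta_t-\btheta_0)\|$. Both remaining terms are proportional to $\|\btheta_t-\btheta_0\|$; the first goes through Lemma~\ref{lemma::lipk}, $\|\mbk_t-\mbk_0\|\le 2\,\mathrm{Lip}(f)\,\mathrm{Lip}(\nabla f)\|\btheta_t-\btheta_0\|$. Theorem~\ref{thm::theta_bound} then turns them into exactly $b(1-e^{-\lambda t})$.

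A smaller discrepancy concerns the linearized flow. The paper's proof runs the linearized model by unregularized kernel gradient flow, $\dot{\bar{\mbY}}(t)=-2\mbk_0(\bar{\mbY}(t)-\mbY)$. So the only regularization term is $\lambda\nabla f_{\btheta_t}(\mbX)(\btheta_t-\btheta_0)$, whose bound $2\,\mathrm{Lip}(f)^2\|f_{\btheta_0}(\mbX)-\mbY\|(1-e^{-\lambda t})$ is precisely the ``$+1$'' part of $b$. With your regularized linearized flow there is a second term $\lambda\nabla f_{\btheta_0}(\mbX)(\bar{\btheta}_t-\btheta_0)$. Bounding it the same way doubles that part, so your bookkeeping would not reproduce the stated $b$ either.
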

\begin{proof}
See Appendix~\ref{proofthm::diff-f-bar-f}. 
\end{proof}
In the following theorem, we show that, for a proper choice of the regularization parameter $\lambda$, linearization of fine-tuning only depends on the local properties of $f_{\btheta}(\mbX)$ around $\btheta_0$. 
% \begin{theorem} \label{thm::final}
% Under the squared loss, if $f_{\btheta}(\mbx)$ and $\nabla f_{\btheta}(\mbx) $ are Lip$(f)$-Lipschitz  and  Lip$( \nabla f)$-Lipschitz, respectively, in an $\ell_2$-ball of radius $r$ around  $\btheta_0$, then at time step $t=\frac{1}{\lambda} \ln(\frac{\text{Lip}(f)^2}{\text{Lip}(f)^2-N})$ of fine-tuning, we have
% \begin{equation}
% \|f_{\btheta_t}(\mbX)-\bar{f}_{\bar{\btheta}_t}(\mbX)\| 
% \leq  \frac{2}{\lambda} \text{Lip}(f)^2 \widetilde{\mathcal{R}}(\btheta_0)\ \left( \frac{ 2}{\lambda} \text{Lip}(\nabla f)   \widetilde{\mathcal{R}}(\btheta_0)  
% + \, 1 \right) \left(1 + \frac{\bigl(N/\operatorname{Lip}(f)^2\bigr)^2}{2\bigl(1 - N/\operatorname{Lip}(f)^2\bigr)}
%  \right),
% \end{equation}

% as long as $\lambda \geq \frac{ 2N \|f_{\btheta_0}(\mbX)-\mby\|}{r \text{Lip}(f)} $. $N$ is a tradeoff parameter and, recall, $\widetilde{\mathcal{R}}(\btheta_0) = \| f_{\btheta_0}(\mbX) - \mbY \|^2$.
% \end{theorem}

\begin{theorem} \label{thm::final}
Let the loss be the squared loss, $f_{\btheta}(\mbX)$ and $\nabla f_{\btheta}(\mbX) $ be Lip$(f)$-Lipschitz  and  Lip$( \nabla f)$-Lipschitz, respectively, in an $\ell_2$-ball of radius $r$ around  $\btheta_0$. Define
\[
    \lambda_\circ = \frac{2\|f_{\btheta_0}(\mbX)-\mbY\|~\text{Lip}(f)}{r}.
\]
\begin{itemize}
    \item If $\lambda\geq  \lambda_\circ$, then for all $t$, the following holds.
    \item If $\lambda<\lambda_\circ$, then the following holds for $t\leq \frac{1}{\lambda} \ln \frac{1}{1-\lambda/\lambda_\circ}$
\end{itemize}
\[
    \Vert f_{\btheta_t}(\mbX) - \bar{f}_{\bar{\btheta}_t}(\mbX) \Vert \leq 2\, \text{Lip}(f)^2 \| \mbY(0) - \mbY\|\ \left( \frac{\tempcolor{red}{4}}{\lambda}   \text{Lip}(\nabla f)  \|\mbY(0)- \mbY\|+ \,1\right)t.
\]
In particular, for $\lambda\geq \lambda_\circ$, the bound from Theorem~\ref{thm::diff-f-bar-f} always holds and simplifies to:
\[
    \Vert f_{\btheta_t}(\mbX) - \bar{f}_{\bar{\btheta}_t}(\mbX) \Vert \leq 2\, \text{Lip}(f) \widetilde{R}(\btheta_0) \left( 2 r  \text{Lip}(\nabla f) + \,\text{Lip}(f)\right)t.
\]
\end{theorem}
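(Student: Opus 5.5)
The plan is to combine Theorem~\ref{thm::theta_bound} with Theorem~\ref{thm::diff-f-bar-f}. The key point is that the Lipschitz hypotheses only hold inside the $\ell_2$-ball of radius $r$ around $\btheta_0$. So the first task is to show that the trajectory $\btheta_s$ stays in that ball for every $s\le t$ in the stated regime. Once that is known, the bound of Theorem~\ref{thm::diff-f-bar-f} applies on $[0,t]$, and the displayed linear-in-$t$ bound follows from an elementary inequality. Throughout, $\mbY(0)$ denotes $f_{\btheta_0}(\mbX)$.

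\textbf{Confinement to the ball.} By \eqref{eq::eq6}, as long as the trajectory has stayed in the ball,
\[
\|\btheta_s-\btheta_0\|\le 2\,\text{Lip}(f)\|f_{\btheta_0}(\mbX)-\mbY\|\frac{1-e^{-\lambda s}}{\lambda} = r\,\frac{\lambda_\circ}{\lambda}\bigl(1-e^{-\lambda s}\bigr).
\]
\begin{itemize}
\item If $\lambda\ge\lambda_\circ$, the right-hand side is strictly less than $r$ for every $s$.
\item If $\lambda<\lambda_\circ$, the right-hand side is at most $r$ exactly when $e^{-\lambda s}\ge 1-\lambda/\lambda_\circ$, that is, when $s\le \frac{1}{\lambda}\ln\frac{1}{1-\lambda/\lambda_\circ}$.
\end{itemize}
To make this rigorous, I will use a continuity (bootstrap) argument. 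Let $\tau$ be the first exit time from the ball. On $[0,\tau)$ the hypotheses of Theorem~\ref{thm::theta_bound} hold, so the bound above holds there. If $\tau$ were smaller than the admissible horizon, the bound at $\tau$ would give $\|\btheta_\tau-\btheta_0\|<r$, which contradicts exit. In the boundary case $\lambda<\lambda_\circ$ with equality at the horizon, the bound gives $\le r$, which is still in the closed ball.

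I also need the linearized iterate $\bar\btheta_s$ to stay in the ball if the proof of Theorem~\ref{thm::diff-f-bar-f} uses Lipschitz bounds along it. Its dynamics are those of a linear model with the same Jacobian at $\btheta_0$, so the same argument, with $\text{Lip}(f)$ bounding $\|\nabla f_{\btheta_0}\|$, gives the same radius bound.

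\textbf{The final bound.} With both trajectories confined, Theorem~\ref{thm::diff-f-bar-f} gives $\|f_{\btheta_t}(\mbX)-\bar f_{\bar\btheta_t}(\mbX)\|\le b\bigl(t-\frac{1-e^{-\lambda t}}{\lambda}\bigr)$. Since $1-e^{-\lambda t}\ge 0$, this is at most $b\,t$, and $b$ is exactly the displayed constant, so the main bound follows.

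For the simplified form when $\lambda\ge\lambda_\circ$, I use $\frac{1}{\lambda}\le\frac{1}{\lambda_\circ}=\frac{r}{2\|f_{\btheta_0}(\mbX)-\mbY\|\,\text{Lip}(f)}$ and substitute into $b$:
\[
b\le 2\,\text{Lip}(f)^2\|f_{\btheta_0}(\mbX)-\mbY\|\left(\frac{2r\,\text{Lip}(\nabla f)}{\text{Lip}(f)}+1\right).
\]
Distributing one factor of $\text{Lip}(f)$ gives $2\,\text{Lip}(f)\|f_{\btheta_0}(\mbX)-\mbY\|\bigl(2r\,\text{Lip}(\nabla f)+\text{Lip}(f)\bigr)$. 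The statement writes $\widetilde{R}(\btheta_0)$ where this has the residual norm $\|f_{\btheta_0}(\mbX)-\mbY\|$, since $\widetilde R(\btheta_0)$ is the squared norm. I expect the intended constant to be the residual norm, and I would record the bound with that factor.

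\textbf{Main obstacle.} The step I expect to be hardest is the confinement argument. Theorem~\ref{thm::theta_bound} is conditional on the iterates staying in the ball, so the proof must avoid circularity via the first-exit-time argument. It must also check that the same holds for $\bar\btheta_t$ and for the selective scheme. In the selective scheme $\lambda_t$ may drop to $0$, and then \eqref{eq:varying-lambda} gives a weaker radius bound. For this reason I would restrict to constant $\lambda$, as the paper states it does.
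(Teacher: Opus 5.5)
Your proposal is correct and follows the paper's own argument. You confine $\btheta_t$ to the $r$-ball using \eqref{eq::eq6} and the threshold $\lambda_\circ$, apply Theorem~\ref{thm::diff-f-bar-f}, drop the subtracted nonnegative term $\frac{1-e^{-\lambda t}}{\lambda}$, and substitute $1/\lambda\le 1/\lambda_\circ$. Your first-exit-time argument supplies rigor the paper skips, and your reading of $\widetilde{R}(\btheta_0)$ as the residual norm $\|\mbY(0)-\mbY\|$ is exactly what the paper's appendix derives.

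The confinement check for $\bar{\btheta}_t$ is unnecessary, because $\bar f$ involves only $f_{\btheta_0}$ and $\nabla f_{\btheta_0}$ and so no Lipschitz bound along $\bar{\btheta}_t$ is ever used. It is also not quite right: the paper's proof of Theorem~\ref{thm::diff-f-bar-f} lets $\bar{\mbY}$ evolve by $-2\mbk_0(\bar{\mbY}(t)-\mbY)$ with no $\lambda$ term, so the $\lambda$-dependent radius bound you assert for it would not follow.
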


\begin{proof}
See Appendix \ref{proofthm::final}.
\end{proof}

While the distance between the pretrained and fine-tuned models can be substantial, the results illustrate that this gap can be effectively constrained through regularization. Under this condition, fine-tuning can be modeled by NTK regression~\cite{jacot2018neural}, as discussed in the following section.

%Note that the analysis in Theorems~\ref{thm::diff-f-bar-f} and~\ref{thm::final} is based on gradient flow, which is a continuous process (see Theorem 2.3 in \cite{chizat2019lazy}). $t$ represents the continuous time in the gradient flow model. What $N$ represents is a discretization relationship between the continuous time gradient-flow analysis and gradient descent, and is analogous to the number of iterations. More specifically, one step of gradient descent approximates the gradient flow over a small time interval. The size of this time interval is related to the properties of the function being optimized, specifically $\frac{1}{\text{Lip}(f)^2}$ and regularization parameter $\lambda$.

The takeaway of Theorem~\ref{thm::final} is that the regularization determines the time scale at which the approximation holds, as well as how tight the approximation is. Theorem~\ref{thm::final}, at first glance, seems to suggest that a larger value of $\lambda$ is better to bring NTK and regularized trajectories together. However, a larger $\lambda$ means that we effectively force ourselves to remain in an even smaller ball around $\btheta_0$ and may suffer in terms of fine-tuning performance (see $\lambda = 50$ in Table~\ref{tab:linearization}). The ``sweet spot'' regularization parameter can be read from Theorem~\ref{thm::final}. It should be proportional to the smoothness of $f$, the pretrained error, and inversely proportional to the radius where the smoothness assumptions hold. While $\lambda_\circ$ is in terms of Lip$(f)$, Theorem \ref{thm::diff-f-bar-f} and \eqref{eq::fminusfbar-b} show that there are also diminishing returns for pushing  $\lambda$ beyond Lip$(\nabla f)$, due to the $+1$ in this term.

% by noting that the right-hand is simply 
% $t$ times the middle parenthesis factor $\frac{2}{\lambda} \text{Lip}(\nabla f) \| f_{\btheta_0}(\mbX) - \mbY\|$. Therefore, the parenthesis term is the one that governs the accuracy. Due to the appearance of the $+1$ in this term, there is no point in optimizing $\lambda$ beyond $\text{Lip}(\nabla f)$ times the largest error of the base model.}

% \begin{figure}[h]
% \centering
% % \input{fig1.tex}
% \includegraphics{Images/fig1.pdf}
%      \caption{An illustration of the tangent space. Regularized fine-tuning in the lazy regime is close to kernel regression on the tangent space. 
%      $f_{\theta^\star}(\mbx)$ is  the fine-tuned model obtained by empirical risk minimization. If  fine-tuning remains in the  linearized regime, then after $T$ steps of training
%      $f_{\theta^\star}(\mbx)\approx f_{\btheta_{0}}(\mbx)+\left\langle\nabla_{\btheta} f_{\btheta_{0}}(\mbx), \btheta_{T}-\btheta_{0}\right\rangle$ is a good approximation.
%      }
% \label{fig_1}
% % \vspace{-5}
% \end{figure}

% \begin{wrapfigure}{r}{0.5\textwidth}
%   \begin{center}
%     \includegraphics[width=0.48\textwidth]{birds}
%   \end{center}
%   \caption{Birds}
% \end{wrapfigure}

%******************************************************************
%******************************************************************
\section{Fine-Tuning Meets Neural Tangent Kernel Regression} \label{sec:ntk}
%******************************************************************
%******************************************************************
We formally define  the fine-tuning problem as a  regularized function estimation in the reproducing kernel Hilbert space (RKHS), $\calH$,  generated by the NTK, $\mbk(\mbx,\mbx')=\nabla f_{\btheta}(\mbx)\nabla f_{\btheta}(\mbx')^{\T}$.

As shown in the previous section, regularized fine-tuning is in the linearized regime. Therefore, $f_{\btheta^\star}(\cdot)$ can be approximated by its dual in the tangent space, as shown in Figure~\ref{fig_1}. We can interpret the linearized model %$\bar{f}_{\bar{\btheta}}$
as a kernel method with a feature map given by $\phi(\mathbf{x})=\nabla_{\boldsymbol{\theta}} f_{\btheta_0}\left(\mathbf{x}\right)$. The corresponding kernel induced by this feature map is the NTK. Jacot et al. in ~\cite{jacot2018neural}, showed that gradient descent with an infinitesimally small learning
rate is equivalent to performing kernel regression with the fixed NTK. Thus, the empirical risk minimization (\ref{eq:opt011}) is approximated by kernel regression when the kernel is the NTK~\cite{jacot2018neural}.

Despite the fact that NTK analysis often considers the entire model, it is straightforward to specialize it to the fine-tuning case, which we do here for completeness. When using mean squared error, fine-tuning with the linear model is similarly equivalent to solving the kernel regression problem presented in~\eqref{eq:opt2}.
Let $\calH$ be the reproducing kernel Hilbert space (RKHS) endowed with a positive definite kernel function $\mbk(\cdot,\cdot)$, i.e., 
\[
\calH=\bigg\{f(\cdot)=\sum_{i=1}^{n} \alpha_i \mbk(\cdot,\mbx_i) \bigg \}.
\]
% \textcolor{blue}{REVIEWR:In Equation 12, the authors convert the fine-tuning problem to a regression problem in RKHS. Is this conversion exact? How is the regularization parameter $\sigma$ relates to the original problem?}
Assuming the solution lies in or close to this Hilbert space, then as an alternative to (\ref{eq:opt011}), we  solve
\begin{align}\label{eq:opt2}
\;\underset{f \in \calH}{\textrm{minimize}} \; \;\frac{1}{n} \sum_{\left(\mbx, \mby\right) \in \calD_{T}}\left\|f\left(\mbx\right)-\mby\right\|^2+\sigma\|f\|_{\mathcal{H}}^2,
\end{align}
where $\|\cdot\|_{\mathcal{H}}$ is the norm corresponding to the inner product $\langle\cdot, \cdot\rangle_{\mathcal{H}}$ defined on the RKHS $\calH$. The parameter $\sigma>0$ regularizes the problem toward minimum-norm solutions and is qualitatively inversely related to the number of gradient descent steps in the original problem. We thus effectively formulate the fine-tuning problem as a regularized function estimation in the RKHS, $\calH$,  generated by the NTK, $\mbk(\mbx,\mbx')=\nabla f_{\btheta_0}(\mbx)\nabla f_{\btheta_0}(\mbx')^{\T}$. According to the representer theorem ~\cite{ghojogh2021reproducing}, the problem (\ref{eq:opt2})  on  training dataset $\calD_{T}=\{\mbx_i,\mby_i\}_{i=1}^n$ possesses the closed-form solution
\begin{equation}\label{eq:3}
f^{*}(\cdot)=\sum_{i=1}^n \alpha_i \mbk\left( \cdot,\mbx_i\right)=\balpha^{\T}\mbK(\cdot,\mbX),
\end{equation}
where $\mbK(\cdot,\mbX)=[\mbk(\cdot,\mbx_1),\ldots,\mbk(\cdot,\mbx_n)] \in \mathbb{R}^{1\times n}$ . Substituting (\ref{eq:3}) in (\ref{eq:opt2}), we have 
\begin{align}\label{eq:opt1}
\;\underset{\balpha}{\textrm{minimize}} \; \;\mathbb{E}_{(\mbx, \mby) \sim \mathcal{D}_T} \left\|\balpha^{\T}\mbK(\cdot,\mbX)-\mbY \right\|^2+\sigma\|f\|_{\mathcal{H}}^2,
\end{align}
which is a convex problem with  $\balpha^{*}=\left[\mbK\left(\mbX_, \mbX\right)+\sigma \mbI\right]^{-1} \mbY$ as the solution.
% Without loss of generality we assume a binary classification problem for which c=1.
Equivalently,
\begin{align}
f^{*}(\cdot)=\mbK\left(\cdot, \mbX\right)\left[\mbK\left(\mbX_, \mbX\right)+\sigma \mbI\right]^{-1} \mbY,
\end{align}
where $[\mbK(\mbX,\mbX)]_{i,j}=\mbk(\mbx_i, \mbx_j)$. For the sake of brevity, hereafter, we use $\mbK(\mbX,\mbX)$ and $\mbK$ interchangeably.

% \textcolor{blue}{REVIEWR: Although this paper claims to study NTK for fine-tuning, the theoretical part are mostly traditional NTK analysis. The difference made towards find-tuning are the calculation of NTK for LoRA and layer-wise analysis, which is not significantly enough for me.}

Under linearity, we now show through Theorem~\ref{th:th1} that the spectrum of the NTK directly affects the empirical risk, making it a tool that can be used for making fine-tuning decisions.

\begin{theorem} \label{th:th1}
The  empirical risk is bounded as 
\begin{equation} \label{eq:bound}
\left(\frac{\sigma \left\| \mbY\right\|}{\sigma+\lambda_{\text{max}}(\mbK)}\right)^2 \leq {\calR}(\btheta) \leq \left( \frac{\sigma \left\| \mbY\right\|}{\sigma+\lambda_{\text{min}}(\mbK)}\right)^2,
\end{equation}
where  $\lambda_{\text{min}}(\mbK)$ and $\lambda_{\text{max}}(\mbK)$ are the minimum and maximum eigenvalues of $\mbK(\mbX,\mbX)$, respectively.
\end{theorem}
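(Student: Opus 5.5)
The plan is to compute the in-sample residual of the kernel ridge solution in closed form and then bound it with the spectral decomposition of $\mbK$. Take the empirical risk $\calR(\btheta)$ to be the squared training error of the kernel-regression solution $f^*$ on $\mbX$. By the closed form above, the vector of fitted values is $f^*(\mbX)=\mbK(\mbK+\sigma\mbI)^{-1}\mbY$, where $\mbY$ is treated as a column vector. The residual is then
\[
f^*(\mbX)-\mbY=\bigl(\mbK(\mbK+\sigma\mbI)^{-1}-\mbI\bigr)\mbY=-\sigma(\mbK+\sigma\mbI)^{-1}\mbY .
\]
This uses the identity $\mbK(\mbK+\sigma\mbI)^{-1}-\mbI=\bigl(\mbK-(\mbK+\sigma\mbI)\bigr)(\mbK+\sigma\mbI)^{-1}$. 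Hence
\[
\calR(\btheta)=\sigma^2\,\|(\mbK+\sigma\mbI)^{-1}\mbY\|^2 .
\]

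Next, use that $\mbK=\nabla f_{\btheta_0}(\mbX)\nabla f_{\btheta_0}(\mbX)^{\T}$ is a Gram matrix, so it is symmetric positive semidefinite (positive definite under the paper's standing assumption). Write $\mbK=\mbU\,\Diag{\lambda_1,\ldots,\lambda_n}\mbU^{\T}$ with $\mbU$ orthogonal and $\lambda_i\ge 0$. Then $(\mbK+\sigma\mbI)^{-1}=\mbU\,\Diag{(\lambda_i+\sigma)^{-1}}\mbU^{\T}$. Setting $\mbv=\mbU^{\T}\mbY$, so that $\|\mbv\|=\|\mbY\|$, we get
\[
\calR(\btheta)=\sum_{i=1}^n \frac{\sigma^2 v_i^2}{(\sigma+\lambda_i)^2}.
\]

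Finally, since $\sigma>0$ and $\lambda_{\min}\le\lambda_i\le\lambda_{\max}$, each weight satisfies
\[
\frac{\sigma^2}{(\sigma+\lambda_{\max})^2}\le\frac{\sigma^2}{(\sigma+\lambda_i)^2}\le\frac{\sigma^2}{(\sigma+\lambda_{\min})^2}.
\]
Summing against $v_i^2$ and using $\sum_i v_i^2=\|\mbY\|^2$ gives exactly \eqref{eq:bound}. Equivalently, this is the Rayleigh-quotient bound $\lambda_{\min}(A^2)\|\mbY\|^2\le\|A\mbY\|^2\le\lambda_{\max}(A^2)\|\mbY\|^2$ for the symmetric matrix $A=\sigma(\mbK+\sigma\mbI)^{-1}$.

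The main obstacle is interpretive rather than technical. One has to fix that $\calR$ means the unnormalized in-sample squared loss of the linearized/kernel solution, not of the nonlinear $f_{\btheta}$, and that the $\sigma$ in the formula is the one in the closed form $\balpha^*=(\mbK+\sigma\mbI)^{-1}\mbY$. The $1/n$ in \eqref{eq:opt2} would otherwise rescale $\sigma$ to $n\sigma$. I will state this convention explicitly and take the residual of $f^*(\mbX)=\mbK\balpha^*$, with $\balpha^*$ as given in the paper. After that, the argument is the one-line spectral computation above.
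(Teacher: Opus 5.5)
Your proof is correct and is essentially the paper's argument. The paper also writes the residual as $(\mbI-\mbK(\mbK+\sigma\mbI)^{-1})\mbY$, diagonalizes $\mbK=\mbU\bSigma\mbU^{\T}$ so that the residual operator has eigenvalues $\sigma/(\sigma+\lambda_i)$, and bounds the risk by the extreme eigenvalues using $\|\mbU^{\T}\mbY\|=\|\mbY\|$. Your explicit convention (unnormalized in-sample risk of the kernel solution) is the right one: the paper's proof carries a $1/n$ that it silently drops at the end, and the lower bound fails with that factor, and it only passes from $f_{\btheta}$ to the linearized model via an approximation.
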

\begin{proof}
See Appendix \ref{proofthm::empirical_bound}.
\end{proof}
Theorem~\ref{th:th1} motivates us to study the regularized condition number $\kappa(\mbK+\sigma \mbI)=\frac{\lambda_{\text{max}}(\mbK)+\sigma}{\lambda_{\text{min}}(\mbK)+\sigma}$ as an at-initialization metric for predicting the performance of fine-tuning.
To show how this can be useful, for example, in selecting what subset of parameters to tune, we next study the randomized kernel specifically for fine-tuning, which enables us to  investigate the training  dynamics based on the influence of each individual trainable parameter on the bounds in Theorem~\ref{th:th1}. 
\section{Spectral Perturbation of Layers} \label{sec:layers}
%******************************************************************
%******************************************************************
We have shown that (i) regularized fine-tuning is well-approximated by its fully linearized counterpart, (ii) that optimizing the linearized fine-tuning is equivalent to NTK kernel regression, and (iii) that the kernel's spectrum is predictive of the fine-tuning performance. To put these facts to use, in this section, we study the effect of layer selection on the spectrum of the NTK, and consequently on the empirical risk of fine-tuning. Let $\btheta^{l}$ be the parameters of the layer $l$ from the pretrained model and $\btheta=\{\btheta^1,\ldots,\btheta^{L}\}$ be the set of parameters of the $L$ layers from the pretrained model. The NTK, when  the parameters  in $\btheta$ are chosen for  fine-tuning, is $[\mbK]_{i,j}=\sum_{l=1}^L\nabla_{\btheta^l} f_{\btheta}\left(\mbx_i\right)\nabla_{\btheta^l} f_{\btheta}(\mbx_j)^{\T}$. The kernel induced by $\btheta^{l}$ is 
$\mbS_{l} \in \mathbb{R}^{n \times n}$, where 
\begin{equation}
[\mbS_{l}]_{i,j}= \nabla_{\btheta^l} f_{\btheta}\left(\mbx_i\right)\nabla_{\btheta^l} f_{\btheta}(\mbx_j)^{\T}.
\end{equation}
One can infer that the NTK induced by all  layers is $\mbK = \sum_{l=1}^L \mbS_{l}$. 

The following theorems quantify the effect of adding a set of trainable parameters on the spectrum of the NTK.  

\begin{theorem}\label{th:th3}
Let $\mbK$ be  the NTK with respect to the set of selected fine-tuning parameters, and $\mbS$ be the kernel with respect to the parameters of the candidate layers, to add to the fine-tuning parameters. Then
\begin{align}  \label{eq::20}
(1-\eta) \lambda_{i}(\mbK) \leq \lambda_{i}(\mbK+\mbS)\leq  (1+\eta) \lambda_{i}(\mbK),
\end{align}
where  $\eta=\|\mbK^{-1/2}\mbS \;\mbK^{-1/2}\|$.
\end{theorem}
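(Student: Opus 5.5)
We assume $\mbK$ is positive definite, so that $\mbK^{-1/2}$ exists; the statement implicitly requires this. $\mbS$ is positive semidefinite, since it is a Gram matrix of gradients. The plan is to reduce the additive perturbation $\mbK+\mbS$ to a multiplicative one, using the congruence
\[
\mbK+\mbS=\mbK^{1/2}\left(\mbI+\mbK^{-1/2}\mbS\mbK^{-1/2}\right)\mbK^{1/2}.
\]
Write $\mbB=\mbK^{-1/2}\mbS\mbK^{-1/2}$. It is symmetric positive semidefinite, and $\eta=\|\mbB\|=\lambda_{\max}(\mbB)$. Hence, in the Loewner order,
\[
(1-\eta)\mbI\preceq \mbI\preceq \mbI+\mbB\preceq(1+\eta)\mbI.
\]

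Next, conjugate these inequalities by $\mbK^{1/2}$, which preserves the Loewner order. This gives
\[
(1-\eta)\mbK\preceq \mbK+\mbS\preceq (1+\eta)\mbK.
\]
The lower bound even holds in the stronger form $\mbK\preceq\mbK+\mbS$.

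Finally, apply Weyl's monotonicity principle, which follows from the Courant--Fischer min-max characterization: if $\mbA\preceq\mbB'$ for symmetric matrices, then $\lambda_i(\mbA)\le\lambda_i(\mbB')$ for every $i$. Since $\lambda_i(c\mbK)=c\lambda_i(\mbK)$ for $c\ge 0$, this yields \eqref{eq::20} directly. When $\eta>1$ the left-hand side of \eqref{eq::20} is negative and the lower bound holds trivially, so no case split on $\eta$ is needed.

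The step needing the most care is the conjugation argument. Alternatively, one can invoke Ostrowski's theorem, $\lambda_i(\mbK^{1/2}(\mbI+\mbB)\mbK^{1/2})=\theta_i\lambda_i(\mbK)$ with $\theta_i\in[\lambda_{\min}(\mbI+\mbB),\lambda_{\max}(\mbI+\mbB)]\subseteq[1,1+\eta]$. That route gives the same sandwich, and the sharper lower factor $1$ implies the stated factor $1-\eta$.
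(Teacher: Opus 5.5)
Your proof is correct and follows the paper's argument: the congruence $\mbK+\mbS=\mbK^{1/2}(\mbI+\mbK^{-1/2}\mbS\mbK^{-1/2})\mbK^{1/2}$, the sandwich $(1-\eta)\mbK \preceq \mbK+\mbS \preceq (1+\eta)\mbK$, and monotonicity of eigenvalues under the Loewner order. Your refinements also tighten the paper's write-up. The paper only assumes $\mbK$ is positive semidefinite, which does not make $\mbK^{-1/2}$ well defined, and it invokes monotonicity for PSD matrices even though $(1-\eta)\mbK$ is not PSD when $\eta>1$; your explicit assumption $\mbK\succ 0$ and your use of $\mbS\succeq 0$ to get $\mbK\preceq\mbK+\mbS$ close both gaps.
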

\begin{proof}
See Appendix \ref{proofthm::selected_params}.
\end{proof}
The spectral perturbation bounds of Theorem~\ref{th:th3} can interact with the risk bounds given by Theorem~\ref{th:th1}, to help us further understand the relative merits of parameter choices.
We denote by $\calR(\btheta)$ and  $\calR(\btheta \cup \hat{\btheta})$, respectively, the empirical risk of the model fine-tuned with parameters $\btheta$ and $\btheta \cup \hat{\btheta}$. 
\begin{theorem}\label{th:th4}
Let $\mbK$ be the NTK induced by the trainable parameters in $\btheta$, then if $\kappa(\mbK+\sigma \mbI) \leq c$, we have  \par \noindent \small
\begin{align}
\frac{\lambda_{\text{max}}(\mbK+\mbS+\sigma \mbI)}{a\lambda_{\text{max}}(\mbK+\sigma \mbI)}\leq \left(\frac{\calR(\btheta \cup \hat{\btheta})}{\calR(\btheta)}\right)^{\frac{1}{2}}  \leq \frac{a\lambda_{\text{max}}(\mbK+\mbS+\sigma \mbI)}{\lambda_{\text{max}}(\mbK+\sigma \mbI)},
\end{align}\normalsize
where  $a=\frac{c}{(1-\eta)^2}$, $\eta=\|\mbK^{-1/2}\mbS \mbK^{-1/2}\|$ and  $\mbS$ is the kernel induced by $\hat{\btheta}$ with $[\mbS]_{i,j}= \nabla_{\hat{\btheta}} f_{\btheta}\left(\mbx_i\right)\nabla_{\hat{\btheta}} f_{\btheta}(\mbx_j)^{\T} $.
%where $\calR(\btheta_{L})$ and $\calR(\btheta_{L+1})$ are the  empirical risk before and after adding the $L+1$-th layer.
\end{theorem}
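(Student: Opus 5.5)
We plan to combine the two-sided risk bound of Theorem~\ref{th:th1}, applied once to the kernel $\mbK$ (parameters $\btheta$) and once to $\mbK+\mbS$ (parameters $\btheta\cup\hat{\btheta}$), with the multiplicative eigenvalue control of Theorem~\ref{th:th3}.

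\textbf{Step 1: bracket each risk.} Since the NTK for $\btheta\cup\hat{\btheta}$ is $\mbK+\mbS$ (the layerwise decomposition of \S\ref{sec:layers}), Theorem~\ref{th:th1} gives
\[
\frac{\sigma\|\mbY\|}{\lambda_{\max}(\mbK+\mbS)+\sigma}\le \calR(\btheta\cup\hat{\btheta})^{1/2}\le \frac{\sigma\|\mbY\|}{\lambda_{\min}(\mbK+\mbS)+\sigma},
\]
and similarly for $\calR(\btheta)^{1/2}$ with $\mbK$. We then form the ratio $(\calR(\btheta\cup\hat{\btheta})/\calR(\btheta))^{1/2}$, taking the upper bound in the numerator and the lower bound in the denominator for the upper estimate, and the reverse for the lower estimate. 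The factors $\sigma\|\mbY\|$ cancel, leaving
\[
\frac{\lambda_{\min}(\mbK+\sigma\mbI)}{\lambda_{\max}(\mbK+\mbS+\sigma\mbI)}\le\Big(\frac{\calR(\btheta\cup\hat{\btheta})}{\calR(\btheta)}\Big)^{1/2}\le\frac{\lambda_{\max}(\mbK+\sigma\mbI)}{\lambda_{\min}(\mbK+\mbS+\sigma\mbI)}.
\]

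\textbf{Step 2: replace the mismatched extremes.} Each bound mixes a $\lambda_{\min}$ with a $\lambda_{\max}$, and we want to rewrite it entirely in terms of $\lambda_{\max}(\mbK+\sigma\mbI)$ and $\lambda_{\max}(\mbK+\mbS+\sigma\mbI)$.
\begin{itemize}
\item For $\mbK$, the hypothesis $\kappa(\mbK+\sigma\mbI)\le c$ gives $\lambda_{\min}(\mbK+\sigma\mbI)\ge \lambda_{\max}(\mbK+\sigma\mbI)/c$.
\item For $\mbK+\mbS$, Theorem~\ref{th:th3} gives $\lambda_{\min}(\mbK+\mbS)\ge(1-\eta)\lambda_{\min}(\mbK)$ and $\lambda_{\max}(\mbK+\mbS)\le(1+\eta)\lambda_{\max}(\mbK)$. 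Adding $\sigma$ and using $1-\eta\le 1\le 1+\eta$, we get $\kappa(\mbK+\mbS+\sigma\mbI)\le \frac{1+\eta}{1-\eta}\,c$, assuming $\eta<1$.
\end{itemize}
Substituting yields bounds of the stated form with a constant of type $c(1+\eta)/(1-\eta)$.

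\textbf{Step 3: match the paper's constant.} To reach $a=c/(1-\eta)^2$, we bound $\lambda_{\min}(\mbK+\mbS+\sigma\mbI)$ below by $(1-\eta)\lambda_{\min}(\mbK+\sigma\mbI)\ge(1-\eta)\lambda_{\max}(\mbK+\sigma\mbI)/c$. We then need one further factor $(1-\eta)^{-1}$ to trade $\lambda_{\max}(\mbK+\sigma\mbI)$ for $\lambda_{\max}(\mbK+\mbS+\sigma\mbI)$. This uses $\lambda_{\max}(\mbK+\sigma\mbI)\ge \lambda_{\max}(\mbK+\mbS+\sigma\mbI)/(1+\eta)$, or the cruder $\mbK\preceq\mbK+\mbS$. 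The lower estimate follows symmetrically: there $\lambda_{\min}(\mbK+\sigma\mbI)\ge\lambda_{\max}(\mbK+\sigma\mbI)/c$ directly produces the factor $1/a$, since $1/c\ge (1-\eta)^2/c$.

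\textbf{Main obstacle.} The algebra is routine; the difficulty is the bookkeeping of the $(1\pm\eta)$ factors. The target bounds have a specific shape, with $\lambda_{\max}(\mbK+\mbS+\sigma\mbI)$ in the numerator of the upper bound. Reaching it requires care about which eigenvalue inequality (Weyl monotonicity $\mbK\preceq\mbK+\mbS$ versus the relative bound of Theorem~\ref{th:th3}) is applied where. We must also check that it suffices to loosen $(1+\eta)$-type factors into $(1-\eta)^{-1}$, which holds since $1+\eta\le(1-\eta)^{-1}$ for $0\le\eta<1$. Finally, the regime $\eta<1$ must be assumed implicitly so that $a$ is finite and positive.
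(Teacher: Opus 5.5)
Your ingredients (Theorem~\ref{th:th1} for $\mbK$ and for $\mbK+\mbS$, Theorem~\ref{th:th3}, and $\kappa(\mbK+\sigma\mbI)\le c$) are the paper's. The part you call bookkeeping is where the argument breaks. Write $r=\lambda_{\max}(\mbK+\mbS+\sigma\mbI)/\lambda_{\max}(\mbK+\sigma\mbI)$ and $\rho=(\calR(\btheta\cup\hat{\btheta})/\calR(\btheta))^{1/2}$. Step 1 together with the substitutions of Step 2 gives
\[
\frac{1}{c\,r}\;\le\;\rho\;\le\;\frac{c(1+\eta)}{1-\eta}\cdot\frac{1}{r}.
\]
These bounds are centred on $1/r$, the reciprocal of the ratio in the theorem, so they are not bounds ``of the stated form.'' Converting $1/r$ into $r$ is the real content, and it needs $1\le r\le 1+\eta$ applied in the right directions.

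For the upper estimate you need a \emph{lower} bound on $r$. The inequality you cite first, $\lambda_{\max}(\mbK+\sigma\mbI)\ge\lambda_{\max}(\mbK+\mbS+\sigma\mbI)/(1+\eta)$, says $r\le 1+\eta$ and is of no use there. Two things do close it. One is your fallback $\mbK\preceq\mbK+\mbS$, i.e.\ $r\ge 1$. The other, starting from your Step~3 bound $\rho\le c/(1-\eta)$, is the lower half of Theorem~\ref{th:th3}, $r\ge 1-\eta$, which is what the paper uses.

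The lower estimate fails as written. The remark $1/c\ge(1-\eta)^2/c$ only adjusts the constant, leaving $\rho\ge 1/(a\,r)$. That is weaker than the claimed $\rho\ge r/a$ by the factor $r^2\ge 1$. The missing step is the upper half of Theorem~\ref{th:th3}, $\lambda_{\max}(\mbK+\mbS)+\sigma\le(1+\eta)\bigl(\lambda_{\max}(\mbK)+\sigma\bigr)$, used as
\[
\frac{1}{c\,r}=\frac{r}{c\,r^{2}}\;\ge\;\frac{r}{c(1+\eta)^{2}}\;\ge\;\frac{(1-\eta)^{2}}{c}\,r=\frac{r}{a},
\]
where the last step uses $(1+\eta)(1-\eta)\le 1$.

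The paper performs the same inversion, organized differently. It rewrites the bracket on $\calR(\btheta)^{-1/2}$ in terms of $\lambda_{\max}(\mbK+\mbS+\sigma\mbI)$, using both halves of Theorem~\ref{th:th3}. It rewrites the bracket on $\calR(\btheta\cup\hat{\btheta})^{1/2}$ in terms of $\lambda_{\max}(\mbK+\sigma\mbI)$. This yields $r/\bigl(c(1+\eta)^2\bigr)\le\rho\le c\,r/(1-\eta)^2$, which it then loosens to $a$.

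Your intermediate bracket around $1/r$ is actually the more natural statement. The theorem's form with $r$ holds only because $r\in[1,1+\eta]$ lets the inversion be absorbed into the constant $a$.
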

\begin{proof}
See Appendix \ref{proofthm::layer_emperical}.
\end{proof}

The above result implies that, by incorporating an additional layer to the parameters being fine-tuned, one  can expect $\frac{1}{2}\log{\frac{\calR(\btheta \cup \hat{\btheta})}{\calR(\btheta)}} \in \left[\log{\frac{\lambda_{\text{max}}(\mbK+\mbS)+\sigma}{\lambda_{\text{max}}(\mbK)+\sigma}}-\log{a},\log{\frac{\lambda_{\text{max}}(\mbK+\mbS)+\sigma}{\lambda_{\text{max}}(\mbK)+\sigma}}+\log{a}\right]$.  
\begin{cor}\label{cor:1}
When we have two candidate layers $\hat{\btheta}^1$ and $\hat{\btheta}^2$, mutatis mutandis, we have  
\begin{align}
\frac{\lambda_{\text{max}}(\mbK+\mbS_1+\sigma \mbI)}{b\lambda_{\text{max}}(\mbK+\mbS_2+\sigma \mbI)}  & \leq  \left(\frac{\calR( \btheta \cup \hat{\btheta}^1 )}{\calR(\btheta \cup \hat{\btheta}^2 )} \right)^{\frac{1}{2}} \leq 
\frac{b\lambda_{\text{max}}(\mbK+\mbS_1+\sigma \mbI)}{\lambda_{\text{max}}(\mbK+\mbS_2+\sigma \mbI)},
\end{align}    
where $b=a_1 a_2$, $a_{_l}=\frac{c}{(1-\eta_{_l})^2}$, $\eta_{_l}=\|\mbK^{-1/2}\mbS_{_{l}} \; \mbK^{-1/2}\|$ for $l \in \{1,2\}$, and $\mbS_l$ is the kernel induced by $\hat{\btheta}^l$ with $[\mbS_l]_{i,j}= \nabla_{\hat{\btheta}^l} f_{\btheta}\left(\mbx_i\right) \nabla_{\hat{\btheta}^l} f_{\btheta}(\mbx_j)^{\T}$.
\end{cor}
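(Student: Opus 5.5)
The plan is to obtain Corollary~\ref{cor:1} by applying Theorem~\ref{th:th4} twice, once per candidate layer, with the same base kernel $\mbK$, and then dividing the two resulting sandwiches.

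\textbf{Step 1: one inequality per layer.} For $l\in\{1,2\}$, Theorem~\ref{th:th4} applied to $\hat{\btheta}=\hat{\btheta}^l$ with kernel $\mbS=\mbS_l$ gives
\begin{align*}
\frac{\lambda_{\text{max}}(\mbK+\mbS_l+\sigma \mbI)}{a_l\,\lambda_{\text{max}}(\mbK+\sigma \mbI)}\leq \left(\frac{\calR(\btheta \cup \hat{\btheta}^l)}{\calR(\btheta)}\right)^{\frac{1}{2}} \leq \frac{a_l\,\lambda_{\text{max}}(\mbK+\mbS_l+\sigma \mbI)}{\lambda_{\text{max}}(\mbK+\sigma \mbI)}.
\end{align*}
Here $a_l=c/(1-\eta_l)^2$ and $\eta_l=\|\mbK^{-1/2}\mbS_l\mbK^{-1/2}\|$. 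The hypothesis $\kappa(\mbK+\sigma\mbI)\le c$ involves only $\mbK$, so it is shared by both applications.

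\textbf{Step 2: form the ratio.} Since $\calR(\btheta)>0$ cancels,
\[
\left(\frac{\calR(\btheta\cup\hat{\btheta}^1)}{\calR(\btheta\cup\hat{\btheta}^2)}\right)^{1/2}
=\left(\frac{\calR(\btheta\cup\hat{\btheta}^1)}{\calR(\btheta)}\right)^{1/2}\Big/\left(\frac{\calR(\btheta\cup\hat{\btheta}^2)}{\calR(\btheta)}\right)^{1/2}.
\]
For the upper bound, I combine the upper bound for $l=1$ with the lower bound for $l=2$ in the denominator. The common factor $\lambda_{\text{max}}(\mbK+\sigma\mbI)$ cancels, and the constants multiply to $a_1a_2=b$. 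The lower bound follows symmetrically, using the lower bound for $l=1$ and the upper bound for $l=2$. All quantities are positive: $\mbK+\sigma\mbI\succ0$, the $\mbS_l$ are PSD, and we need $\eta_l<1$ so that $a_l$ is finite and positive. That last condition is already implicit in Theorem~\ref{th:th4} through Theorem~\ref{th:th3}.

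\textbf{Main obstacle.} There is no real technical obstacle, since this is a direct consequence of the two sandwich bounds. The one point to check is that dividing inequalities between positive quantities is done in the correct direction: the upper bound on a ratio must pair the numerator's upper bound with the denominator's lower bound. This pairing is what turns the two factors $a_1$ and $a_2$ into their product $b$, rather than a ratio such as $a_1/a_2$.
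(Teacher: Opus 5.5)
Your proposal is correct. It is exactly the argument the paper means by ``mutatis mutandis'': apply Theorem~\ref{th:th4} once for each candidate $\mbS_l$ under the shared hypothesis $\kappa(\mbK+\sigma\mbI)\le c$, then divide, pairing the upper bound for one layer with the lower bound for the other so that $\lambda_{\text{max}}(\mbK+\sigma\mbI)$ cancels and the constants multiply to $b=a_1a_2$. One small nit: $\eta_l<1$ is needed not to make $a_l$ finite (it is finite for any $\eta_l\neq 1$) but because the proof of Theorem~\ref{th:th4} divides by $1-\eta_l$.
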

As such, one can evaluate the candidacy of layers for fine-tuning based  on Corollary~\ref{cor:1}. In order to anticipate the  empirical risk of fine-tuning, at initialization we look at a confined interval, i.e.,
$\frac{1}{2}\log{\frac{\calR( \btheta \cup \hat{\btheta}^1 )}{\calR(\btheta \cup  \hat{\btheta}^2 )}} \in \left[\log{\frac{\lambda_{\text{max}}(\mbK+\mbS_1)+\sigma}{\lambda_{\text{max}}(\mbK+\mbS_2)+\sigma}}-\log{b},\log{\frac{\lambda_{\text{max}}(\mbK+\mbS_1)+\sigma}{\lambda_{\text{max}}(\mbK+\mbS_2)+\sigma}}+\log{b}\right]$.

%******************************************************************
%******************************************************************
\section{Experiments} \label{sec:experiments}
%******************************************************************
%******************************************************************
%------------------------
\begin{table}[t]
\centering
\scalebox{0.75}{
\begin{tabular}{c|c|cccccccc}
\toprule
Dataset & \textbf{Hyper-Parameter $\lambda$} & 50 & 10 & 5 & 2 & 1 & 0.5 & 0.1 & 0.0 \\
\midrule
\multirow{6}{*}{\textbf{CoLA}}  
% & \textbf{Deviation $\|\btheta - \btheta_0\|$ (LoRA)} & 0.026 & 0.090 & 0.099 & 0.177 & 0.2327 & 0.3013 & 0.3614 & 0.440 \\
& \textbf{ $\|\btheta_t - \btheta_0\|_2$} & 0.280 & 0.350 & 0.404 & 0.5263 & 0.6148 & 0.6946 & 0.8223 & 0.960 \\
& $\|f_{\btheta_t}(\mbX)-\bar{f}_{\bar{\btheta_t}}(\mbX)\|_2$ & 1.06 & 1.12 & 1.39 & 1.25 & 1.27 & 1.32 & 1.28 & 1.47 \\
& \textbf{KL Divergence} & 0.1060 & 0.1377 & 0.200 & 0.1613 & 0.1788 & 0.1961 & 0.1599 & 0.210 \\
& \textbf{Evaluation Accuracy of $f_{\btheta_t}(\mbx)$} & 74.59 & 79.57 & 80.44 & 79.38 & 80.24 & 80.15 & 80.15 & 79.67 \\
\midrule
\multirow{6}{*}{\textbf{SST-2}} 
% & \textbf{Deviation $\|\btheta - \btheta_0\|$ (LoRA)} & 0.035 & 0.0616 & 0.0797 & 0.118 & 0.174 & 0.259 & 0.609 & 1.156 \\
& \textbf{ $\|\btheta_t - \btheta_0\|_2$ } & 0.292 & 0.336 & 0.369 & 0.424 & 0.520 & 0.700 & 1.589 & 2.519 \\
& $\|f_{\btheta_t}(\mbX)-\bar{f}_{\bar{\btheta_t}}(\mbX)\|_2$& 1.712 & 2.303 & 2.635 & 2.957 & 3.217 & 3.331 & 3.397 & 2.791 \\
& \textbf{KL Divergence} & 0.320 & 0.433 & 0.476 & 0.517 & 0.545 & 0.560 & 0.578 & 0.540 \\
& \textbf{Evaluation Accuracy of $f_{\btheta_t}(\mbx)$} & 89.3 & 91.2 & 91.5 & 92.4 & 92.8 & 93.0 & 92.4 & 91.6 \\
\bottomrule
\end{tabular}
}\vspace{0.1in}
\caption{\small Sweep over the hyperparameter (\(\lambda\)). Increasing regularization strength, i.e., larger \(\lambda\), reduces the deviation between the regularized fine-tuning and linearized models at one snapshot of fine-tuning at step $t$. Accuracy is largely unaffected by regularization.}
\label{tab:linearization}
\vspace{-12pt}
\end{table}

%------------------------------
%----------------------------
\begin{table*}[b]
% \centering
\begin{center}
\scalebox{.75}{
\setlength{\tabcolsep}{4pt}
\begin{tabular}{c|c|ccccccccc}
% \begin{tabular}{l|l|lllllllll} 
\hline
\textbf{Dataset} & \textbf{Selected Layers} & \textbf{Selected Parameters} & \textbf{Condition Number} & \textbf{Train Loss} & \textbf{Evaluation Loss} & \textbf{Evaluation Accuracy} \\
\hline
\multirow{4}{*}{\textbf{CoLA}}& \{0\} & $\{\mbW_q,\mbW_v\}$& 11,618 & 0.5271 & 0.5265&73.15  \\ 
& \{0,11\} &$\{\mbW_q,\mbW_v\}$& 9,490& 0.5174& 0.5293& 73.15\\
& \{0,5,11\}& $\{\mbW_q,\mbW_v\}$ & 7,503& 0.5093 & 0.5272  & 73.44 \\
\cline{2-7}
& \{0,5,11\}& $\{\mbW_k\}$ & 2,320 &0.5128  & 0.5357  &73.25  \\
\hline
\multirow{4}{*}{\textbf{SST-2}}& \{0\}& 
$\{\mbW_q,\mbW_v\}$ & 5,195 & 0.4794 & 0.3871& 83.48  \\
& \{0,11\}&$\{\mbW_q,\mbW_v\}$& 6,413 & 0.4677  & 0.3916 & 82.79\\
& \{0,5,11\}& $\{\mbW_q,\mbW_v\}$& 6,792 &0.5078  & 0.3913  & 82.68\\
\cline{2-7}
& \{0,5,11\}& $\{\mbW_k\}$ & 450.64&0.4717 &0.3893 &83.60\\
\hline
\multirow{4}{*}{\textbf{Yelp}}& \{0\} & $\{\mbW_q,\mbW_v\}$&274 &0.29& 0.2597& 88.20 \\ 
& \{0,11\} &$\{\mbW_q,\mbW_v\}$&4,167& 0.2882&0.2596&88.24\\
& $\{0,5,11\}$& $\{\mbW_q,\mbW_v\}$&1,336 & 0.2885 & 0.2596&88.21\\
\cline{2-7}
& \{0,5,11\}& $\{\mbW_k\}$ &39.33 &0.2865&0.2596 &88.23\\
\hline
\multirow{4}{*}{\textbf{IMDb}}& \{0\} & $\{\mbW_q,\mbW_v\}$& 179  &0.3512& 0.2702& 89.56   \\ 
& \{0,11\} &$\{\mbW_q,\mbW_v\}$&5,899& 0.3597&0.2717&89.50 \\
& $\{0,5,11\}$& $\{\mbW_q,\mbW_v\}$& 1,277& 0.3709 &0.2727 &89.49\\
\cline{2-7}
& \{0,5,11\}& $\{\mbW_k\}$ & 9.605 &0.3642& 0.2719 &89.49\\
\hline
\end{tabular}}
\caption{\small RoBERTa-base model's performance on GLUE tasks, condition number of the NTK, train loss, and evaluation loss at one snapshot of the training  at $10$-th epoch. LoRA with $r=8$ is used for fine-tuning. The condition number is calculated as $\kappa(\mbK+\sigma \mbI)=\frac{\lambda_{\text{max}}(\mbK)+\sigma}{\lambda_{\text{min}}(\mbK)+\sigma}$, and $\sigma=1e^{-4}$ is fixed among all tasks.}
\label{tab:roberta_glue}
\end{center}
\end{table*}
%-----------------------------------------------------------------------------------
We now demonstrate that insights from our theory apply in practice, despite moving from squared loss to cross-entropy and from gradient descent to the AdamW optimizer, which we use for all fine-tuning in conjunction with the  selective regularization \eqref{eq::cases}. In our experiments, we implement LoRA on RoBERTa base and evaluate its performance on the General Language Understanding Evaluation (GLUE) benchmark \cite{wang2019glue}, IMDb \cite{maas2011learning}, and Yelp \cite{zhang2015character} datasets.
\subsection{Model, Datasets and Optimizer}
RoBERTa base incorporates several key modifications to the pretraining process, such as using larger batch sizes, longer sequences, and more diverse data than its antecedents like BERT \cite{devlin2019bert}. Despite its relatively compact size of $125$M parameters, RoBERTa base has proven to be one of the most powerful models for various NLP tasks, including text classification, question answering, and named entity recognition, especially on the GLUE benchmark.

The GLUE benchmark is a collection of diverse tasks that test a model's natural language understanding abilities.
The tasks included in our experiments are linguistic acceptability judgment (CoLA \cite{warstadt2018neural}) and sentiment analysis (SST-2 \cite{socher2013recursive}).
% These tasks include textual entailment recognition (MNLI \cite{williams2018broad}, RTE), sentiment analysis (SST-2 \cite{socher2013recursive}), paraphrase detection (MRPC \cite{dolan2005automatically}, QQP), linguistic acceptability judgment (CoLA \cite{warstadt2018neural}), question-answer entailment (QNLI \cite{rajpurkar2018know}), and semantic textual similarity (STS-B \cite{cer2017semeval}). 
The GLUE benchmark provides a comprehensive evaluation of a model's performance across various NLP challenges, assessing its ability to understand and reason about language in different contexts.

The IMDb dataset is a large dataset for binary sentiment classification, containing $50$k highly popular movie reviews from the Internet Movie Database (IMDb). The Yelp dataset contains customer reviews from Yelp, a popular platform for crowd-sourced reviews about businesses, primarily restaurants. This dataset originally contains reviews with ratings from 1 to 5. To convert it into a binary classification task, we consider reviews with ratings less than 3 as label 0 (negative sentiment) and those with ratings greater than or equal to 3 as label 1 (positive sentiment).

Table \ref{tab:supp1} in Appendix \ref{app:hyperparameters} shows specific hyperparameters for RoBERTa base across various benchmarks, including GLUE tasks (CoLA, SST-2), Yelp, and IMDb. For all experiments, we use LoRA on the RoBERTa-base model from the Hugging Face transformers library \cite{wolf2020transformers}, and report its performance on different tasks. We implemented our code on NVIDIA Tesla V100 GPUs.
%By leveraging this state-of-the-art pretrained language model and the LoRA adaptation method, we demonstrate its effectiveness and versatility in handling a wide range of NLP challenges while also highlighting the benefits of its optimized pretraining process and efficient architecture.
Following Hu et al. ($2022$), we mostly use the weights of the query and value layers, $\mbW_q \in \mathbb{R}^{m \times p}$ and $\mbW_v \in \mathbb{R}^{m \times p}$ for fine-tuning. In our experiments, we apply LoRA with $r=8$, which has $(m+p)\times r=2 \times 768 \times 8$ trainable parameters per selected layer for each of the query, key, and value projection matrices in the self-attention mechanism in the RoBERTa base model.

%------------------------------------------------------------------------------------
\begin{figure*}[htp]
\vspace{-10pt}
    \centering
\begin{subfigure}[t]{0.32\textwidth}
\centering
\includegraphics[width=1\textwidth]{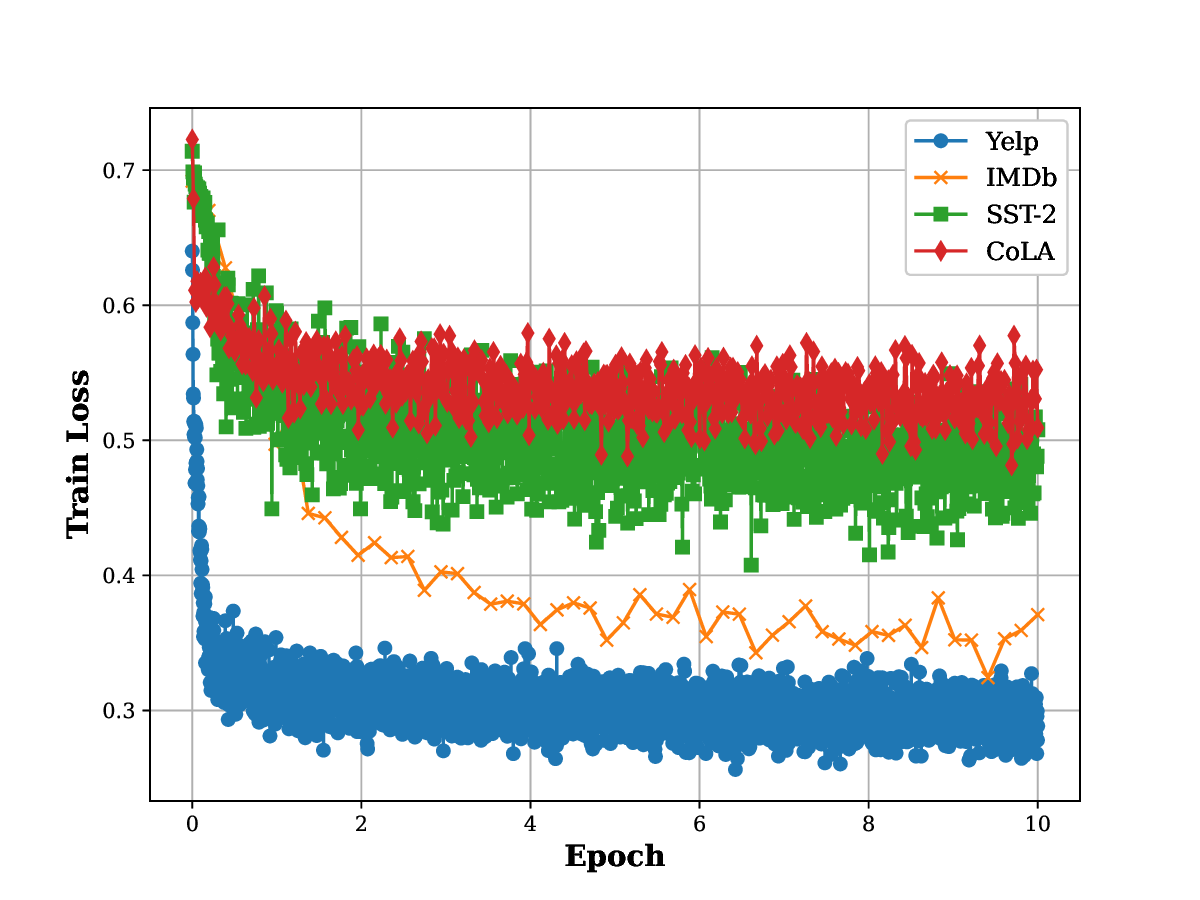}
\caption{Train loss over 10 epochs}
\end{subfigure}%
~ \begin{subfigure}[t]{0.32\textwidth}
\centering
\includegraphics[width=1\textwidth]{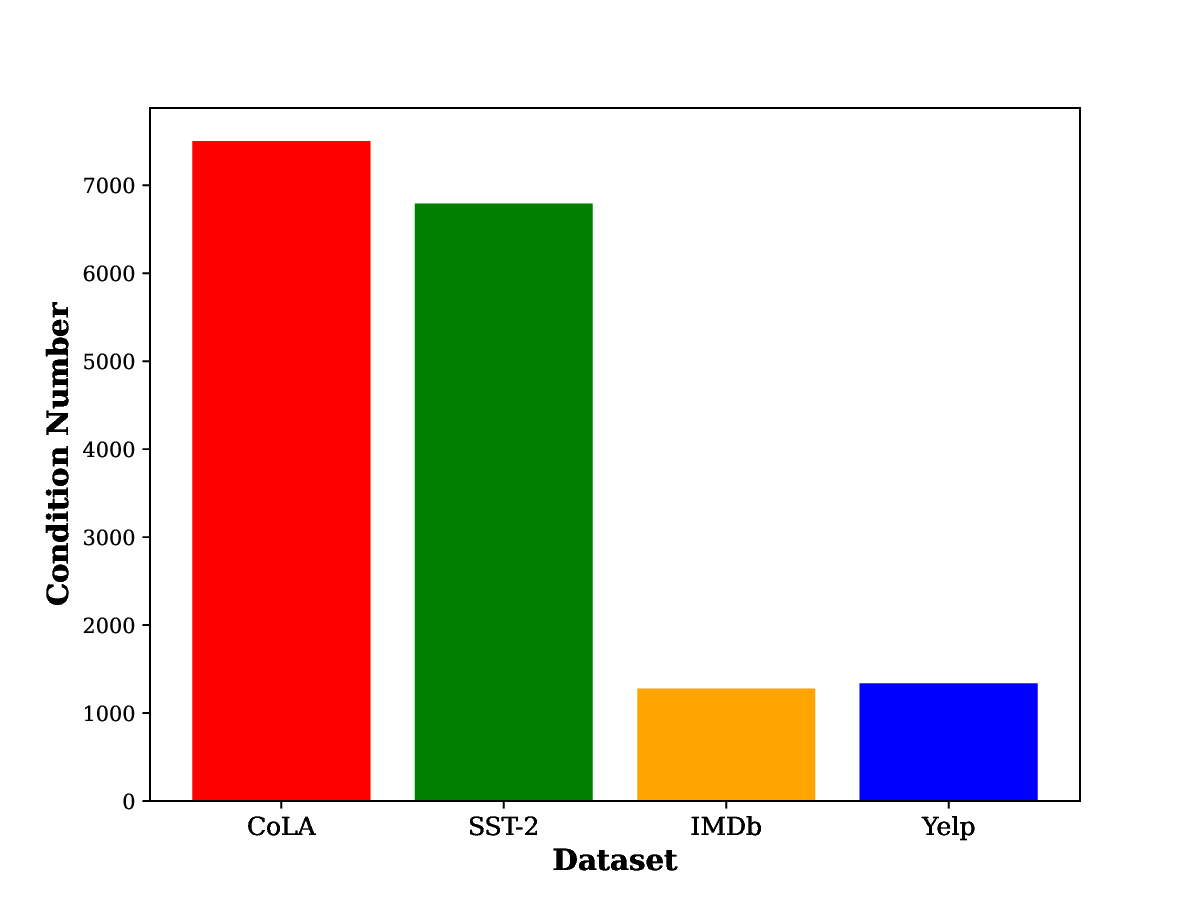}
\caption{Condition number}
\end{subfigure} 
~ \begin{subfigure}[t]{0.32\textwidth}
\centering
\includegraphics[width=1\textwidth]{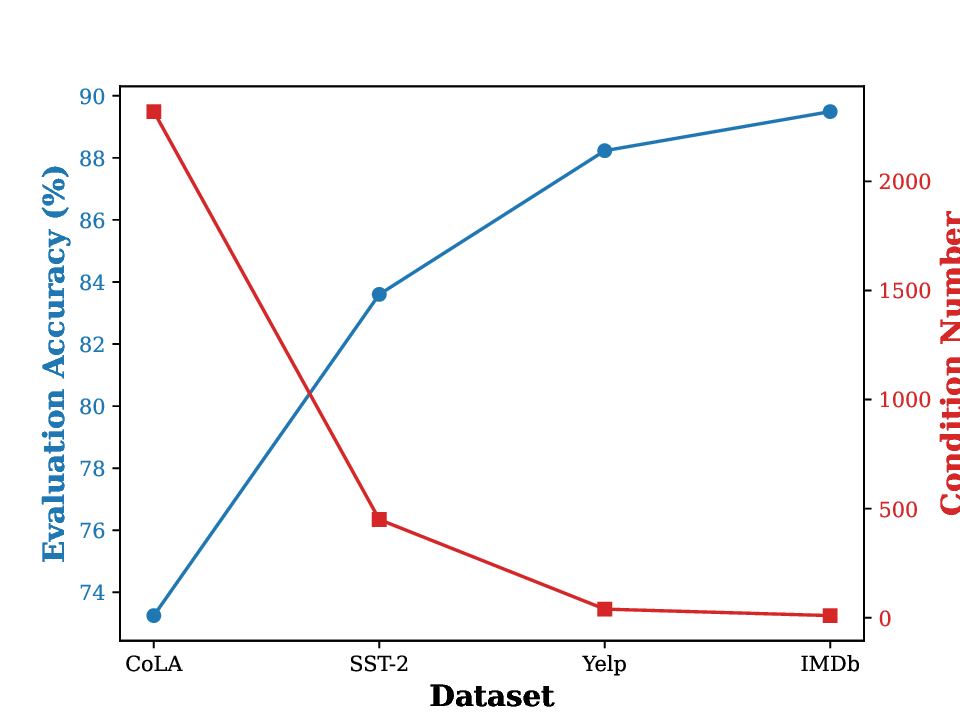}
\caption{Evaluation accuracy and Condition number}
\end{subfigure} 
\vspace{-8pt}
\caption{\small (a)-(b) Illustrate the positive correlation between the convergence rate of optimization steps of LoRA over $10$ epochs and $\kappa(\mbK+\sigma \mbI )$ of NTK at initialization. $\{\mbW_q,\mbW_v\}$ of layers $\{0,5,11\}$ are fine-tuned. (c) Illustrates the negative correlation between evaluation accuracy  after 10 epochs of  training and the condition number of  NTK. LoRA with $r=8$ is used to fine-tune $\{\mbW_k\}$ of the layers $\{0,5,11\}$.}
\label{fig:sgd_lora_correlation}
\vspace{-8pt}
\end{figure*}
%----------------------------------------
%******************************************************************
\subsection{Linearization of Fine-Tuning}
%******************************************************************
In Table~\ref{tab:linearization}, we analyze how regularization impacts the fine-tuning of models on the CoLA and SST-2 datasets, using several metrics, including the deviation of fine-tuned parameters from the pretrained parameters, $\|\btheta_t - \btheta_0\|$, KL divergence between predictions of the regularized fine-tuned model and its linearized counterpart, the $\ell_2$ norm $\|f_{\btheta_t}(\mbX)-\bar{f}_{\bar{\btheta_t}}(\mbX)\|_2$, evaluation accuracy. The results indicate that increasing the regularization strength, $\lambda$, reduces both the deviation in parameters and prediction divergence, measured by KL divergence and $\|f_{\btheta_t}(\mbX)-\bar{f}_{\bar{\btheta_t}}(\mbX)\|$, implying that stronger regularization keeps the fine-tuning behavior closer to its linearized approximation. Conversely, as regularization weakens, $\lambda \rightarrow 0$, parameter deviations and divergences from linearization increase, suggesting a more significant departure from the linear regime.
 This is well aligned with the bounds in  Theorems~\ref{thm::theta_bound} and~\ref{thm::final}. Notably, the overall accuracy remains largely unaffected by the regularization, and its fluctuations are within the error margin for $\lambda$ up to 10.
 
 The key takeaways from Table 1 are that, as long as we don't overregularize ($\lambda$
 not too large, e.g., smaller than 50), we simultaneously: (1) encourage the model not to deviate (the direct objective of regularization), (2) encourage the NTK solution and the regularized fine-tuning solutions to become closer (the indirect main objective of regularization), and (3) do not sacrifice much on performance. We conjecture that the reason we don't sacrifice much on performance is that the fine-tuning solution is already relatively close to the initial model and that there are equivalently good NTK solutions in the same neighborhood. (The choice of  $\lambda$
 is similar to searching for solutions within sequentially smaller neighborhoods around the initialization, as $\lambda$
 increases.)
 The trifecta of easy implementation, no performance sacrifice, and theoretical compatibility with the NTK framework makes our proposed regularization an especially attractive choice.
%******************************************************************
\subsection{NTK Evaluation}
%******************************************************************
We verify our proposition that by calculating the condition number of the NTK matrix for the LoRA model at initialization, we can predict the generalization error, including evaluation loss and accuracy. Table \ref{tab:roberta_glue} presents training loss, evaluation loss, accuracy, condition number of the NTK before fine-tuning, based on the snapshot at epoch 10. We vary LoRA parameters across different layers (\{0\}, \{11\}, \{0,11\}, \{0,5,11\}) for query and value parameters, and layers \{0,5,11\} for key parameters, across various tasks and datasets. We collected $\mbX=\left[\mbx_1, \mbx_2, \ldots, \mbx_n\right]^\T$ using $n=32$ samples, randomly selected from the  training  datasets and  computed $\mbk(\mbx_i, \mbx_j)$  with respect to trainable parameters, $\mbA$ and $\mbB$ of LoRA. The  final  empirical NTK  matrix is $\mbK(\mbX,\mbX) \in \mathbb{R}^{32 \times 32}$. 
Our work is not specifically studying optimal sketching of the kernel matrix; however, in  Appendix~\ref{app:robustness}, we empirically illustrate that our numerical results are robust to the choice of NTK samples.
Note that the number of  samples used for calculation of the empirical NTK is  orders of magnitude smaller than the training  dataset for  sampling. This shows that the results remain valid even with a sketch of the full kernel. This finding highlights the great potential of kernel methods for large language models (LLMs), particularly in terms of efficiency.

 At first glance, in Table~\ref{tab:roberta_glue}, the highest condition number does not correspond to the lowest accuracy. However, one may notice that this is an issue primarily when the nature (query/value vs. key) and number of parameters change . When this is the case, the effective complexity of the fine-tuned model changes, and since the NTK parameter $\sigma$ is inversely proportional to the complexity, it affects the regularized condition number. In the table, we have fixed $\sigma$ and so this change is not being reflected and it is only fair to compare instead cases where the nature and number of parameters are the same, e.g., fixing a layer and varying tasks among CoLA, SST2, IMDB, YeLP. The correlation is perfect for $\{0,11\}$, but also mostly tracks for other fixed choices. This also holds when varying the same number of layers like  $\{0,7\}$, $\{0,11\}$ for CoLA and SST2 in Fig.~\ref{fig:sgd_th4}. It would be very interesting to factor the change of $\sigma$ into the layer selection methodology, but that is a great direction for future research. More extensive numerical results for decoder-only models,  such as GPT-2 and OPT-125M, are provided in Appendix~\ref{app:Additional-experiments}.
 
Training time and NTK calculation time are reported in Table~\ref{tab:times} for fine-tuning $\{\mbW_k\}$ in layers $\{0,5,11\}$. For this  scenario, the number of total trainable parameters (TTPs) is $0.628$M, and the number of chosen NTK parameters is $36.8$k.
As shown in the table, fine-tuning, even with just $10$ epochs, has significantly higher computational overhead  than computing the NTK. This finding supports the advantage of the present approach in terms of time complexity when comparing the risks of  different datasets without training. Additionally, since Yelp and IMDb are larger datasets, it is evident that fine-tuning on them requires more time compared to the others.
Figure~\ref{fig:sgd_lora_correlation}(a)-(b) illustrates the positive correlation between the condition number of the NTK matrix at initialization and training loss for different tasks. In all datasets, the attention parameters of layers $\{0,5,11\}$ are fine-tuned, and evaluation accuracy was reported. Although for CoLA, it is customary to report Matthew's correlation coefficients \cite{matthews1975comparison}, we adhere to reporting the evaluation accuracy for all tasks in Figure \ref{fig:sgd_lora_correlation}(c), to maintain consistency in the evaluation metric across different datasets. Figure \ref{fig:sgd_lora_correlation}(c) starkly illustrates an inverse relationship between the condition number of the NTK and the model's evaluation accuracy. In our experiments, we observed that $\lambda_{\text{min}}(\mbK)$ is almost always close  to zero and  the regularized condition number, $\kappa(\mbK+\sigma \mbI)$, is  tracing the spectral norm or  $\lambda_{\text{max}}(\mbK)$. For instance, the CoLA task, which exhibits the highest training loss, also shows the largest condition number.
This suggests that by computing the NTK matrix before training, we can identify which tasks are well-conditioned, i.e., a lower condition number indicates lower training and evaluation loss. 
\begin{wraptable}{r}{0.5\textwidth} 
 \centering
\scalebox{0.8}{
\setlength{\tabcolsep}{2pt}
\begin{tabular}{c|c|c}
\hline
\textbf{Dataset} & \textbf{Fine-tuning Time} & \textbf{NTK Calculation Time} \\
\hline
\multirow{1}{*}{\textbf{CoLA}}   &         187             &     33                       \\
\hline
\multirow{1}{*}{\textbf{SST-2}}      &     794          &          63      \\
\hline
\multirow{1}{*}{\textbf{Yelp}}    &        46,096       & 245       \\
\hline
\multirow{1}{*}{\textbf{IMDb}}   &      1,541           &          55                 \\
\hline
\end{tabular}
}
\setlength{\belowcaptionskip}{-6pt}
\caption{\small Fine-tuning time(s), NTK calculation time(s), $\{\mbW_k\}$ of layers $\{0,5,11\}$ are fine-tuned. In all datasets, only 32 random samples from the training set are used calculating the NTK.} \label{tab:times}
% \vspace{-8pt}
\end{wraptable}
Figure~\ref{fig:sgd_th4} in Appendix~\ref{section::layer_selection} evaluates the bounds of Theorem~\ref{th:th4}, where \( \btheta \) represents the weights \( \{\mbW_k\} \) of layer $\{0\}$, and \( \hat{\btheta} \) denotes the candidate layers. The empirical risk ratio and maximum eigenvalue ratio closely follow each other in most cases.
Theorem 4 conveys that the spectrum of the NTK directly affects the empirical risk. Therefore, we propose to select the layers based on the eigenvalues they induce in the NTK. For instance, as seen in Figure \ref{fig:sgd_th4}, for the CoLA dataset, the eigenvalue analysis at-initialization suggests that layers $\{0,7\}$ should be selected for fine-tuning, and it also shows the least empirical risk compared to others. Similarly, for SST-2, $\{0,5,11\}$ is the optimal choice by eigenvalues and is once again consistent with the empirical risk after fine-tuning.

% \textcolor{blue}{REVIEWER comments: Furthermore, there are key details lacking in the manuscript. What was the training recipe used in Section 6? This question becomes extremely important since AdamW has become the workhorse of LLM fine-tuning, yet SGD is assumed in the paper (Equation 3). Given the lack of non-trivial extension of the results to Adam, this further limits the significance of the results.}
% \textcolor{blue}{We deploys SGD dynamics to  develop our theories on NTK regression. Hoeever in practice ususally  Adam optimizer is  used for fine  tuning.  The equivalent NTK for Adam optimizer should be  included in  Appendix?}
%-----------------------------------------------------------------------------------
%******************************************************************
%******************************************************************
\section{Conclusion and Limitations}\label{sec::conc}
%******************************************************************
%******************************************************************
In this paper, we tackled the challenge of guiding design decisions in parameter-efficient fine-tuning by offering insights through linearization. Fine-tuned models are often implicitly close to the pre-trained model. We made this proximity explicit and showed that it results in a model that remains close to the linearization of  the pre-trained model. We used the linear model's NTK regression formulation to show that the NTK kernel spectrum can be a predictor of fine-tuning performance, at-initialization (before actual tuning). We then analyzed how this spectrum is affected through layer selection and showed how this can experimentally guide the decision of which layers to tune, saving valuable time and computational resources. Some of the limitations of the theory are that it applies to squared loss and gradient descent, though the experiments show that the insights carry to other losses (such as cross-entropy) and stochastic solvers (such as AdamW).  
The experiments on RoBERTa with LoRA across multiple datasets (GLUE, IMDb, and Yelp tasks) demonstrate reasonable correspondence between theory and practice, despite some theoretical assumptions differing from experimental conditions. Discussion on the  implications of the assumptions is provided in Appendix~\ref{sec:assumptions}.

\begin{ack}
This paper is based upon work supported in part by the National Science Foundation, through the NSF CAREER Program under Award No. CCF-2146334 (From Rare Events to Competitive Learning Algorithms), the NSF HDR TRIPODS Phase II Program under Award No. ECCS-2217023 (IDEAL Institute). Computational infrastructure was supported in part by the NSF MRI Program Award No. CNS-1828265 (COMPaaS DLV).
\end{ack}

\newpage
\bibliography{refs}
\bibliographystyle{IEEEtran}
\newpage
\appendix
\section{Definitions and Lemmas} \label{App::A}
In all the following proofs, we apply the notations and definitions below. Given the data matrices $(\mbX,\mbY)$, $\mbX\in \mathbb{R}^{n\times d}$ and $\mbY \in \mathbb{R}^{1\times n}$ we define the models inference at time step $t$ as 
\begin{align}\label{eq::yt}
\mbY(t)= f_{\btheta_t}(\mbX) \in \mathbb{R}^{n\times 1}, 
\end{align}

The gradient $\nabla_{\theta_t} f_{\btheta_t}(\mbX)$, by convention, is in $\mathbb{R}^{n\times \mathsf{dim}(\btheta)}$. The  gradient of \eqref{eq::loss}, the mean squared error, is
\begin{align} \label{eq::nabla_loss_theta}
\nabla_{\btheta_t} \widetilde{\calR}(\btheta_t) &= 2 \,\left( \mbY(t) - \mbY \right)^\T\nabla f_{\btheta_t}(\mbX) .%\\ \nabla_{y(t)} \widetilde{\calR}(\btheta_t) &= 2 \left( \mby(t) - \mbY \right)^\top.  \label{eq::grad_theta}
\end{align}
On the other hand, the inference of the  linearized model is defined as 
\begin{align}
\bar{\mbY}(t)= \bar{f}_{\bar{\btheta}_t}(\mbX)=f_{\btheta_{0}}(\mbX)+\left\langle\nabla f_{\btheta_{0}}(\mbX), \bar{\btheta}_{t}-\btheta_{0}\right\rangle,
\end{align}
where $\bar{\btheta}$ denotes the parameters of the linearized model, and we  assume $\bar{\btheta}_0=\btheta_0$ and therefore $\bar{f}_{\bar{\btheta}_0}(\mbX)=f_{\btheta_0}(\mbX)$.

For all vectors, we use the $\ell_2$-norm. For all matrices, we use the $\ell_2$ induced norm, or spectral norm, defined as $\Vert A \Vert = \sup \{ \Vert A v \Vert_2 ~:~\Vert v \Vert_2 \leq 1 \}$, which is also equal to $\sup \{ \Vert u A \Vert_2 ~:~\Vert u \Vert_2 \leq 1 \}$ and the largest singular value of $A$.

%%%%%%%%%%%%%%%%%%%%%%%%%%%%%%%%%%%%%%%%%%%%%%%%%%%%%%%%%%%%%%%%%%%%%%
\begin{lemma}\label{lemma::lipk}
%%%%%%%%%%%%%%%%%%%%%%%%%%%%%%%%%%%%%%%%%%%%%%%%%%%%%%%%%%%%%%%%%%%%%%
If for all $\btheta$ in a given range, $f_{\btheta}(\mbX)$ is Lip$(f)$-Lipschitz in $\btheta$ (w.r.t.~the $\ell_2$-norm) and $\nabla f_{\btheta}(\mbX) $ is Lip$( \nabla f)$-Lipschitz in $\btheta$ (w.r.t.~the $\ell_2$ induced norm), then $\mbk_{\btheta}\left(\mbX, \mbX\right)=\nabla f_{\btheta}\left(\mbX\right) \nabla f_{\btheta}(\mbX)^{\T} \in \mathbb{R}^{n\times n}$ is Lip$(k)$-Lipschitz in $\btheta$ (w.r.t.~the $\ell_2$ induced norm), with 
\begin{equation}
\text{Lip}(\mbk) \leq  2\, \text{Lip}(f)\, \text{Lip}(\nabla f).
\end{equation}
\end{lemma}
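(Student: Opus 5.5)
The plan is the standard product-rule telescoping for $J_{\btheta}J_{\btheta}^{\T}$, where we abbreviate $J_{\btheta} := \nabla f_{\btheta}(\mbX) \in \mathbb{R}^{n\times \mathsf{dim}(\btheta)}$. Fix two parameter values $\btheta,\btheta'$ in the given range. Adding and subtracting the cross term $J_{\btheta}J_{\btheta'}^{\T}$ gives
\begin{align*}
\mbk_{\btheta}(\mbX,\mbX)-\mbk_{\btheta'}(\mbX,\mbX) = J_{\btheta}\left(J_{\btheta}-J_{\btheta'}\right)^{\T} + \left(J_{\btheta}-J_{\btheta'}\right)J_{\btheta'}^{\T}.
\end{align*}
We will take spectral norms and use submultiplicativity together with $\Vert A^{\T}\Vert=\Vert A\Vert$. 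This yields
\begin{align*}
\Vert \mbk_{\btheta}-\mbk_{\btheta'}\Vert \leq \left(\Vert J_{\btheta}\Vert+\Vert J_{\btheta'}\Vert\right)\Vert J_{\btheta}-J_{\btheta'}\Vert \leq \left(\Vert J_{\btheta}\Vert+\Vert J_{\btheta'}\Vert\right)\text{Lip}(\nabla f)\,\Vert\btheta-\btheta'\Vert,
\end{align*}
where the last step is exactly the assumed Lipschitz property of $\nabla f_{\btheta}(\mbX)$ in the induced norm.

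The remaining step is to bound $\Vert J_{\btheta}\Vert\leq \text{Lip}(f)$ uniformly over the range. This is where some care is needed, and it is the only real obstacle. We plan to argue via directional derivatives. For any unit vector $v$ and any small $\epsilon>0$ with $\btheta+\epsilon v$ still in the range (the range, an $\ell_2$-ball, is open or convex, so this is possible at interior points and follows by continuity of $J$ at boundary points), the Lipschitz property of $f$ gives
\begin{align*}
\Vert f_{\btheta+\epsilon v}(\mbX)-f_{\btheta}(\mbX)\Vert\leq \text{Lip}(f)\,\epsilon .
\end{align*}
Dividing by $\epsilon$ and letting $\epsilon\to 0$, differentiability gives $\Vert J_{\btheta}v\Vert_2\leq \text{Lip}(f)$. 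Taking the supremum over unit $v$ yields $\Vert J_{\btheta}\Vert\leq\text{Lip}(f)$, matching the paper's definition of the spectral norm.

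Combining the two bounds gives $\Vert \mbk_{\btheta}-\mbk_{\btheta'}\Vert\leq 2\,\text{Lip}(f)\,\text{Lip}(\nabla f)\,\Vert\btheta-\btheta'\Vert$, which is the claim. No earlier theorem is needed; only the definitions from this appendix are used.
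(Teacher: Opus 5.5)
Your proposal is correct and is essentially the paper's own proof: the same add-and-subtract split $\mbk_{\btheta}-\mbk_{\bzeta}=\nabla f_{\btheta}\bigl(\nabla f_{\btheta}-\nabla f_{\bzeta}\bigr)^{\T}+\bigl(\nabla f_{\btheta}-\nabla f_{\bzeta}\bigr)\nabla f_{\bzeta}^{\T}$, submultiplicativity of the spectral norm, and the same directional-derivative limit to get $\Vert\nabla f_{\btheta}\Vert\leq\text{Lip}(f)$. The only differences are cosmetic: you bound the matrix norm directly rather than its action on a unit vector, and you add a remark on boundary points of the range that the paper leaves implicit.
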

%----------------------------------------------------------------------
\begin{proof}
First, since $f_{\btheta}$ is $\text{Lip}(f)$-Lipschitz, it follows that the $\ell_2$ induced norm of $\nabla f_{\btheta}$ is bounded by $\text{Lip}(f)$. To see this, for any $\mathsf{dim}(\btheta)$-dimensional vector $\mathbf{w}$, introduce the single parametrized family $\btheta(\delta)=\btheta+ \delta \mathbf{w}$. Then $\frac{d}{d \delta} f_{\btheta(\delta)} = \nabla f_{\btheta(\delta)} \mathbf{w}$. By Lipschitzness, $\Vert f_{\btheta(\delta)}-f_{\btheta(0)}\Vert \leq \text{Lip}(f)\delta \Vert \mathbf{w}\Vert$. Taking the limit $\delta\to 0$ we obtain that $\Vert \nabla f_{\btheta} \mathbf{w} \Vert \leq \text{Lip}(f)\Vert \mathbf{w} \Vert$.

Recall that $\mbk_{\btheta}\left(\mbX, \mbX\right)$ is the NTK matrix. Since all arguments of $\mbk_{\btheta}$, $f_{\btheta}$, and $\nabla f_{\btheta}$ are $\mbX$, we omit them for clarity. To show that $\mbk_{\btheta}$ is Lipschitz and to compute its Lipschitz constant, given any $n$-dimensional unit vector $\mathbf{v}$, we need to bound
\begin{equation}
\|(\mbk_{\btheta} - \mbk_{\bzeta})\mathbf{v}\| = \|\left(\nabla f_{\btheta} \nabla f_{\btheta}^{\T} - \nabla f_{\bzeta} \nabla f_{\bzeta}^{\T}\right) \mathbf{v}\|.
\end{equation}
We write 
\begin{equation}
(\mbk_{\btheta} - \mbk_{\bzeta})\mathbf{v}=
\nabla f_{\btheta} \left( \nabla f_{\btheta}^{\T}-\nabla f_{\bzeta}^{\T} \right) \mathbf{v}
+  \left( \nabla f_{\btheta}- \nabla f_{\bzeta}\right)\nabla f_{\bzeta}^{\T} \mathbf{v},
\end{equation}
which yields, by the triangle inequality, norm bounds, and Lipschitzness, 
\begin{align}
\| (\mbk_{\btheta} - \mbk_{\bzeta})\mathbf{v}\| &\leq \left \|\nabla f_{\btheta} \left( \nabla f_{\btheta}^{\T}-\nabla f_{\bzeta}^{\T} \right) \mathbf{v} \right \| 
+  \left \|\left( \nabla f_{\btheta}- \nabla f_{\bzeta}\right)\nabla f_{\bzeta}^{\T} \mathbf{v} \right \|
\\
 & \leq \left \|\nabla f_{\btheta}\right \|  
\left \| \left(\nabla f_{\btheta}^{\T}-\nabla f_{\bzeta}^{\T}\right)\mathbf{v}  \right \| +
 \left \|  \nabla f_{\btheta}- \nabla f_{\bzeta}\right \| 
 \left \|\nabla f_{\bzeta}^{\T} \mathbf{v}\right \|
 \\
 & \leq \text{Lip}(f)  
\left \| \nabla f_{\btheta}-\nabla f_{\bzeta} \right \|+
 \text{Lip}(\nabla f) \| \btheta-\bzeta \| 
 \left \|\nabla f_{\bzeta} \right \|
 \\
  & \leq \text{Lip}(f)  
\text{Lip}(\nabla f) \| \btheta-\bzeta \|+
 \text{Lip}(\nabla f) \| \btheta-\bzeta \| 
 \text{Lip}(f)
 \\
 & = 2 \text{Lip}(f) \text{Lip}(\nabla f) \| \btheta-\bzeta \|,
\end{align}
which proves the claim.
% We assume  $f_{\btheta}(\mbx)$ and  $\nabla f_{\btheta}(\mbx)$ are Lipschitz continuous with constants  $\text{Lip}(f)$ and $\text{Lip}(\nabla f)$, respectively  i.e.
% \begin{align} 
% \|\nabla f_{\btheta}-\nabla f_{\bzeta}\| \leq \text{Lip}(\nabla f) \|\btheta - \bzeta\|
% \end{align}
% and noting that $\|\nabla f_{\btheta} \| \leq \text{Lip}(f)$, we obtain
% \begin{align}
% \| \mbk_{\btheta} - \mbk_{\bzeta}\| &\leq \text{Lip}(f) \text{Lip}(
% \nabla f) \|\btheta - \bzeta\| + \text{Lip}(f) \text{Lip}(
% \nabla f) \|\btheta - \bzeta\|  \nonumber\\
% &=2 \text{Lip}(f) \text{Lip}(
% \nabla f) \|\btheta - \bzeta\|.
% \end{align}
% Thus, the Lipschitz constant of $\mbk$ satisfies 
% \begin{equation}
% \text{Lip}(\mbk) \leq  2 \text{Lip}(f) \text{Lip}(
% \nabla f).
% \end{equation}
\end{proof}
%----------------------------------------------------------------------

%%%%%%%%%%%%%%%%%%%%%%%%%%%%%%%%%%%%%%%%%%%%%%%%%%%%%%%%%%%%%%%%%%%%%%
\begin{lemma}\label{lemm::psd}
%%%%%%%%%%%%%%%%%%%%%%%%%%%%%%%%%%%%%%%%%%%%%%%%%%%%%%%%%%%%%%%%%%%%%%
The NTK matrix $\mbK \in \mathbb{R}^{n\times n}$, defined by $[\mbK]_{i,j}=\mbk(\mbx_i,\mby_j)$, is positive semidefinite.
\end{lemma}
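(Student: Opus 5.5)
The plan is to exhibit $\mbK$ as a Gram matrix and then check the quadratic form directly. The statement writes $\mbk(\mbx_i,\mby_j)$, which I read as the intended $\mbk(\mbx_i,\mbx_j)$, since the kernel is evaluated on inputs.

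First, by the definition of the NTK in \S\ref{sec:ntk}, $[\mbK]_{i,j}=\nabla f_{\btheta_0}(\mbx_i)\nabla f_{\btheta_0}(\mbx_j)^{\T}$. Stacking the row vectors $\nabla f_{\btheta_0}(\mbx_i)$ into the Jacobian $\mathbf{J}=\nabla f_{\btheta_0}(\mbX)\in\mathbb{R}^{n\times \mathsf{dim}(\btheta)}$ (the convention fixed at the start of Appendix~\ref{App::A}) gives $\mbK=\mathbf{J}\mathbf{J}^{\T}$.

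Second, $\mbK$ is symmetric, because $(\mathbf{J}\mathbf{J}^{\T})^{\T}=\mathbf{J}\mathbf{J}^{\T}$.

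Third, for any $\mbv\in\mathbb{R}^n$,
\[
\mbv^{\T}\mbK\mbv=\mbv^{\T}\mathbf{J}\mathbf{J}^{\T}\mbv=\|\mathbf{J}^{\T}\mbv\|_2^2=\Big\|\sum_{i=1}^n v_i\,\nabla f_{\btheta_0}(\mbx_i)\Big\|_2^2\geq 0 .
\]
Hence $\mbK$ is positive semidefinite, and all of its eigenvalues are real and nonnegative. The same argument applies to each layer kernel $\mbS_l$, and therefore also to the sum $\mbK=\sum_l \mbS_l$ from \S\ref{sec:layers}.

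There is no genuine obstacle; the argument is a direct Gram-matrix computation. The one thing to keep straight is the notation, namely the $\mbx_j$ versus $\mby_j$ issue and whether the gradients are taken at $\btheta_0$ or at a generic $\btheta$. The argument holds for any fixed $\btheta$.

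It is also worth noting what the lemma does not give. It yields only semidefiniteness. Strict positive definiteness, which the inverse $\mbK^{-1/2}$ in Theorem~\ref{th:th3} needs, would require the gradient vectors $\nabla f_{\btheta_0}(\mbx_i)$ to be linearly independent. Otherwise one works with $\mbK+\sigma\mbI$, which is positive definite for $\sigma>0$.
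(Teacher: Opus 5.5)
Your proof is correct and matches the paper's argument: write $\mbK=\nabla f_{\btheta}\nabla f_{\btheta}^{\T}$ as a Gram matrix and observe that $\mbv^{\T}\mbK\mbv=\|\nabla f_{\btheta}^{\T}\mbv\|_2^2\geq 0$. Your version is also slightly cleaner than the paper's: it keeps the square on the norm, which the paper omits, and it reads $\mby_j$ as the intended $\mbx_j$.
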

%----------------------------------------------------------------------
\begin{proof} For all $\mbv \in \mathbb{R}^n$,  we have $\mathbf{v}^{\T} \mathbf{K} \mathbf{v} = \mathbf{v}^{\T} \nabla f_{\btheta} \nabla f_{\btheta} ^{\T} \mathbf{v} = \Vert \nabla f_{\btheta} ^{\T} \mathbf{v}\Vert \geq 0$. 
% \begin{align}
%  \mathbf{v}^\top \mathbf{K} \mathbf{v} &= \sum_{i=1}^{n} \sum_{j=1}^{n} v_i v_j [\mbK]_{i,j} \nonumber\\
%  &=\sum_{i=1}^{n} \sum_{j=1}^{n} v_i v_j \left (\sum_{l=1}^{\mathsf{dim}(\btheta)}\nabla_{\btheta^l} f_{\btheta}\left(\mbx_i\right) \nabla_{\btheta^l} f_{\btheta}(\mbx_j)^{\T} \right )\nonumber\\
%  &=\sum_{l=1}^{\mathsf{dim}(\btheta)} \left( \sum_{i=1}^{n}  v_i  \nabla_{\btheta^l} f_{\btheta}\left(\mbx_i\right) \right ) \left( \sum_{j=1}^{n} v_j \nabla_{\btheta^l} f_{\btheta}(\mbx_j)^{\T} \right) \nonumber \\
% &=\sum_{l=1}^{\mathsf{dim}(\btheta)}\Big\|\sum_{j=1}^{n} v_j \nabla_{\btheta^l} f_{\btheta}(\mbx_j)\Big\|^2_2 \geq 0.
%  \end{align}
Therefore, $\mbK$ is positive semi-definite.
\end{proof}
%----------------------------------------------------------------------
% \begin{lemma}\label{lemma::youngs}
% Young’s inequality by completing the square. For any $\eta>0$ and $a,b\ge 0$,
% \[
% 0\le\bigl(\sqrt{\eta}\,a-\tfrac{1}{\sqrt{\eta}}\,b\bigr)^2
% =\eta a^2-2ab+\tfrac{1}{\eta}b^2
% \;\Rightarrow\;
% 2ab\le \eta a^2 + \tfrac{1}{\eta}b^2.
% \]
% \end{lemma}

%%%%%%%%%%%%%%%%%%%%%%%%%%%%%%%%%%%%%%%%%%%%%%%%%%%%%%%%%%%%%%%%%%%%%%
\begin{lemma} \label{lemm::flow-ode}
    Let $\btheta_t$ be the gradient flow limit of the regularized fine-tuning gradient descent described in~\eqref{eq::cases}. If we assume that $\lambda$ switches at most countably often and denote its instantaneous value by $\lambda_t$, then $\btheta_t$ satisfies the following differential equation:
    \begin{equation}
        \frac{d}{dt} \, \btheta_t  = -\nabla \widetilde{\calR}(\btheta_{t})^{\T} -\lambda_{t} \left(\btheta_{t} - \btheta_0 \right). \label{theta_prime}
    \end{equation}
\end{lemma}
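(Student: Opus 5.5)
We would derive the ODE as the vanishing-step-size limit of the two-step scheme \eqref{eq::step1}, \eqref{eq::cases}. The first task is to merge the two half-steps into a single update. Substituting $\widetilde{\btheta}_t$ from \eqref{eq::step1} into \eqref{eq::cases} gives, on an iteration where the regularization is active ($\lambda_t=\lambda$),
\begin{align*}
\btheta_t-\btheta_{t-1} = -\eta\,\nabla\widetilde{\calR}(\btheta_{t-1})^{\T} - \lambda\eta\left(\btheta_{t-1}-\btheta_0\right) + \lambda\eta^2\,\nabla\widetilde{\calR}(\btheta_{t-1})^{\T}.
\end{align*}
On an iteration where it is inactive ($\lambda_t=0$), the update is just the plain gradient step. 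In both cases we can write
\begin{align*}
\btheta_t-\btheta_{t-1} = -\eta\left[\nabla\widetilde{\calR}(\btheta_{t-1})^{\T}+\lambda_{t-1}(\btheta_{t-1}-\btheta_0)\right]+O(\eta^2).
\end{align*}
The $O(\eta^2)$ term is uniform as long as $\nabla\widetilde{\calR}$ stays bounded. This holds inside the $\ell_2$-ball of radius $r$: by Lemma~\ref{lemma::lipk}'s argument, $\|\nabla f\|\le \text{Lip}(f)$, so \eqref{eq::nabla_loss_theta} gives $\|\nabla\widetilde{\calR}\|\le 2\,\text{Lip}(f)\,\|\mbY(t)-\mbY\|$.

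Next, we identify discrete step $k$ with continuous time $k\eta$ and form the piecewise-linear interpolation $\btheta^{\eta}(\cdot)$. On any time interval where $\lambda_t$ is constant, the right-hand side $F_\lambda(\btheta)=-\nabla\widetilde{\calR}(\btheta)^{\T}-\lambda(\btheta-\btheta_0)$ is locally Lipschitz. This follows because $\nabla f$ is Lipschitz, so $\nabla\widetilde{\calR}$ is Lipschitz on the ball. The standard Euler-scheme convergence argument (a discrete Gr\"onwall inequality on the error $\btheta^\eta(k\eta)-\btheta(k\eta)$, with local truncation error $O(\eta^2)$ per step) then shows that $\btheta^\eta$ converges uniformly to the solution of $\dot\btheta=F_\lambda(\btheta)$ on that interval.

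Since $\lambda_t$ switches at most countably often, the time axis splits into countably many intervals with a constant regime. We would concatenate the solutions, using the endpoint of one interval as the initial condition of the next, and continuity of $\btheta_t$ across switches. The result is an absolutely continuous $\btheta_t$ satisfying \eqref{theta_prime} for almost every $t$, namely everywhere except at the switching times. This is exactly the sense needed later, where the equation is integrated, e.g.\ via the integrating factor $e^{\Lambda_t}$ in Theorem~\ref{thm::theta_bound}.

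The main obstacle is the switching rule. The selector in \eqref{eq::cases} depends on the sign of $\nabla\widetilde{\calR}(\btheta_t)(\btheta_t-\btheta_0)$. Near a switching surface, the discrete iterates could chatter between the two regimes, and the limit would then be a Filippov-type convex combination rather than \eqref{theta_prime} with $\lambda_t\in\{0,\lambda\}$. We would sidestep this by taking the countable-switching hypothesis at face value as a hypothesis on the limiting trajectory: we assume the switching times $t_1<t_2<\dots$ are isolated in the limit, and for small $\eta$ the discrete switches occur within $O(\eta)$ of them. The per-interval Euler convergence then applies with only an $O(\eta)$ perturbation at each boundary, and this perturbation vanishes as $\eta\to 0$.
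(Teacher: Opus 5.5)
Your first step is the paper's own. The paper also fuses \eqref{eq::step1} and \eqref{eq::step2} into $\btheta_{t+1}=(1-\lambda\eta)\btheta_t-\eta(1-\lambda\eta)\nabla\widetilde{\calR}(\btheta_t)^{\T}+\lambda\eta\,\btheta_0$, which is your update including the $\lambda\eta^2\nabla\widetilde{\calR}$ term. From there the paper divides by $\eta$ and lets $\eta\to0$ in the difference quotient. In doing so it silently replaces the discrete $\lambda_{t+1}$ by the limiting $\lambda_t$, never uses the countable-switching hypothesis, and proves no convergence. Your per-regime Euler argument (Lipschitz field on the ball, $O(\eta^2)$ local error, discrete Gr\"onwall) is a genuine upgrade. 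It gives ``gradient flow limit'' an actual meaning and yields the a.e.\ form of \eqref{theta_prime} that is later integrated via $\Lambda_t$. Two minor points. The regime for the step $\btheta_{t-1}\to\btheta_t$ is the one you first call $\lambda_t$, not $\lambda_{t-1}$; this is harmless in the limit. Also, boundedness of $\nabla\widetilde{\calR}$ requires the iterates to stay in the $r$-ball, which you use only implicitly.

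There are two caveats on the switching step. First, ``isolated'' is strictly stronger than ``countable''. A $\{0,\lambda\}$-valued $\lambda_t$ with countably many switches has a closed, null switching set, but that set may accumulate (e.g.\ at the points $1/k$). Then the ordered concatenation $t_1<t_2<\cdots$ breaks down. You do not need the ordering. The interpolants are uniformly Lipschitz, so the presupposed limit is Lipschitz, and it suffices to check the ODE on each open component of the complement of the switching set. This works provided the discrete regime agrees with the limiting one on compact subsets of each component.

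Second, that proviso is exactly your chattering concern, and it is real; the paper's formal limit hides it. Let $g(\btheta)=\nabla\widetilde{\calR}(\btheta)(\btheta-\btheta_0)$, and consider an arc of $\{g=0\}$ where $F_\lambda$ points into $\{g<0\}$ and $F_0$ points into $\{g>0\}$. On such an arc the rule \eqref{eq::cases} gives $\lambda_t\equiv\lambda$ along the limit, so the lemma's hypothesis holds trivially. Yet the iterates chatter, and their limit is the Filippov sliding motion $\dot\btheta=-\nabla\widetilde{\calR}(\btheta)^{\T}-\alpha(\btheta)\lambda(\btheta-\btheta_0)$ with $\alpha(\btheta)\in(0,1)$, which violates \eqref{theta_prime}. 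So an extra hypothesis of your kind is genuinely needed. The most natural choice is transversal crossings of $\{g=0\}$, from which the $O(\eta)$ localization of the discrete switches can be derived rather than assumed. Present it as a strengthening of the lemma's hypothesis, not as a reading of it.
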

%----------------------------------------------------------------------
\begin{proof}
    Subtracting  $\btheta_0$ from both sides of~\eqref{eq::step2} yields
    \begin{equation}
    \btheta_t -\btheta_0 = \widetilde{\btheta}_t- \btheta_0 - \lambda \, \eta \left( \widetilde{\btheta}_t - \btheta_0 \right),
    \end{equation}
    or equivalently,~\eqref{eq::cases} becomes
    \begin{equation} \label{eq::update}
    \btheta_t =\btheta_0 +(1-\lambda_t \, \eta)\left (\widetilde{\btheta}_t-\btheta_0\right), 
    \end{equation}
    where  
    \begin{equation} \label{eq::cases_lambda}
    \lambda_t =
    \begin{cases}
    \lambda & \text{if } \nabla_{\btheta_t}\widetilde{\calR}(\btheta_t) \left(\btheta_{t} - \btheta_0\right) \geq 0, \\
    0 & \text{otherwise. }\
    \end{cases}.
    \end{equation}
    On the other hand,  by combining \eqref{eq::step1} and ~\eqref{eq::step2} for $t+1$ we have
    \begin{align}
    \btheta_{t+1} &= \btheta_{t} - \eta  \nabla \widetilde{\calR}(\btheta_{t})^{\T} - \lambda_{t+1} \,\eta\left(\widetilde{\btheta}_{t+1} - \btheta_0 \right)\label{eq::14q},\\
    &= \btheta_{t} - \eta  \nabla \widetilde{\calR}(\btheta_{t})^{\T} - \frac{\lambda_{t+1} \, \eta}{1- \lambda_{t+1} \, \eta} \left(\btheta_{t+1} - \btheta_0 \right),\label{eq::15q}
    \end{align}
    where we replaced $\widetilde{\btheta}_{t+1}- \btheta_0 = \frac{1}{1- \lambda_{t+1}\, \eta} \left(\btheta_{t+1} - \btheta_0\right)$ from \eqref{eq::update} in~\eqref{eq::14q} to obtain~\eqref{eq::15q}. By rearranging the terms we have 
    \begin{align}
    \btheta_{t+1} &= (1-\lambda_{t+1}\, \eta)\left(\btheta_{t} - \eta  \nabla \widetilde{\calR}(\btheta_{t})^{\T} + \frac{\lambda_{t+1} \,\eta}{1- \lambda_{t+1} \, \eta}\btheta_0 \right),\\
    &= (1-\lambda_{t+1} \,\eta)\btheta_{t} - \eta (1-\lambda_{t+1} \, \eta) \nabla \widetilde{\calR}(\btheta_{t})^{\T} + \lambda_{t+1} \, \eta \, \btheta_0.
    \end{align}
    Subtracting  $\btheta_t$ from both sides and  division by $\eta$  yields
    \begin{align}
    \frac{\btheta_{t+1} -\btheta_t}{\eta}&= -\lambda_{t+1} \btheta_{t} - (1-\lambda_{t+1} \, \eta) \nabla \widetilde{\calR}(\btheta_{t})^{\T}  +\lambda_{t+1} \btheta_0 ,\\
    \frac{d}{dt} \, \btheta_t  &= -\nabla \widetilde{\calR}(\btheta_{t})^{\T} -\lambda_{t} \left(\btheta_{t} - \btheta_0 \right). 
    \end{align}
    To obtain the last equation, we take the limit as \(\eta \to 0\) from both sides such that \(\frac{\btheta_{t+1} - \btheta_t}{\eta} \to \frac{d\btheta(t)}{dt}\) and \(1 -  \lambda\eta \to 1\).
\end{proof}
%----------------------------------------------------------------------

%%%%%%%%%%%%%%%%%%%%%%%%%%%%%%%%%%%%%%%%%%%%%%%%%%%%%%%%%%%%%%%%%%%%%%
%%%%%%%%%%%%%%%%%%%%%%%%%%%%%%%%%%%%%%%%%%%%%%%%%%%%%%%%%%%%%%%%%%%%%%
\section{Proof of Theorem ~\ref{thm::1}} \label{proofthm::1}
%%%%%%%%%%%%%%%%%%%%%%%%%%%%%%%%%%%%%%%%%%%%%%%%%%%%%%%%%%%%%%%%%%%%%%
%%%%%%%%%%%%%%%%%%%%%%%%%%%%%%%%%%%%%%%%%%%%%%%%%%%%%%%%%%%%%%%%%%%%%%
% The  gradient of \eqref{eq::loss} at $(\mbX, \mbY)$ is
% \begin{align} \label{eq::nabla_loss_theta}
% \nabla_{\btheta} \widetilde{\calR}(\btheta_t) &= 2 \,\left( \mby(t) - \mbY \right)^\T \, \nabla f_{\btheta_t}(\mbX).
% % \tempcolor{red} {2\left(\mby(t)-\mbY\right)}(\btheta_t) &= 2 \left( \mby(t) - \mbY \right)^\top.  \label{eq::grad_theta}
% \end{align} 
From Lemma~\ref{lemm::flow-ode}, we reproduce~\eqref{theta_prime}, assuming for the moment a constant $\lambda$:
\begin{align} \label{eq::18}
\frac{d}{dt} \, \btheta_t  &= -\nabla \widetilde{\calR}(\btheta_{t})^{\T} -\lambda \left(\btheta_{t} - \btheta_0 \right).
\end{align}
Using chain rule on \eqref{eq::yt}, and substituting for $\nabla \widetilde{\calR}(\btheta_{t})$ using~\eqref{eq::nabla_loss_theta}, we have 
\begin{align}
\frac{d}{dt} \, \mbY(t) &= \nabla \, f_{\btheta_t}(\mbX) \, \frac{d}{dt} \, \btheta_t \nonumber, \\
&= -2\, \nabla \, f_{\btheta_t}(\mbX) \,\nabla f_{\btheta_t}^{\T}(\mbX)\,\left(  \mbY(t) - \mbY \right) - \, \lambda \, \nabla \, f_{\btheta_t}(\mbX) \left(\btheta_{t} - \btheta_0 \right) \nonumber, \\
&= -\left( 2\, \mbk_t \left(  \mbY(t) - \mbY \right)  \, + \, \lambda \, \nabla f_{\btheta_t}(\mbX) \left(\btheta_{t} - \btheta_0 \right)\right), \label{eq::y_prime}
\end{align}
% where we used \eqref{eq::grad_theta} to obtain the 
% last equality. 
To reduce clutter, we again remove the explicit dependency of the kernel on input and  denote $\mbk_t(\mbX,\mbX)=\nabla \, f_{\btheta_t}(\mbX) \,\nabla f_{\btheta_t}^{\T}(\mbX)$ by $\mbk_t$, hereafter. 
By replacing $\mbY(t)=f_{\btheta_t}(\mbX)$ and~\eqref{eq::y_prime} in below equation, we have 
\begin{align} \label{eq::33}
&\frac{1}{2} \frac{d}{dt}\| \mbY(t) - \mbY \|^2_2 \;=\; \frac{1}{2} \,\frac{d}{dt} \Bigl((\mbY(t) - \mbY)^\top (\mbY(t) - \mbY)\Bigr)
\;=\; (\mbY(t) - \mbY)^\top \, \frac{d}{dt}\mbY(t),\\ 
&= -2(\mbY(t) - \mbY)^\top \mbk_t \, (\mbY(t) - \mbY) - \lambda \, (\mbY(t) - \mbY)^\T \, \nabla \, f_{\btheta_t}(\mbX) \left(\btheta_{t} - \btheta_0 \right),\\
&= -2 \langle \mbk_t \, (\mbY(t) - \mbY), \, (\mbY(t) - \mbY) \rangle - \lambda \, (\mbY(t) - \mbY)^\T \, \nabla \, f_{\btheta_t}(\mbX) \left(\btheta_{t} - \btheta_0 \right).
\end{align}
Finally,
\begin{align}
\frac{1}{2} \frac{d}{dt}\| \mbY(t) - \mbY \|^2_2 \;=-2 \langle \mbk_t \, (\mbY(t) - \mbY), \, (\mbY(t) - \mbY) \rangle - \frac{\lambda}{2} \, \nabla_{\btheta} \widetilde{\calR}(\btheta_t) \left(\btheta_{t} - \btheta_0\right).\label{eq::squared loss propery}
\end{align}

Since $\mbk_t=\nabla f_{\btheta_t}(\mbX) \nabla f_{\btheta_t}^{\T}(\mbX)$ is positive semidefinite, we have  $\langle \mbk_t \, (\mbY(t) - \mbY), \, (\mbY(t) - \mbY) \rangle \ge 0$.
To ensure that the error \( \|\mbY(t) - \mbY\| \) does not increase over time, it is sufficient that the gradient \( \nabla_{\btheta} \widetilde{\mathcal{R}}(\btheta_t) \) is aligned with the parameter update direction, i.e.,
\[
\nabla_{\btheta} \widetilde{\mathcal{R}}(\btheta_t) (\btheta_t - \btheta_0) \geq 0.
\]
If this condition is not satisfied, then we should set \( \lambda = 0 \).
 This condition is aligned with the selective $\ell_2$ regularization introduced in~\cite{tianrethinking}.

%%%%%%%%%%%%%%%%%%%%%%%%%%%%%%%%%%%%%%%%%%%%%%%%%%%%%%%%%%%%%%%%%%%%%%
%%%%%%%%%%%%%%%%%%%%%%%%%%%%%%%%%%%%%%%%%%%%%%%%%%%%%%%%%%%%%%%%%%%%%%
\section{Proof of Theorem \ref{thm::theta_bound}} \label{proofthm::theta_bound}
%%%%%%%%%%%%%%%%%%%%%%%%%%%%%%%%%%%%%%%%%%%%%%%%%%%%%%%%%%%%%%%%%%%%%%
%%%%%%%%%%%%%%%%%%%%%%%%%%%%%%%%%%%%%%%%%%%%%%%%%%%%%%%%%%%%%%%%%%%%%%
% We define $\mby(t)=f_{\btheta_t}(\mbX)$ and $\bar{\mby}(t)=\bar{f}_{\bar{\btheta}_t}(\mbX)$.

Once again, use~\eqref{eq::loss}, to substitute
$\nabla_{\btheta} \widetilde{\calR}(\btheta_t)= 2 \left( \mbY(t) - \mbY \right)^\top \nabla\, f_{\btheta_t}(\mbX)$ in~\eqref{theta_prime}, yielding
\begin{align}
\frac{d}{dt} \, \btheta_t  &=-\,2\,\nabla\, f_{\btheta_t}(\mbX)^\top \left( \mbY(t) - \mbY \right)\;-\;\lambda_t\,\left(\btheta_{t} - \btheta_0 \right). \label{eq::dynamics}
\end{align}

% ---- Assumptions/notation (paper style) ---------------------------------
% \mbu_t := \btheta_t - \btheta_0, so d/dt \mbu_t = d/dt \btheta_t.
% Assume \|\nabla f_{\btheta_t}(\mbX)\|_{\mathrm{op}} \le G for all t \ge 0.
% -------------------------------------------------------------------------

Let \(\mbu_t=\btheta_t-\btheta_0\) and \(w(t)=\|\mbu_t\|_2\). The dynamics can be rewritten as
\begin{equation}\label{eq:dyn}
\frac{d}{dt}\,\mbu_t
= -\,2\,\nabla\, f_{\btheta_t}(\mbX)^\top \left( \mbY(t) - \mbY \right)\;-\;\lambda_t\,\mbu_t .
\end{equation}
Since \(\mbu_t\) is  continuous, \(w(t)\) is a.e. differentiable and, for \(\mbu_t\neq 0\),
\begin{equation}\label{eq:norm-deriv}
\dot w(t)=\frac{\mbu_t^{\top}}{\|\mbu_t\|_2}\,\frac{d}{dt}\mbu_t .
\end{equation}
Substituting \eqref{eq:dyn} into \eqref{eq:norm-deriv} and writing the unit vector \(\hat \mbu_t =\mbu_t/\|\mbu_t\|_2\),
\begin{equation}
\dot w(t)= -\,2\,\hat \mbu_t^{\top}\nabla\, f_{\btheta_t}(\mbX)^\top \left( \mbY(t) - \mbY \right)
\;-\;\lambda_t\,w(t).
\end{equation}
By Cauchy--Schwarz and recalling that the $\ell_2$ induced norm \(\|\nabla f_{\btheta_t}(\mbX)\|\le \mathrm{Lip}(f)\) (see the proof of Lemma~\ref{lemma::lipk}),
\begin{equation}
-\hat \mbu_t^{\top}\nabla\, f_{\btheta_t}(\mbX)^\top \left( \mbY(t) - \mbY \right)
\;\le\;\|\nabla f_{\btheta_t}(\mbX)\|\,\|\mbY(t)-\mbY\|_2
\;\le\;\mathrm{Lip}(f)\,\|\mbY(t)-\mbY\|_2 .
\end{equation}
% Hence (for a.e.\ \(t\); at \(\mbu_t=0\) use the upper Dini derivative), 
Therefore, we have
\begin{align}\label{eq:w-ineq-final}
\dot w(t)\;&\le\; -\,\lambda_t\,w(t)\;+\;2\,\mathrm{Lip}(f)\,\|\mbY(t)-\mbY\|_2,\\
\dot w(t)\;+\;\lambda_t\,w(t)\;&\le\;2\,\text{Lip}(f)\,\|\mbY(t)-\mbY\|_2.\label{eq:w-ineq}
\end{align}
Let
\begin{equation}\label{eq:Lambda-def}
\Lambda_t\;=\;\int_{0}^{t}\lambda_\tau\,d\tau .
\end{equation}
We multiply \eqref{eq:w-ineq} by the integrating factor $e^{\Lambda_t}$ and use the product rule
\begin{equation}\label{eq:prod-rule}
\frac{d}{dt}\!\left(e^{\Lambda_t} w(t)\right)
\;\le\; 2\,\text{Lip}(f)\,e^{\Lambda_t}\,\|\mbY(t)-\mbY\|_2 .
\end{equation}
We integrate \eqref{eq:prod-rule} from $0$ to $t$ as
\begin{equation}\label{eq:int-step}
e^{\Lambda_t} w(t) - w(0)
\;\le\; 2\,\text{Lip}(f)\int_{0}^{t} e^{\Lambda_s}\,\|\mbY(s)-\mbY\|_2\,ds .
\end{equation}
Then we divide by $e^{\Lambda_t}$ to obtain
\begin{align}
w(t)\;
&\le\; e^{-\Lambda_t}\,w(0)
\;+\; 2\,\text{Lip}(f)\!\int_{0}^{t}\! e^{-(\Lambda_t-\Lambda_s)}\,\|\mbY(s)-\mbY\|_2\,ds,\\
&\le\; e^{-\Lambda_t}\,w(0)
\;+\; 2\,\text{Lip}(f)\! \,\|\mbY(0)-\mbY\|_2\int_{0}^{t}\! e^{-(\Lambda_t-\Lambda_s)}\,ds,\label{eq:w-general}
\end{align}
where the last inequality follows from Theorem~\ref{thm::1}.
% $w(t)=\|\mbu_t\|_2$.\\

\textbf{Special case 1.} If $\lambda_t=\lambda>0$, then $\Lambda_t=\lambda t$ therefore we have
\begin{align}\label{eq:w-constlambda}
\|\mbu_t\|_2 \;&\le\; e^{-\lambda t}\,\|\mbu_0\|_2
\;+\; 2\,\text{Lip}(f)\!\,\|\mbY(0)-\mbY\|_2\int_{0}^{t}\! e^{-\lambda (t-s)}\,ds,\\
&\le\; e^{-\lambda t}\,\|\mbu_0\|_2
\;+\; 2\,\text{Lip}(f)\!\,\|\mbY(0)-\mbY\|_2\frac{1-e^{-\lambda t}}{\lambda}.
\end{align}

Since $\mbu_t=\btheta_t-\btheta_0$, $\|\mbu_0\|_2=0$, and $\mbY(0)=f_{\btheta_0}(\mbX)$, we have

\begin{align}\label{eq:ut-norm-bound}
\|\btheta_t-\btheta_0\|_2 \;
&\le\;
2\,\text{Lip}(f)\, \| f_{\btheta_0}(\mbX) - \mbY\|_2\,\frac{1-e^{-\lambda t}}{\lambda},\\
\limsup_{t\to\infty}\|\btheta_t-\btheta_0\|_2 \;
&\le\;
\frac{2\, \text{Lip}(f)\,\| f_{\btheta_0}(\mbX) - \mbY\|_2}{\lambda}.
\end{align}

\textbf{Special case 2.}
If $\lambda_t= 0$, from~\eqref{eq:w-general}, we have
\begin{align}
% \lambda(t)\equiv \lambda>0:\quad & A(t)=\frac{1-e^{-\lambda t}}{\lambda}, 
% & \Rightarrow\ \|\mbu_t\|_2 \le e^{-\lambda t}\|\mbu_0\|_2 + \frac{2\,\text{Lip}(f)}{\lambda}\bigl(1-e^{-\lambda t}\bigr)\|\mby(0)-\mbY\|_2. \\
\|\mbu_t\|_2 &\le \|\mbu_0\|_2 + 2\,\text{Lip}(f)\,\|\mbY(0)-\mbY\|_2 \,\int_{0}^{t}\! e^{0}\,ds.
\end{align}
Therefore,
\begin{align}
\|\btheta_t-\btheta_0\|_2 &\le 2\,\text{Lip}(f)\,\|f_{\btheta_0}(\mbX) -\mbY\|_2 \, t.
\end{align}

\color{black}

\section{Proof of Theorem \ref{thm::diff-f-bar-f}}\label{proofthm::diff-f-bar-f}
For simplicity, we consider the lambda constant case to prove Theorems \ref{thm::diff-f-bar-f} and \ref{thm::final}.
For $\mbY(t)=f_{\btheta_t}(\mbX)$ and $\bar{\mbY}(t)=\bar{f}_{\bar{\btheta}_t}(\mbX)$, we define  $\Delta(t)=\|\mbY(t)-\bar{\mbY}(t)\|_2$. We have 
\begin{align} \label{eq::26}
&\frac{1}{2}\frac{d}{dt}\Delta(t)^2=\frac{1}{2}\frac{d}{dt}\|\mbY(t)-\bar{\mbY}(t)\|^2_2\\
&=\frac{1}{2}\langle \mbY'(t)-\bar{\mbY}'(t),\mbY(t)-\bar{\mbY}(t)\rangle +\frac{1}{2}\langle \mbY(t)-\bar{\mbY}(t),\mbY'(t)-\bar{\mbY}'(t)\rangle  \nonumber\\
&=\langle \mbY'(t)-\bar{\mbY}'(t),\mbY(t)-\bar{\mbY}(t)\rangle \nonumber\\
&=\left \langle  - \, \mbk_t \tempcolor{red} {2\left(\mbY(t)-\mbY\right)} \, - \, \lambda \, \nabla f_{\btheta_t}(\mbX) \left(\btheta_{t} - \btheta_0 \right)+\mbk_{0} \, \tempcolor{blue}{2\left(\bar{\mbY}(t)-\mbY\right)},\mbY(t)-\bar{\mbY}(t)\right\rangle, \nonumber
\end{align}
where the last equality follows from~\eqref{eq::y_prime} in the proof of Theorem \ref{thm::1}.
We simplify the following term in the  last equation as 
\begin{align}\label{eq::48}
&- \, \mbk_t \tempcolor{red} {2\left(\mbY(t)-\mbY\right)} +\mbk_{0} \, \tempcolor{blue}{2\left(\bar{\mbY}(t)-\mbY\right)}  \\
&=- \, \mbk_t \tempcolor{red} {2\left(\mbY(t)-\mbY\right)}+ \mbk_0 \tempcolor{red} {2\left(\mbY(t)-\mbY\right)}-\mbk_0 \tempcolor{red} {2\left(\mbY(t)-\mbY\right)}+\mbk_{0}\, \tempcolor{blue}{2\left(\bar{\mbY}(t)-\mbY\right)} \nonumber \\
&=\left(\mbk_0-\mbk_t\right) \tempcolor{red} {2\left(\mbY(t)-\mbY\right)} +\mbk_0 \left (\tempcolor{blue}{2\left(\bar{\mbY}(t)-\mbY\right)}-\tempcolor{red} {2\left(\mbY(t)-\mbY\right)} \right), \nonumber 
\end{align}
where in the  first  equality we added and subtracted $\mbk_0 \tempcolor{red} {2\left(\mbY(t)-\mbY\right)}$. Substituting~\eqref{eq::48} in  
~\eqref{eq::26} yields 
\begin{align}\label{eq::34}
\frac{1}{2}\frac{d}{dt}\Delta(t)^2&= \langle \left(\mbk_0-\mbk_t\right) \tempcolor{red} {2\left(\mbY(t)-\mbY\right)} - \, \lambda \, \nabla f_{\btheta_t}(\mbX) \left(\btheta_{t} - \btheta_0 \right)  ,\mbY(t)-\bar{\mbY}(t)\rangle  \nonumber \\
&+ \left \langle \mbk_0 \left (\tempcolor{blue}{2\left(\bar{\mbY}(t)-\mbY\right)}-\tempcolor{red} {2\left(\mbY(t)-\mbY\right)}\right) ,\mbY(t)-\bar{\mbY}(t) \right\rangle,
\end{align}
since $\mbk_0=\nabla f_{\btheta_0}\left(\mbX\right)\nabla f_{\btheta_0}(\mbX)^{\T} $ is positive semidefinite, the second term on the left hand side is non-positive, i.e.,
\begin{align}\label{eq::35}
&\left \langle \mbk_0 \left (\tempcolor{blue}{2\left(\bar{\mbY}(t)-\mbY\right)}-\tempcolor{red} {2\left(\mbY(t)-\mbY\right)}\right) ,\mbY(t)-\bar{\mbY}(t) \right\rangle \nonumber \\
&=-2(\mbY(t)-\bar{\mbY}(t))^{\T}\mbk_0 (\mbY(t)-\bar{\mbY}(t)) \leq  0.
\end{align}
As a result of~\eqref{eq::34} and~\eqref{eq::35}, we have 
\begin{align}
\frac{1}{2}\frac{d}{dt}\Delta(t)^2& \leq \langle \left(\mbk_0-\mbk_t\right) \tempcolor{red} {2\left(\mbY(t)-\mbY\right)} - \, \lambda \, \nabla f_{\btheta_t}(\mbX) \left(\btheta_{t} - \btheta_0 \right)  ,\mbY(t)-\bar{\mbY}(t)\rangle.   
\end{align}
Noting that $\frac{1}{2}\frac{d}{dt}\Delta(t)^2=\Delta(t)\frac{d \Delta(t)}{dt} $ and taking norms on  both sides of the above, then clearly using \eqref{eq::update}, we have 
\begin{align} 
\frac{d}{dt}\Delta(t)&=    \| \left(\mbk_0-\mbk_t\right) \tempcolor{red} {2\left(\mbY(t)-\mbY\right)} - \, \lambda \, \nabla f_{\btheta_t}(\mbX) \left(\btheta_{t} - \btheta_0 \right)\|   \\
&\leq \| \left(\mbk_0-\mbk_t\right) \tempcolor{red} {2\left(\mbY(t)-\mbY\right)} \| +\lambda \| \nabla f_{\btheta_t}(\mbX) \left(\btheta_{t} - \btheta_0 \right)\|  \label{ineq::theta_thilda} \\ 
&\leq \textrm{Lip}(\mbk) \|\left(\btheta_{t} - \btheta_0 \right)\|  \|\tempcolor{red} {2\left(\mbY(t)-\mbY\right)}\|+ \lambda \;\textrm{Lip}(f) \; \| \btheta_{t} - \btheta_0 \|  \label{eq::lipfk}\\
& \leq \tempcolor{red}{2}\textrm{Lip}(\mbk) \|\left(\btheta_{t} - \btheta_0 \right)\|  \|\mbY(0)- \mbY\|+ \lambda \; \textrm{Lip}(f) \; \| \btheta_{t} - \btheta_0 \|  \label{ineq::theta_range}\\
& \leq \tempcolor{red}{4} \,  \textrm{Lip}(\mbk) \text{Lip}(f)  \|\mbY(0)- \mbY\|^2  \frac{\left( 1-e^{-\lambda t}   \right)}{\lambda}  \\
& + 2 \, \textrm{Lip}^2(f) \| \mbY(0) - \mbY\| \left(1-e^{-\lambda t}\right)  \nonumber \\
& \leq  -2\, \text{Lip}(f) \| \mbY(0) - \mbY\|\, \left( \frac{\tempcolor{red}{2}}{\lambda} \textrm{Lip}(\mbk)  \|\mbY(0)- \mbY\|  + \textrm{Lip}(f) \right) e^{-\lambda t} \label{eq::delta_t1} \\ 
&+  2\, \text{Lip}(f) \| \mbY(0) - \mbY\|\ \left( \frac{\tempcolor{red}{2}}{\lambda} 
 \textrm{Lip}(\mbk)  \|\mbY(0)- \mbY\| 
+ \, \text{Lip}(f)\right). \nonumber
\end{align}
\eqref{eq::lipfk}
follows from Lipschitz properties of  $f_{\btheta}(\mbX)$ and $\mbk_{t}$ which was shown in Lemma \ref{lemma::lipk}.
Due to  the non-increasing property of $\|\mbY(t)-\mbY\|$  shown in Theorem \ref{thm::1}, we replaced $\mbY(t)$ with $\mbY(0)$ to obtain the upper bound. We also used~\eqref{eq::eq6} in Theorem \ref{thm::theta_bound} to obtain the last inequality.

Using Lemma~\ref{lemma::lipk} and  \eqref{eq::delta_t1}, we have 
\begin{equation} \label{diff_eq}
\frac{d}{dt}\Delta(t) \leq \, b-be^{-\lambda t}, 
\end{equation}
where
\begin{align} \label{eq::b}
b = 2\, \text{Lip}(f)^2 \| \mbY(0) - \mbY\|\ \left( \frac{\tempcolor{red}{4}}{\lambda}   \text{Lip}(\nabla f)  \|\mbY(0)- \mbY\|  
+ \,1\right).
\end{align}
By taking the integral of both sides of \eqref{diff_eq}, we obtain
\begin{equation} \label{eq::exp}
\Delta(t) \leq b \left(t+\frac{1}{\lambda} \, e^{-\lambda t}-\frac{1}{\lambda} \right).
\end{equation}
Consequently, we have \eqref{eq::fminusfbar}.

%%%%%%%%%%%%%%%%%%%%%%%%%%%%%%%%%%%%%%%%%%%%%%%%%%%%%%%%%%%%
\section{Proof of Theorem \ref{thm::final}} \label{proofthm::final}
%%%%%%%%%%%%%%%%%%%%%%%%%%%%%%%%%%%%%%%%%%%%%%%%%%%%%%%%%%%%

From Theorem~\ref{thm::theta_bound}, we have
\begin{align}\label{eq::73} 
\| \btheta_t - \btheta_0 \| \leq  \frac{2\, \| \mbY(0) - \mbY\|\, \text{Lip}(f)}{\lambda} \left(1- e^{-\lambda t}\right)  
\end{align} 
In order for the regularizer to satisfy the Lipschitz continuity assumptions, mainly that $\|\btheta-\btheta_0\|\leq r$, this shows that there are two phases of behavior depending on how large $\lambda$ is. The threshold is given by:
\[
    \lambda_\circ = \frac{2\|\mbY(0)-\mbY\|~\text{Lip}(f)}{r}
\]

In particular, if $\lambda\geq \lambda_\circ$, then $\btheta_t$ remains in the $r$-ball around $\btheta_0$ for all $t$. Otherwise, it remains in this ball only as long as
\[
    t \leq \frac{1}{\lambda} \ln \frac{1}{1-\lambda/\lambda_\circ}.
\]

Based on Theorem \ref{thm::diff-f-bar-f}, we get that
\[
    \Delta(t) \leq 2\, \text{Lip}(f)^2 \| \mbY(0) - \mbY\|\ \left( \frac{\tempcolor{red}{4}}{\lambda}   \text{Lip}(\nabla f)  \|\mbY(0)- \mbY\|+ \,1\right)\left(t+\frac{1}{\lambda} \, e^{-\lambda t}-\frac{1}{\lambda} \right).
\]

When $\lambda \geq \lambda_\circ$, $\Delta(t)$ grows roughly linearly, with coefficient given by:
\[
    \Delta(t) \leq 2\, \text{Lip}(f) \| \mbY(0) - \mbY\|\ \left( 2 r  \text{Lip}(\nabla f) + \,\text{Lip}(f)\right)t.
\]

\section{Proof of Theorem \ref{th:th1}} \label{proofthm::empirical_bound}
Let $\mbU \bSigma \mbU^{\T}$ denote the eigenvalue decomposition of $\mbK\left(\mbX, \mbX\right)$, where $\bSigma=\Diag{\lambda_{\text{min}}(\mbK),\ldots,\lambda_{\text{max}}(\mbK)}$ and  $\mbU^{\T}\mbU=\mbI$. Due to the proximity of the regularized fine-tuned model to the pretrained model, which  promotes linearity, we have 
\begin{align} \label{eq::upperbound}
{\calR}(\btheta) &=\frac{1}{n} \sum_{i=1}^{n} \calL_{}(f_{\btheta}(\mbx_i), \mby_i) \approx\frac{1}{n} \sum_{i=1}^{n} \calL_{}(\bar{f}_{\bar{\btheta}}(\mbx_i), \mby_i) \nonumber \\
 & = \frac{1}{n} \sum_{i=1}^{n} \left \| y_i- \mbK\left(\mbx_i, \mbX\right)\left[\mbK\left(\mbX, \mbX\right)+\sigma \mbI\right]^{-1} \mbY  \right \|^2_2
 \nonumber\\ 
 &=\frac{1}{n} \left\|\mbY- \mbK\left(\mbX, \mbX\right)\left(\mbK\left(\mbX, \mbX\right)+\sigma \mbI\right)^{-1} \mbY\right\|^2_{2} \nonumber \\
 &=\frac{1}{n}\left\| \left(\mbI-  \mbK\left(\mbX, \mbX\right)\left( \mbK\left(\mbX, \mbX\right)+\sigma \mbI\right)^{-1}\right)  \mbY \right\|^2_{2} \nonumber \\
&=\frac{1}{n}\left\| \left (\mbI-  \mbU \bSigma \mbU^{\T}(\mbU \bSigma \mbU^{\T}+\sigma \mbI)^{-1} \right )\mbY\right\|^2_{2} \nonumber \\
& = \frac{1}{n}\left\| \left (\mbI-  \mbU \bSigma(\bSigma+\sigma \mbI)^{-1} \mbU^{\T}\right )\mbY\right\|^2_{2} \nonumber \\
&=\frac{1}{n}\left\| \mbU\left (\mbI-\bSigma(\bSigma+\sigma \mbI)^{-1}\right )\mbU^{\T}\mbY\right\|^2_{2}  \nonumber \\
&=\frac{1}{n}\left\|\left (\mbI-\bSigma(\bSigma+\sigma \mbI)^{-1}\right )\mbU^{\T}\mbY\right\|^2_{2}.
\end{align}
Since $\mbI-\bSigma(\bSigma+\sigma \mbI)^{-1}$ is a diagonal matrix, we have 
\begin{equation}
 \lambda_{\text{min}}\left(\mbI-\bSigma(\bSigma+\sigma \mbI)^{-1}\right)^2  \|\mbU^{\T}\mbY\|_2^2<\calR(\btheta) <  \lambda_{\text{max}}(\mbI-\bSigma(\bSigma+\sigma \mbI)^{-1})^2  \|\mbU^{\T}\mbY\|_2^2.
\end{equation}
Noting that 
\begin{align}
\lambda_{\text{min}}\left(\mbI-\bSigma(\bSigma+\sigma \mbI)^{-1}\right)=\frac{\sigma}{\sigma+\lambda_{\text{max}}\left(\mbK\right)}, \\
\lambda_{\text{max}}\left(\mbI-\bSigma(\bSigma+\sigma \mbI)^{-1}\right)= \frac{\sigma}{\sigma+\lambda_{\text{min}}\left(\mbK\right)},
\end{align}
we have
\begin{equation} \label{eq:bound}
\frac{\sigma^2 \left\| \mbY\right\|^2_{2}}{(\sigma+\lambda_{\text{max}}(\mbK))^2} \leq {\calR}(\btheta) \leq   \frac{\sigma^2 \left\| \mbY\right\|^2_{2}}{\left(\sigma+\lambda_{\text{min}}(\mbK)\right)^2}
\end{equation}
which is  also shown in~\cite{chung2025model}. %\zr{$\frac{1}{n}$ will be removed in the final bound, right?}
\section{Proof of Theorem \ref{th:th3}} \label{proofthm::selected_params}
Since $\mbK$ is positive semidefinite (see Lemma~\ref{lemm::psd}), we have 
\begin{equation}\label{eq::21}
\mbK+\mbS =\mbK^{1/2}\left(\mbI+\mbK^{-1/2}\mbS \; \mbK^{-1/2}\right)\mbK^{1/2}.
\end{equation}
Considering $\eta=\|\mbK^{-1/2}\mbS \; \mbK^{-1/2}\|$, we have 
\begin{equation}
-\eta \mbI \leq \mbK^{-1/2}\mbS \;\mbK^{-1/2} \leq \eta \mbI.
\end{equation}
Consequently,
\begin{equation}\label{eq::23}
(1-\eta) \mbI \leq \mbI +\mbK^{-1/2} \mbS \;\mbK^{-1/2} \leq (1+\eta) \mbI,
\end{equation}
and, from~\eqref{eq::21} and~\eqref{eq::23}, we obtain
\begin{equation}\label{eq::223}
(1-\eta) \mbK \leq \mbK+  \mbS \leq (1+\eta) \mbK.
\end{equation}
Note that for any two positive semidefinite matrices $\mbA$ and $\mbB$, if $\mbA \leq \mbB$, then $\lambda_{i}(\mbA) \leq \lambda_{i}(\mbB)$. Therefore, (\ref{eq::223}) is  equivalent to~\eqref{eq::20}~\cite{mathias1997spectral}.

\section{Proof of Theorem \ref{th:th4}} \label{proofthm::layer_emperical}
From Theorem ~\ref{th:th1}, we already know 
\begin{align}
\frac{\left(\lambda_{\text{min}}(\mbK)+\sigma\right)^2}{\sigma^2 \|\mbY\|^2} \leq  \frac{1}{\calR(\btheta)}  \leq \frac{\left(\lambda_{\text{max}}(\mbK)+\sigma\right)^2}{\sigma^2 \|\mbY\|^2}.
\end{align}
Theorem~\ref{th:th3}, states that 
\begin{equation}
\lambda_{\text{max}}(\mbK) +\sigma =\lambda_{\text{max}}(\mbK+\sigma\mbI)\leq \frac{\lambda_{\text{max}}(\mbK+\mbS+\sigma \mbI)}{1-\eta}
\end{equation}
and
\begin{equation}
\lambda_{\text{min}}(\mbK)+\sigma = \frac{\lambda_{\text{max}}(\mbK+\sigma \mbI)}{\kappa(\mbK+\sigma \mbI)} \geq  \frac{\lambda_{\text{max}}(\mbK+\mbS+\sigma \mbI)}{\kappa(\mbK+\sigma \mbI)(1+\eta)}.
\end{equation}
Therefore, we have 
\begin{align}\label{eq::29}
\frac{\lambda_{\text{max}}(\mbK+\mbS+\sigma \mbI)}{\sigma\|\mbY\|\kappa(\mbK+\sigma \mbI)(1+\eta)} \leq  \frac{1}{\calR(\btheta)^\frac{1}{2}}  \leq  \frac{\lambda_{\text{max}}(\mbK+\mbS+\sigma \mbI)}{\sigma\|\mbY\|(1-\eta)}.
\end{align}
On the other hand, it follows from Theorem~\ref{th:th1} that
\begin{align}\label{eq::30}
\frac{\sigma \|\mbY\|}{\lambda_{\text{max}}(\mbK+\mbS)+\sigma} \leq \calR(\btheta \cup \hat{\btheta})^{\frac{1}{2}} \leq \frac{\sigma \|\mbY\|}{\lambda_{\text{min}}(\mbK+\mbS)+\sigma},
\end{align}
and, from Theorem~\ref{th:th3} that
 \begin{align}\label{eq::31}
     \lambda_{\text{min}}(\mbK+\mbS)+\sigma &\geq (1-\eta) \lambda_{\text{min}}(\mbK+\sigma \mbI) \nonumber \\
     &=\frac{(1-\eta)\lambda_{\text{max}}(\mbK+\sigma \mbI)}{\kappa({\mbK+\sigma \mbI})}, \nonumber\\
     \lambda_{\text{max}}(\mbK+\mbS)+\sigma &\leq (1+\eta)  \lambda_{\text{max}}(\mbK+\sigma \mbI).
\end{align}
From~\eqref{eq::30} and~\eqref{eq::31}, we conclude \par \noindent \small
\begin{align}\label{eq::32}
\frac{\sigma \|\mbY\|}{(1+\eta)\lambda_{\text{max}}(\mbK+\sigma \mbI)} \leq \calR(\btheta \cup \hat{\btheta})^{\frac{1}{2}} \leq  \frac{\sigma \|\mbY\|\kappa(\mbK+\sigma \mbI)}{(1-\eta)\lambda_{\text{max}}(\mbK+\sigma \mbI)}.
\end{align} \normalsize
Considering that $\kappa(\mbK+\sigma \mbI) \leq c$, the inequalities~\eqref{eq::32} and~\eqref{eq::29} imply
\begin{align}\label{eq::34}
\frac{\lambda_{\text{max}}(\mbK+\mbS+\sigma \mbI)}{c(1+\eta)^2 \lambda_{\text{max}}(\mbK+\sigma \mbI)} \leq  \left(\frac{\calR(\btheta \cup \hat{\btheta})}{\calR(\btheta)}\right)^{\frac{1}{2}}  \leq 
\frac{c\lambda_{\text{max}}(\mbK+\mbS+\sigma \mbI)}{(1-\eta)^2\lambda_{\text{max}}(\mbK+\sigma \mbI)}.
\end{align} \normalsize
Note that $(1-\eta)^2 \leq (1+\eta)^{-2}$. By defining $a=\frac{c}{(1-\eta)^2}$, we can rewrite~\eqref{eq::34} in the desired form: 
\begin{align}
\frac{\lambda_{\text{max}}(\mbK+\mbS+\sigma \mbI)}{a\lambda_{\text{max}}(\mbK+\sigma \mbI)}\leq \left(\frac{\calR(\btheta \cup \hat{\btheta})}{\calR(\btheta)}\right)^{\frac{1}{2}}  \leq \frac{a\lambda_{\text{max}}(\mbK+\mbS+\sigma \mbI)}{\lambda_{\text{max}}(\mbK+\sigma \mbI)}.
\end{align}

%%%%%%%%%%%%%%%%%%%%%%%%%%%%%%%%%%%%%%%%%%%%%%%%%%%%%%%%%%%%%
\begin{comment}
\section{Additional Results}\label{app:additional-res}
\begin{figure}[h!]
    \centering
    \begin{subfigure}[t]{0.49\textwidth}
        \centering
        \includegraphics[width=1\textwidth]{Images/condition_task_opt_model.eps}
        \caption{OPT-125M model by Facebook, $\sigma=1e^{-1}$.}
        \label{fig:subfig1}
    \end{subfigure}
    \hfill
    \begin{subfigure}[t]{0.49\textwidth}
        \centering
        \includegraphics[width=1\textwidth]{Images/condition_task_gpt2_model.eps}
        \caption{GPT2 model by OpenAI, $\sigma=1e^{1}$.}
        \label{fig:subfig2}
    \end{subfigure}
    \caption{Decoder-only models are fine-tuned for 10 epochs using  Adam optimizer. LoRA with $r=8$ is used to fine-tune $\{\mbW_k\}$ of the layers $\{0,5,11\}$. The negative correlation between evaluation accuracy  after 10 epochs of  training and condition number of the NTK is illustrated.}
    \label{fig:condition_task_opt_model}
\end{figure}
\end{comment}

\newpage
%***********************************************************
%***********************************************************
\section{Robustness to Sampling for Estimating the NTK}\label{app:robustness}
We empirically illustrate that the  eigenvalues of the NTK are robust to the choice of NTK samples. Sketching of the kernel matrices is studied in the literature, for instance, in~\cite{chen2025randomly}.  In~\cite{malladi2023kernel}, which also modeled fine-tuning as an NTK regression problem, the number of NTK samples is fixed to 16 and 64 
(see Table 2 of~\cite{malladi2023kernel}).
\begin{table}[h!]
\centering
\begin{tabular}{c c c c}
% \hline
\toprule
\textbf{Random Seed} & $\lambda_{\text{min}}$ & $\lambda_{\text{max}}$ & \textbf{Condition Number} \\
% \hline
\midrule
42   & $2.04 \times 10^{-6}$ & $0.0030$ & $32.43$ \\
123  & $6.45 \times 10^{-7}$ & $0.0035$ & $36.85$ \\
7    & $7.37 \times 10^{-8}$ & $0.0025$ & $26.22$ \\
99   & $3.19 \times 10^{-9}$ & $0.0024$ & $25.82$ \\
2024 & $6.78 \times 10^{-8}$ & $0.0025$ & $26.86$ \\
% \hline
\bottomrule
\end{tabular}
\vspace{0.1in}
\caption{Condition numbers and eigenvalues for different random seeds. $\mathbf{W}_k$ of layer 11 is fine-tuned.}
\label{tab:condition_numbers}
\end{table} \vspace{-0.3in}
\section{Additional Experiments}\label{app:Additional-experiments}
 We fine-tuned decoder-only models GPT-2 and OPT-125M for 10 epochs using the Adam optimizer. LoRA with $r=8$ is used to fine-tune $\mathbf{W}_k$ of the layers $\{0,5,11\}$. The negative correlation between evaluation accuracy after 10 epochs of training and the condition number of the NTK is illustrated below. In GPT-2, Yelp with the lowest condition number, possesses the highest accuracy, and in OPT-125M, IMDb and  Yelp, with higher  accuracies, have lower  condition  numbers than the other  tasks.
\begin{table}[h!]
\centering
\begin{tabular}{l|l|c|c}
% \hline
\toprule
\textbf{Model} & \textbf{Dataset} & \textbf{Eval Accuracy (\%)} & \textbf{Condition Number} \\
% \hline
\midrule
\multirow{3}{*}{GPT-2} 
 & CoLA & 71.0 & 83 \\
 & IMDb & 87.4 & 36 \\
 & Yelp & 87.6 & 35 \\
% \hline
\midrule
\multirow{4}{*}{OPT-125m} 
 & SST-2 & 64.8 & 910 \\
 & CoLA  & 69.1 & 720 \\
 & IMDb  & 70.0 & 210 \\
 & Yelp  & 82.1 & 310 \\
% \hline
\bottomrule
\end{tabular}
\vspace{0.1in}
\caption{Evaluation results for GPT-2 and OPT-125M across different datasets.}
\label{tab:gpt_opt_results}
\end{table}\vspace{-0.3in}
\section{Lipschitzness}\label{Lipschitzness} 
To empirically estimate Lipschitzness for all models in balls around the initial model $\btheta_0$, we need to choose models $\btheta$ in this ball and calculate the Lipschitz constants for each using pairs of feature points $(\mbx,\mbx')$. Since it's not feasible to choose all models and all pairs, we have to sample them reasonably. In the table below, we report on $\btheta$ sampled uniformly on consecutive spherical shells with increasing radius and $(\mbx,\mbx')$ samples from pairs of training data points. We calculate the estimated Lipschitz constant for each $\btheta$ and report both the average and maximum estimates, for each radius. The accurate Lipschitz constant $\text{Lip}(f)$, technically, is the maximum. However, the average is also meaningful in that it represents more typical practical behavior.

We  conducted the following procedure. We  first capture the maximum fluctuation, $R_{\text{max}}$, in parameters $\btheta$ as in Table~\ref{tab:cola_rmax} . 
\begin{table}[h!]
\centering
\begin{tabular}{l l c}
% \hline
\toprule
\textbf{Dataset} & \textbf{Layer} & $R_{\text{max}}$ \\
% \hline
\midrule
CoLA & $\mathbf{W}_k \in \{0, 5, 11\}$ & 0.174 \\
CoLA & $\mathbf{W}_k \in \{0, 11\}$ & 0.189 \\
% \hline
\bottomrule
\end{tabular}
\vspace{0.1in}
\caption{ $R_\text{max} = \text{Max} |\btheta - \btheta_0|$ during 10 epochs of  fine-tuning.}
\label{tab:cola_rmax}
\end{table}
%----------------------------------------
\begin{algorithm}[H]
\caption{Computation of $L_{\text{avg}}$ and $L_{\text{upper}}$ vs.\ $r$}
\label{alg:lipschitz1}
\begin{algorithmic}[1]
\Statex \textbf{Input:} Set $\mathcal{S}$ of data samples $(\mathbf{x}, \mathbf{x}')$
\State Initialize list $L_{\text{max}}$ (indexed by $\btheta$)
\State Initialize lists $L_{\text{avg}}$, $L_{\text{upper}}$ (indexed by $r$)
\For{$r = 0$ to $R_{\text{max}}$}
    \State Generate a new set $T(r)$ of $n_T$ models using
       \[ \btheta = \btheta_0 + \mathrm{distortion}(r),
        \quad \text{where} \quad
        \mathrm{distortion}(r) = \frac{r v}{\|v\|}, \quad v \sim \mathcal{N}(0, 1)\]
    \ForAll{$\btheta \in T(r)$}
        \State Initialize empty list $L_{\text{list}}$
        \State Set model params to $\btheta$
        \ForAll{$(\mathbf{x}, \mathbf{x}') \in \mathcal{S}$}
            \State Compute $\displaystyle \frac{\|f_{\btheta}(\mathbf{x}') - f_{\btheta}(\mathbf{x})\|}{\|\mathbf{x}' - \mathbf{x}\|}$ and append to $L_{\text{list}}$
        \EndFor
        \State Append $\max(L_{\text{list}})$ to $L_{\text{max}}$
    \EndFor
    \State Append $\mathrm{mean}(L_{\text{max}})$ to $L_{\text{avg}}$
    \State Append $\max(L_{\text{max}})$ to $L_{\text{upper}}$
\EndFor
\Statex \textbf{Output:} $L_{\text{avg}}$ and $L_{\text{upper}}$ vs.\ $r$
\end{algorithmic}
\end{algorithm}
%-----------------------------------------
Then we  used $R_{\text{max}}$ to calculate the Lipschitz continuity as  in Algorithm~\ref{alg:lipschitz1}. We used $|\calS| = 1000$ pairs of data points, $n_T = 100$ models, i.e., 100 different $\btheta$s, with $R_{\max} = 0.174$ according to Table~\ref{tab:cola_rmax},  $10$ steps to vary distortion $r$ in step 3 of the algorithm, and regularization value $\lambda = 5$. 
Both the average, $L_{\text{avg}}$, and upper bound, $L_{\text{upper}}$, Lipschitz constants show a gradual and consistent increase as $r$ grows, indicating that the model’s sensitivity to perturbations increases mildly with distance from the original parameters $\theta_{0}$. On average, the model remains stable under small to moderate distortions.

\begin{table}[h]
\centering
\scalebox{0.8}{
\begin{tabular}{c c c}
% \hline
% \text{$r$} & \text{$L_{\text{avg}}$} & \text{$L_{\text{upper}}$} \\
% \hline
\toprule
$r$ & $L_{\text{avg}}$ & $L_{\text{upper}}$ \\
\midrule
0.000000 & 0.020724 & 0.020724 \\
0.019333 & 0.020725 & 0.020764 \\
0.038667 & 0.020726 & 0.020803 \\
0.058000 & 0.020727 & 0.020843 \\
0.077333 & 0.020727 & 0.020883 \\
0.096667 & 0.020728 & 0.020923 \\
0.116000 & 0.020729 & 0.020962 \\
0.135333 & 0.020730 & 0.021002 \\
0.154667 & 0.020731 & 0.021042 \\
0.174000 & 0.020732 & 0.021081 \\
% \hline
\bottomrule
\end{tabular}}
\vspace{0.05in}
\caption{Lipschitz ratio of the model with selected layers $\{0, 5, 11\}$ and parameter type key ($\mbW_k$) for 1000 pairs of data samples $(\mbx, \mbx')$ with $R_{\max} = 0.174$.}
\label{tab:lipschitz_key_layers}
\end{table}

\begin{table}[h]
\centering
\label{tab:lipschitz_key}
\scalebox{0.8}{
\begin{tabular}{ccc}
\toprule
$r$ & $L_{\text{avg}}$ & $L_{\text{upper}}$ \\
\midrule
0.000000 & 0.014479 & 0.014479 \\
0.021000 & 0.014480 & 0.014505 \\
0.042000 & 0.014482 & 0.014530 \\
0.063000 & 0.014483 & 0.014556 \\
0.084000 & 0.014484 & 0.014582 \\
0.105000 & 0.014486 & 0.014608 \\
0.126000 & 0.014487 & 0.014634 \\
0.147000 & 0.014488 & 0.014660 \\
0.168000 & 0.014490 & 0.014686 \\
0.189000 & 0.014491 & 0.014712 \\
\bottomrule
\end{tabular}}
\vspace{0.05in}
\caption{Lipschitz ratio of the model with selected layers $\{0, 11\}$ and parameter type key ($\mbW_k$) for $100$ pairs of data samples $(\mbx, \mbx')$  with $R_{\max} = 0.189$.}
\end{table}

\newpage
\section{Layer Selection Algorithm} \label{section::layer_selection}
Building upon the foundation established by Theorem~\ref{th:th4}, Corollary~\ref{cor:1}, and Figure~\ref{fig:sgd_th4}, we distill our  proposed methodologies into the following PEFT layer selection strategy informed by our spectral perturbation bounds.
\begin{algorithm}[h!]
\caption{Trainable Parameter Selection via Spectral Perturbation}
\begin{algorithmic}[1]
\Statex \textbf{Input:} Pretrained parameters $\btheta$; scalar $\sigma>0$; training samples $\mbX=[\mbx_1,\ldots,\mbx_n]^\top$; candidates parameter subsets $\{\hat{\btheta}^{(1)},\ldots,\hat{\btheta}^{(L)}\}$; $\mathcal{C}=\{1,\ldots,L\}$
\State Compute base NTK matrix $\mathbf{K}\in\mathbb{R}^{n\times n}$.
\For{$l=1,\ldots,L$} 
  \State Compute kernel contribution $\mathbf{S}_{l}$ for candidate $\hat{\btheta}^{(l)}$.
\EndFor
\For{each subset $C \in \mathcal{C}$}
  \State Compute combined kernel $\mathbf{S}^{C} \gets \sum_{l\in C}\mathbf{S}_{l}$.
  \State Compute spectral ratio
  \[
    r_{c} \gets \frac{\lambda_{\max}\!\left(\mathbf{K}+\mathbf{S}^{C}+\sigma\mathbf{I}\right)}
                       {\lambda_{\max}\!\left(\mathbf{K}+\sigma\mathbf{I}\right)} .
  \]
\EndFor
\State Select $C^{*} \gets
\underset{C}{\textrm{argmin}}\ r_{c}$.
\Statex \textbf{Output:} Selected parameters $\{\hat{\btheta}^{(l)}:l \in C^{*}\}$.
\end{algorithmic}
\end{algorithm}

\begin{figure}[!h]
    \centering
\begin{subfigure}[t]{0.5\textwidth}
\centering
\includegraphics[width=1\textwidth]{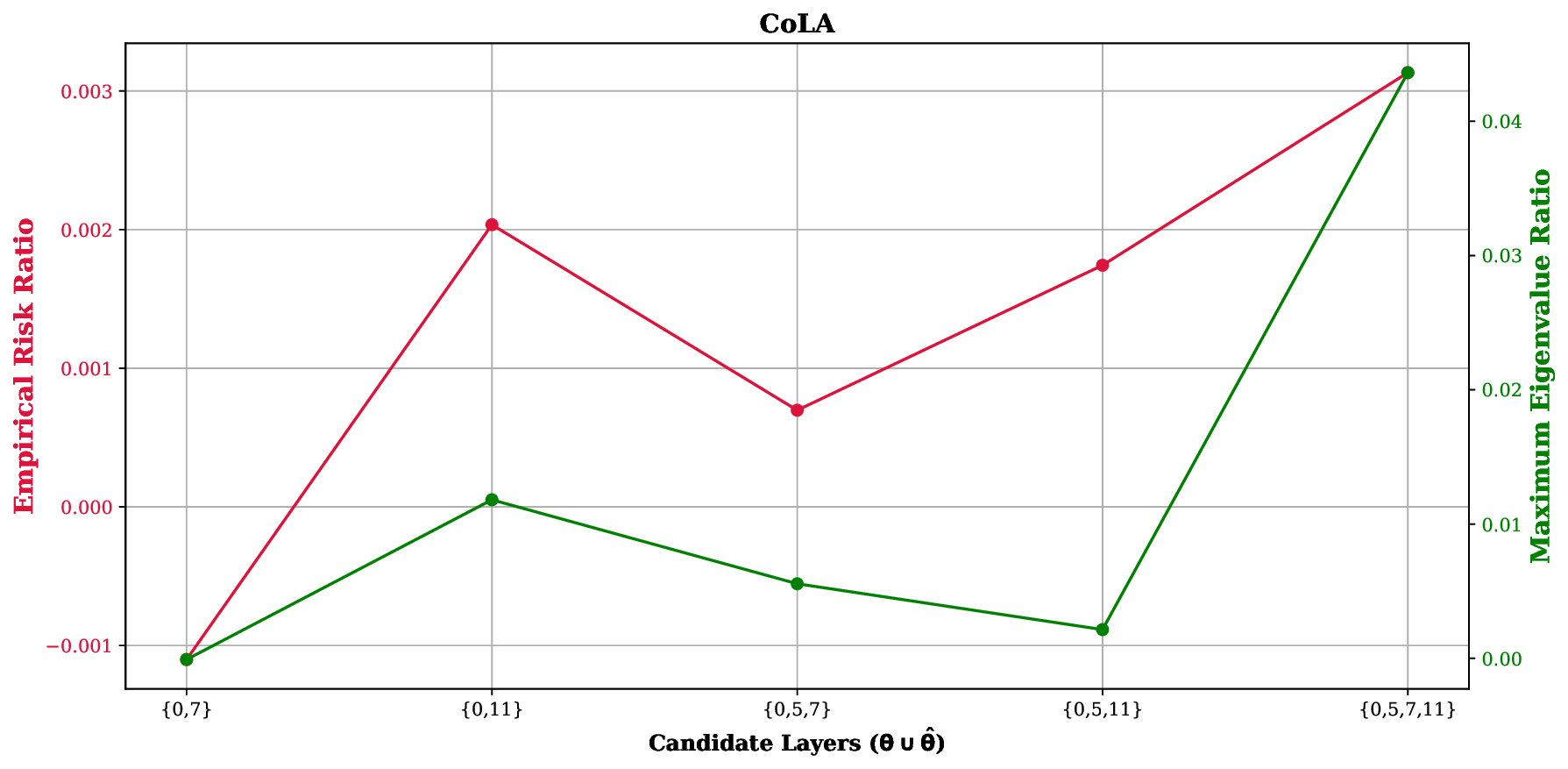}
\caption{CoLA}
\end{subfigure}%
~ \begin{subfigure}[t]{0.5\textwidth}
\centering
\includegraphics[width=1\textwidth]{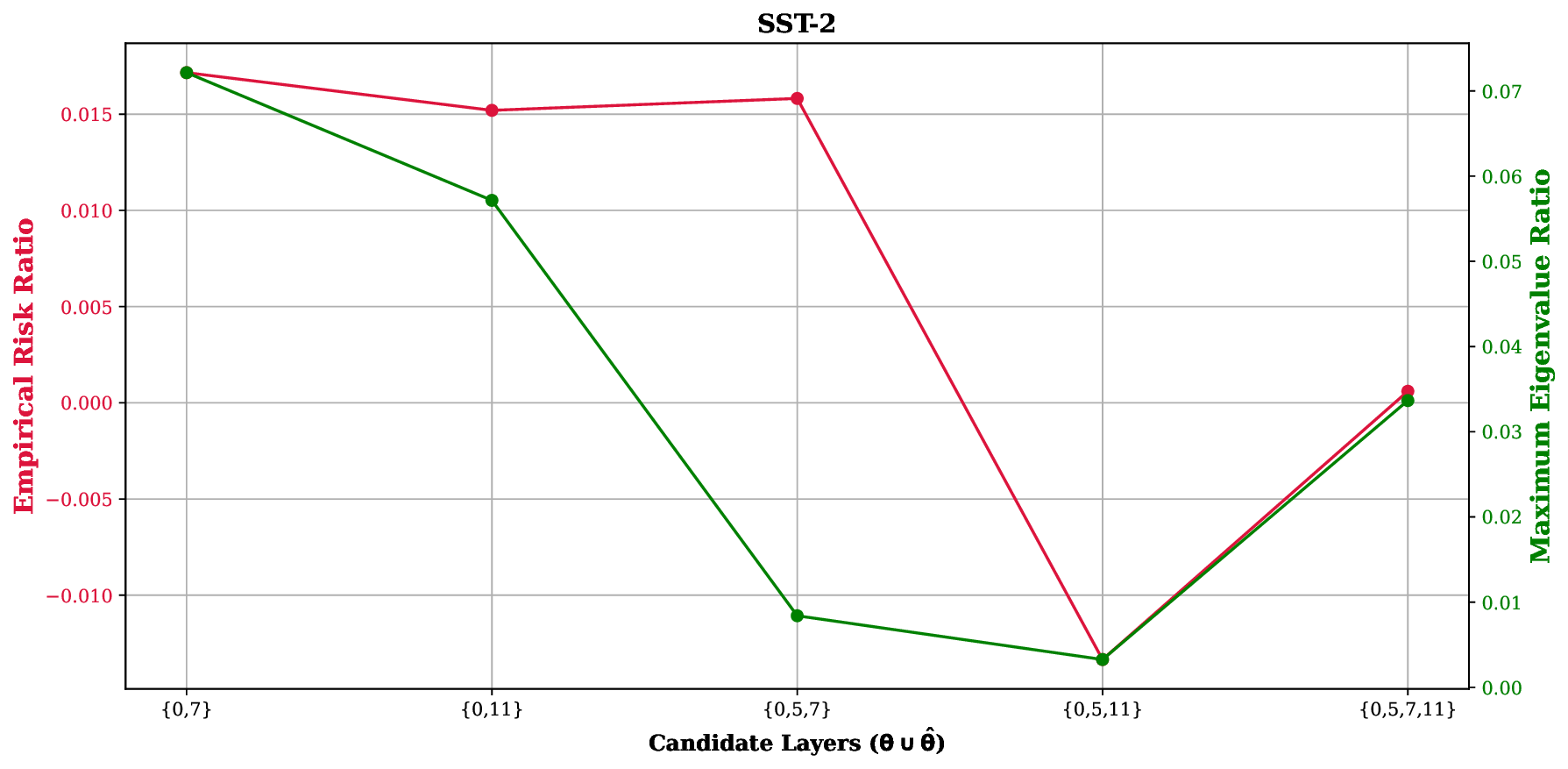}
\caption{SST-2}
\end{subfigure} 
\caption{\small Empirical risk ratio \( \log\left(\frac{\calR( \btheta \cup \hat{\btheta})}{\calR(\btheta)}\right) \) and maximum eigenvalue ratio \( \log \left(\frac{ \lambda_{\text{max}}(\mbK+{\mbS}+\sigma \mbI)}{\lambda_{\text{max}}(\mbK+\sigma \mbI)}\right) \) are used to evaluate the impact of candidate layers. Here, \( \btheta \) is fixed as the weights \( \{\mbW_k\} \) of layer $\{0\}$, while \( \hat{\btheta} \) represents the candidate layers. The horizontal axis represents the combination of layer $\{0\}$ and different candidate layers.}
\label{fig:sgd_th4}
\end{figure}

\newpage
\section{Discussion and Implication of Assumptions}
\label{sec:assumptions}
\begin{itemize}
\item \textbf{Stochastic vs. deterministic gradient descent:} Our theoretical derivations are based on deterministic gradient descent, while  the AdamW optimizer, which is stochastic, is used in our  experiments. Our theory is designed to make a few key aspects precise: (1) that it is possible to directly elicit NTK regime behavior through regularization and (2) that, once we operate in the NTK regime, the spectrum of the kernel determines training performance. Note that (2) does not rely on any particular optimization scheme. As for (1), while it is true that the noisy gradient aspect of stochastic approaches is not part of the theory, gradient descent/flow captures the local-search approach of gradient-based methods, including SGD and its variants. Applying selective regularization (Eq. (8)) is as straightforward as applying gradient clipping with SGD and Adam. The bounds of Theorems 3 and 4 transfer to stochastic variants through standard bounds that link their respective trajectories. (These typically need strong convexity assumptions on the loss functions and bounds on the variance of the stochastic gradient~\cite{bottou2018optimization}.
\item \textbf{Regularized vs. non-regularized fine-tuning} It is  crucial to emphasize that our linearity analysis only applies to the explicitly regularized variant of fine-tuning that we present. Even though we believe that this regularization may sometimes exist implicitly during fine-tuning, establishing bounds between non-regularized fine-tuning and linearized fine-tuning remains outside the scope of this paper and is an interesting avenue for future investigation.
\item \textbf{Cross-entropy vs. mean squared loss:}
Our theoretical results are derived for the squared loss, while our experiments succeed with cross-entropy. According to~\cite{bottou2018optimization}, we  provide  intuitions on why our theoretical  insights generalize well to different loss functions.
Note that optimizing cross-entropy is equivalent to optimizing KL-divergence, and KL-divergence and squared loss are intimately related, given that they're both Bregman divergences. For two outputs that are close to each other, i.e., in the high-accuracy regime, the KL-divergence behaves very similarly to the squared loss ~\cite{borade2008euclidean}. As such, while the theory doesn't apply directly to cross-entropy, the behavior that we demonstrate qualitatively supports the behavior that we experimentally observe with cross-entropy.

\item \textbf{Lipschitzness} The Lipschitz continuity assumption in Theorems \ref{thm::theta_bound}- \ref{thm::final} is not limiting. Almost every architecture and training methodology used in practice limits the complexity of the networks. Of particular relevance to our paper are the robust variants of low-rank adaptation, which explicitly enforce this kind of Lipschitz condition~\cite{savostianova2023robust}. In Appendix~\ref{Lipschitzness}, we  empirically estimate the Lipschitz constant for models in balls around the initial model. 

\end{itemize}

\section{Hyperparameters} \label{app:hyperparameters}
%***********************************************************
%***********************************************************

\begin{table}[h]
\centering
\scalebox{0.7}{
\setlength{\tabcolsep}{4pt}
\resizebox{\columnwidth}{!}{
\begin{tabular}{lcccc}
% \hline
\toprule
 Dataset & CoLA & SST-2 & Yelp & IMDb \\
% \hline
\midrule
 Optimizer &  & \hspace{0.35in}AdamW &  &  \\
 Warmup Ratio &  &\hspace{0.35in} 0.06 &  &  \\
 LR Schedule &  &\hspace{0.35in} Linear  &  &  \\
  Max Sequence Length &  & \hspace{0.35in}512   &  &  \\
LoRA Rank $r$ &  &  \hspace{0.35in}8 &  &  \\
 LoRA $\alpha$ &  & \hspace{0.35in}8  &  &  \\
  Number of Epochs & &\hspace{0.35in} 10 &  &  \\
% \hline
\midrule
 Batch Size &  32&  16& 32 & 16 \\

 Learning Rate & 4e-4 & 5e-4 & 4e-4 & 4e-4 \\
 % \hline
 \bottomrule
\end{tabular}
}} \vspace{0.1in}
\caption{Hyperparameters used for the RoBERTa-base model on various benchmarks, including GLUE (CoLA, SST-2), Yelp, and IMDb.}
\label{tab:supp1}
\end{table}

\end{document}